\theoremstyle{plain}
\newtheorem{theorem}{Theorem}[section]
\newtheorem{proposition}[theorem]{Proposition}
\newtheorem{lemma}[theorem]{Lemma}
\newtheorem{corollary}[theorem]{Corollary}
\theoremstyle{definition}
\newtheorem{definition}[theorem]{Definition}
\theoremstyle{remark}
\newtheorem{remark}[theorem]{Remark}
\newcommand{\R}{\mathbb R}
\newcommand{\bS}{\mathbb S}
\newcommand{\cL}{\mathcal L}
\newcommand{\cC}{\mathcal C}
\newcommand{\X}{\R^d}
\newcommand{\bOne}{\bm 1}
\newcommand{\eps}{\varepsilon}
\newcommand{\cN}{\mathcal N}
\newcommand{\spt}{\mathrm{spt}}
\newcommand{\ps}[1]{\langle #1 \rangle}
\newcommand{\cF}{\mathcal{F}}
\newcommand{\cG}{\mathcal{G}}
\newcommand{\cH}{\mathcal{H}}
\newcommand{\rmd}{\mathrm{d}}
\newcommand{\Id}{\mathop{\mathrm{Id}}\nolimits}
\newcommand{\cP}{\mathcal{P}}
\newcommand{\W}{\mathop{\mathrm{W}}\nolimits}
\newcommand{\SW}{\mathop{\mathrm{SW}}\nolimits}
\newcommand{\sgn}{\mathrm{sgn}}
\newcommand{\Prob}{\mathcal{P}}
\newcommand{\Rsp}{\mathbb{R}}
\newcommand{\Sph}{\mathbb{S}}
\newcommand{\Wass}{\mathrm{W}}
\newcommand{\sca}[2]{\langle#1|#2\rangle}
\newcommand{\setcond}{\;|\;}
\icmltitlerunning{Towards Understanding Gradient Dynamics of the Sliced-Wasserstein Distance}  
\begin{document}

\twocolumn[
\icmltitle{Towards Understanding Gradient Dynamics \\ of the Sliced-Wasserstein Distance via Critical Point Analysis}



\icmlsetsymbol{equal}{*}

\begin{icmlauthorlist}
\icmlauthor{Christophe Vauthier}{lmo}
\icmlauthor{Anna Korba}{ensae}
\icmlauthor{Quentin Mérigot}{lmo}
\end{icmlauthorlist}

\icmlaffiliation{lmo}{Laboratoire de Mathématiques d'Orsay, Université Paris-Saclay, Gif-sur-Yvette, France}
\icmlaffiliation{ensae}{Centre de recherche en économie et statistique, ENSAE, Palaiseau, France}

\icmlcorrespondingauthor{Christophe Vauthier}{christophe.vauthier@universite-paris-saclay.fr}
\icmlcorrespondingauthor{Anna Korba}{anna.korba@ensae.fr}
\icmlcorrespondingauthor{Quentin Mérigot}{quentin.merigot@universite-paris-saclay.fr}

\icmlkeywords{Machine Learning, ICML}

\vskip 0.3in]



\printAffiliationsAndNotice{}  

\begin{abstract}
In this paper, we investigate the properties of the Sliced Wasserstein Distance (SW) when employed as an objective functional. The SW metric has gained significant interest in the optimal transport and machine learning literature, due to its ability to capture intricate geometric properties of probability distributions while remaining  computationally tractable, making it a valuable tool for various applications, including generative modeling and domain adaptation. Our study aims to provide a rigorous analysis of the critical points arising from the optimization of the SW objective. By computing explicit perturbations, we establish that stable critical points of SW cannot concentrate on segments. This stability analysis is crucial for understanding the behaviour of optimization algorithms for models trained using the SW objective. Furthermore, we investigate the properties of the SW objective, shedding light on the existence and convergence behavior of critical points. We illustrate our theoretical results through numerical experiments.
\end{abstract}

\section{Introduction}

An important problem in statistical learning is to approximate an intractable target probability measure $\rho$ on $\R^d$ with a probability measure supported on a finite set of points. Such problems arise in various contexts, such as sampling from Bayesian posterior distributions \citep{blei2017variational,wibisono2018sampling}, generative modeling \citep{bond2021deep} and training neural networks \citep{chizat2018global, mei2018mean}. Recently, a popular framework to address such tasks has been to consider gradient flows, i.e., optimization dynamics on the space of probability measures, to minimize an objective functional of the form $\cF(\mu) := \mathcal{D}(\mu|\rho)$, where $\mathcal{D}$ is a discrepancy (e.g. a distance, or a divergence) between measures. In practice, these can be simulated by considering an initial distribution  that is a discrete measure uniformly supported on a set of particles. The particle positions then evolve according to a system of ODEs $\dot{X} = -\nabla F (X)$, which corresponds to the gradient flow of a functional $F: (\R^d)^N \to \R$, where $d$ is the dimension of the space and $N$ the number of particles. Then, a practical scheme is derived by discretizing in time this flow, e.g. with gradient descent. Reversely, gradient descent on particles can be seen as a discretized flow described by this system of ODEs.

Many divergences or distances can be considered as the discrepancy $\mathcal{D}$, each offering different tradeoffs between attractive geometrical properties and computational burden of the associated training dynamics. Generally the objective function is chosen so that the dynamic is tractable given the available information on $\rho$.  When the density of $\rho$ is known up to a normalization constant, as often the case in Bayesian inference, standard choices include the Kullback-Leibler divergence~\citep{salim2020wasserstein}, Kernel Stein Discrepancy~\citep{fisher2021measure,korba2021kernel} or (eventually weighted) Fisher Divergences~\citep{cai2024eigenvi,cai2024batch}. On the other hand, when samples of the target distribution are available, Integral Probability Metrics (IPM) or Optimal Transport distances are preferred, since they are well-defined for discrete measures. For instance in generative modeling, while original Generative Adversarial Networks are known to optimize a Jensen-Shannon divergence to the distribution of the samples~\citep{goodfellow2020generative} and can be understood via the perspective of Wasserstein flows~\citep{yi2023monoflow}, a wide range of these metrics have been used for the training of GAN variants, e.g. Wasserstein-1~\citep{arjovsky2017wasserstein}, Sinkhorn divergences~\citep{genevay2018learning}, Maximum Mean Discrepancies~\citep{li2017mmd} or novel metrics interpolating between IPM and f-divergences~\citep{birrell2022f}. Alternatively, recent work directly tackled generative modeling tasks through simulating Wasserstein gradient flows of such discrepancies, e.g. Sliced-Wasserstein distances~\citep{liutkus2019sliced,dai2020sliced,du2023nonparametric}, Energy distances~\citep{hertrich2024generative}, or f-divergences~\citep{fan2022,choi2024scalable}. For all these methods, the choice of the discrepancy objective is crucial for their empirical success\footnote{Note though that GANs use a parametric setting, that is, we optimize $\theta \to \mathcal{D}(\mu_\theta,\rho)$ where $\theta$ is a parameter vector for a neural network.}. 

For instance, Wasserstein distances themselves appear to be suitable objectives, in the sense that they preserve the geometry of probability distributions, e.g. when computing barycenters \citep{rabin2012wasserstein}. However, for discrete measures, such distances are known to suffer from a large computational cost and poor statistical efficiency~\citep{peyre2019computational}. To alleviate this issue,  several alternatives to the Wasserstein distance were proposed. Among these, the Sliced-Wasserstein distance (SW) \citep{bonneel2015sliced} is a computationally attractive proxy. It involves averages of Wasserstein distances in dimension 1 (each of which can be computed in closed-form) with respect to an infinite number of directions. It has gained popularity in machine learning applications, such as computing barycenters of distributions \citep{bonneel2015sliced}, variational inference \citep{yi2023sliced} or recently generative modeling \citep{kolouri2018sliced,liutkus2019sliced,dai2020sliced,du2023nonparametric}. While its statistical and computational properties have been studied extensively in the literature \citep{nadjahi2020,manole2022minimax,nietert2022statistical}, the behavior  of its optimization dynamics remain largely unknown. In this paper, we consider the objective functional $\cF$ to be a SW distance to a fixed measure $\rho$. We consider a gradient descent scheme on particles, as well as its continuous time and space counterpart, as an optimization scheme pushing particles from a source $\mu$ to approximate the target $\rho$. As this latter optimization problem is non-convex, it is natural to study the critical points that may be encountered during minimization. Our main objective is not only to understand the discretized problem, but also its continuous time and space analog, which motivates us to propose a notion of critical point for the continuous functional $\cF$ that is compatible with the critical points for the discretized problem. 

We note at this point that there exists many natural notions of critical points for a functional $\cG$ defined on the space of probability measures over $\R^d$. A measure $\mu$ is called a critical point of $\cG$ if for any curve $(\mu_t)_{t\in [0,1]}$ in the space of measures such that $\mu_0 = \mu$ belonging to a certain family of allowed perturbations, one has
\begin{equation} \label{eq:ags_possible_critical_point2}
    \left.\frac{d}{d t} \cG(\mu_t)\right|_{t=0^+} = 0.
\end{equation}
Our aim at this point is not to discuss the differentiability assumptions on $\cG$, and we will therefore remain at an informal level. Depending on the set of allowed perturbations, we will recover several  distinct and arguably interesting notions of critical points:
\begin{itemize}
    \item We will call $\mu$ an \emph{Eulerian critical point} if it satisfies \eqref{eq:ags_possible_critical_point2} for all perturbations of $\mu$ of the form $\mu_t = (1-t)\mu + t\nu$ for $\nu\in\cP_2(\R^d)$. This coincides with the standard notion of critical point on the ``flat" space $\cP_2(\R^d)$ (i.e., not equipped with $\W_2$). 
    \item We will call $\mu$ a \emph{Wasserstein critical point} if it satisfies \eqref{eq:ags_possible_critical_point2} for all $\W_2$-geodesics emanating from $\mu$. If $\mu$ is a probability density, we know from Brenier's theorem that geodesics are all
    curves of the form $\mu_t = ((1-t) \Id + tT)_{\#} \mu$ with $T$ the gradient of a convex function. 
    \item Finally, we will call $\mu$ a \emph{Lagrangian critical point} if it satisfies \eqref{eq:ags_possible_critical_point2} for all curves of the form $\mu_t = (\Id + t\xi)_\# \mu$ for any vector field $\xi\in L^2(\mu,\R^d)$\footnote{$L^2(\mu,\R^d)=\left\{f:\R^d\to \R^d, \int \|f(x)\|^2 d\mu(x)<\infty\right\}.$}.
\end{itemize}
We now discuss the case where $\cG = \cG_\rho := \frac{1}{2}\W_2^2(\cdot,\rho)$ is the squared Wasserstein distance to a probability density $\rho$ to fix ideas. First, we note that the only Eulerian critical point of this functional is $\rho$, a non-obvious fact, which  follows from strong convexity of this $\cG_\rho$ \citep[Proposition 7.19]{santambrogio2015optimal}. However, such critical points are not meaningful when considering continuous time limits of gradient descent schemes (the ODE dynamics obeyed by the particles), as we do in this paper. Second, if $\mu \neq \rho$ and if $(\mu_t)_{t\in[0,1]}$ is the $\W_2$-geodesic between $\mu$ and $\rho$, one can verify that $\cG_\rho(\mu_t) \leq \cG_\rho(\mu) - c t$ for some $c>0$, thus implying that $\mu$ is not critical. Therefore, the only Wasserstein critical point of $\cG_\rho$ is, again, $\mu = \rho$. In this case, every Wasserstein critical point is therefore also a Lagrangian critical point. The converse holds when $\mu$ is absolutely continuous, because one can take $\xi = T-\Id$, but not in general. As explained in \cite{Mrigot2021NonasymptoticCB} and studied in detail in \citep[Chapter 4]{sarrazin:tel-03585897}, the functional $\cG_\rho$ admits \emph{many} Lagrangian critical points. First and foremost, any local or global minimizer of $X = (x_1,\hdots,x_N) \in (\R^d)^N \mapsto \cG_\rho(\frac{1}{N}\sum_i\delta_{x_i})$ induces a Lagrangian critical point $\mu_X = \frac{1}{N}\sum_i \delta_{x_i}$ (showing the practical relevance of this notion), but moreover any $\W_2$-limit of Lagrangian critical points are Lagrangian critical (with the caveat that the definition of Lagrangian critical points in \citep{sarrazin:tel-03585897} is restricted to compactly supported measures and continuous perturbations $\xi$). This notion of critical point translates a difficulty that comes from the discretization, but that persists in the continuous limit.

\paragraph{Contributions and outline.} 

Regarding the theoretical guarantees of optimization schemes applied to SW, a natural question is
the following: given a sequence of discrete measures $(\mu_N)$ supported on $N$ atoms, and constructed using a first-order algorithm applied on a SW objective, can we expect this sequence to converge to the target measure $\rho$ as $N\to \infty$? This question is difficult because of the non-convexity of the discretized SW objective. However, we could hope that the non-convexity becomes milder as $N\to+\infty$, in the spirit of \cite{chizat2018global,Mrigot2021NonasymptoticCB}. 

Our paper is a first step towards answering this question and is organized as follows. In \Cref{sec:background}, we introduce the necessary background on optimal transport and Sliced-Wassertein distances. In \Cref{section:sw_discrete}, we discuss properties of gradient descent of the functional $\cF$ over discrete measures and of its critical points, showing in particular that trajectories of  gradient descent avoid the non-differentiability locus of $F$. In \Cref{sec:general_properties}, we give an explicit characterization of Lagrangian critical points of the SW objective $\cF = \frac{1}{2} \SW_2^2(\cdot,\rho)$, and we prove that our notion of critical points passes to weak limits under mild assumptions. This implies  that the limit of discrete critical points (e.g., obtained numerically), is a Lagrangian critical point. In \Cref{sec:lower_dim_crit_points} we construct explicit examples of Lagrangian critical points of $\cF$ supported on lower-dimensional subsets of $\R^d$. This shows in particular that there exists ``bad" Lagrangian critical points points of the SW objective which are distinct from the target $\rho$. A natural question is then whether these ``bad" Lagrangian critical points can actually occur as the limit of discrete measures obtained by an optimization algorithm. Since we expect that gradient descent will converge to stable critical points \cite{lee2019firstOrderMethods}, it is tempting to rule out these bad critical points by showing that they are unstable. We establish in \Cref{sec:lower_dim_crit_points} in dimension $d = 2$ that any Lagrangian critical point that contains a segment must be unstable. Since our proof relies on delicate explicit computations, the extension to lower dimensional critical points in higher dimension is left as future work. Finally \Cref{sec:experiments} presents illustrations of our theoretical results on numerical experiments. 

\section{Background}\label{sec:background}

\paragraph{Measures and optimal transport} 
We first give some background on optimal transport distances. We denote $\cP(\R^d)$ the set of probability measures on $\R^d$ and $\cP_p(\Rsp^d)$ the set of probability measures with finite $p$th moment ($p\geq 1$). The $d$-dimensional Lebesgue and $k$-dimensional Hausdorff measures are denoted respectively by $\cL^d$ and $\cH^k$. In our setting, a probability density $\rho$ on $\R^d$ is a probability measure which is absolutely continuous with respect to the Lebesgue measure; for simplicity we will often use the same notation for $\rho$ and its density. Given a measurable map $T$ from $\Rsp^d$ to itself and $\mu\in \cP(\X)$, $T_{\#}\mu$ denotes the pushforward measure of $\mu$ by $T$.
The Wasserstein distance of order $p$ between any probability measures $\mu, \nu$ in $\cP_p(\R^d)$ is defined as %
\begin{equation}
    \W_p^p(\mu, \nu) = \inf_{\pi \in \Pi(\mu, \nu)} \int_{\R^d \times \R^d} \| x - y \|^p \rmd\pi(x,y),
\end{equation}
where $\|\cdot\|$ denotes the Euclidean norm, and $\Pi(\mu, \nu)$ is the set of probability measures on $\R^d \times \R^d$ with marginals $\mu$ and $\nu$.

\paragraph{1D optimal transport} Consider probability measures $\mu, \nu \in \cP_p(\R)$, and let $F_\mu^{-1}$ and $F_\nu^{-1}$ be their quantile functions, i.e. $F_\mu^{-1}(t) = \inf \{ s \in \R\mid F_\mu(s)\geq t\}$ where $F_\mu$ is the cumulative distribution function (cdf).
By \citep[Theorem 3.1.2.(a)]{rachev1998mass}, the 1D Wasserstein distance is the $L^p$ distance between the quantile functions, 
\begin{equation} \label{eq:wass_1d}
    \W_p^p(\mu, \nu) = \int_{0}^1 |F_\mu^{-1}(t) - F_\nu^{-1}(t)|^p \rmd t  .
\end{equation}
If $X=(x_1,\hdots,x_N) \subseteq \Rsp^N$ is a finite set in $\R$,  $\mu_X = \frac1N \sum_i \delta_{x_i}$ is the associated empirical measure, and  $\sigma_X$ is a permutation such that $i \mapsto x_{\sigma_X(i)}$ is non-decreasing (similarly, we define $Y,\mu_Y,\sigma_Y$), Equation \eqref{eq:wass_1d} becomes more explicit:
\begin{equation} \label{eq:wass_1d_disc}
\W_p^p(\mu_X, \mu_Y) = \frac{1}{N} \sum_{i=1}^N | x_{\sigma_X(i)} - y_{\sigma_Y(i)} |^p,
\end{equation}
showing the complexity of  1D optimal transport is the same as sorting, i.e. $O(N\log N)$. 
However, in dimension higher than one, there is no explicit expression for $\W_p^p(\mu, \nu)$ and despite the progress made in the last decade, the computational cost remains superlinear in the number of atoms \citep{peyre2019computational}. 

\paragraph{Sliced-Wasserstein distance} The Sliced-Wass\-erstein (SW) distance \citep{rabin2012wasserstein} defines an alternative metric by leveraging the computational efficiency of $\W_p^p$ for univariate  distributions.  For $\theta \in \bS^d$, $P_{\theta} : \R^d \to \R$ denotes the linear form $x \mapsto \sca{\theta}{x}$. Then, the SW distance of order $p$ between $\mu, \nu \in \cP_p(\R^d)$ is
\begin{equation} \label{eq:def_sw}
  \SW_p^{p}(\mu, \nu) = \int_{\bS^{d-1}} \W_p^p(P_{\theta\#} \mu, P_{\theta\#} \nu) d\theta,
\end{equation}
where $\bS^{d-1}$ is the $(d -1)$-dimensional unit sphere and $d\theta$ is the uniform distribution on $\bS^{d-1}$. Since $P_{\theta\sharp} \mu$, $P_{\theta\sharp}\nu$ are univariate distributions, the Wasserstein distances in \eqref{eq:def_sw} are conveniently computed using \eqref{eq:wass_1d}. The sliced-Wasserstein distance $\SW_p$ is always smaller than the original Wasserstein distance \citep[Proposition 5.1.3]{bonnotte2013unidimensional}, and is even bi-Hölder equivalent to this distance on the subset $\Prob(B(0,R)) \subseteq \Prob_p(\R^d)$. The computational and statistical aspects of sliced-Wasserstein distances are by now well studied, we refer to \citep{nadjahi2020} and references therein.


\section{Discrete Sliced-Wasserstein distance dynamics} \label{section:sw_discrete}
Before investigating the convergence of the gradient flow of the Sliced-Wasserstein distance to its critical points and the characterization of the latter, we first study in this section the optimization of the Sliced-Wasserstein distance in practice, where the optimized (source) measure is discrete. Our first subsection studies the differentiability properties of the Sliced-Wasserstein objective when the first argument is a discrete measure, while the second provides a descent lemma for this objective. Finally, we show quantitatively that for a suitable stepsize, gradient descent does not collapse particles and is thus well-defined for all times. 

\paragraph{Differentiability of the SW functional.} 
 We consider a target probability density $\rho \in \cP_p(\R^d)$, and we define the function
\begin{equation}\label{eq:FN}
 F : X=(X_1,...,X_N) \in (\R^d)^N \mapsto \frac{1}{p} \SW_p^p(\mu_X, \rho),
\end{equation}
where $\mu_X = \frac{1}{N} \sum_{i=1}^N \delta_{X_i}$ is the uniform empirical measure associated to the set of points $X$. As $\rho$ has finite $p$-moment, $F(X) < +\infty$ for every point cloud $X$. As seen in \Cref{sec:background}, the SW distance involves sorting the projections of $X$ over directions. However, the sorting operation, seen as a function of $\R^N$ to $\R^N$, is piecewise linear and non-differentiable when two of the coordinates agree. We may therefore expect our functional $F$ to be non-differentiable at any point cloud $X$ which belongs to the  generalized diagonal $\Delta_N := \{(X_1,...,X_N) \in (\R^d)^N \mid \exists i \neq j, X_i = X_j \}$. The next proposition shows differentiability of $F$ on the complement of this generalized diagonal. 

As usual, we denote $\mathfrak{S}_N$ the group of permutations of $\{1,...,N\}$. We will use the notation 
$V_{\theta,i}$ for the $i$-th Power cell associated to $P_{\theta\#}\rho$, i.e. 

\begin{equation}
    V_{\theta,i} = F_{P_{\theta\#}\rho}^{-1}\left(\left[\frac{i}{N},\frac{i+1}{N}\right]\right).
\end{equation}
Moreover, given a point cloud $X = (X_1,\hdots,X_N) \in (\R^d)^N$, we denote $\sigma_{X,\theta} \in \mathfrak{S}_N$ a permutation such that the map $i \in \{1,\hdots,N\} \mapsto \sca{X_{\sigma_{X,\theta}(i)}}{\theta}$ is non-decreasing.
\begin{proposition} 
    \label{prop:discrete_gradient}
    If $p \geq 2$, then $F$ is differentiable at any point cloud $X = (X_1,\hdots,X_N) \in (\R^d)^N$ which does not belong to the generalized diagonal $\Delta_N$. The gradient of $F$ is continuous on $(\R^d)^N \setminus \Delta_N$, and its component with respect to the $i$-th vector $X_i$ is then 
    \begin{align}
        \nabla_{X_i} F(X) &= \int_{\bS^{d-1}} \int_{V_{\theta,\sigma_{X,\theta}^{-1}(i)}} \sgn(\sca{X_i}{\theta} - x) \notag \\
        & \quad \times |\sca{X_i}{\theta} - x|^{p-1}\theta dP_{\theta\#}\rho(x) d\theta,
    \end{align}
    In the particular case where $p = 2$, this expression can be further simplified by introducing the barycenters of the Power cells  $V_{\theta,i}$, i.e. $b_{\theta,i} = N\int_{V_{\theta,i}} xdP_{\theta\#}\rho(x)$:
    \begin{equation}
        \nabla_{X_i} F(X) = \frac{1}{N} \left(\frac{1}{d} X_i - \int_{\bS^{d-1}} b_{\theta,\sigma_{X,\theta}^{-1}(i)}\theta d\theta\right).\label{eq:sw2_critical_point}
    \end{equation}
\end{proposition}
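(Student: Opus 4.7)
The plan is to write $F(X) = \frac{1}{p}\int_{\bS^{d-1}} f_\theta(X)\,d\theta$ with $f_\theta(X) := \W_p^p(P_{\theta\#}\mu_X, P_{\theta\#}\rho)$, and to differentiate under the integral. Using the quantile formula \eqref{eq:wass_1d}, the matching between particles and Power cells yields the decomposition
\begin{equation*}
f_\theta(X) = \sum_{i=1}^N \int_{V_{\theta,\sigma_{X,\theta}^{-1}(i)}} |\sca{X_i}{\theta} - x|^p\, dP_{\theta\#}\rho(x).
\end{equation*}
Fix $X \notin \Delta_N$. The bad set $\Theta_X := \bigcup_{i\neq j}\{\theta\in\bS^{d-1} : \sca{X_i-X_j}{\theta} = 0\}$ is a finite union of great subspheres, hence $d\theta$-negligible. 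For every $\theta\notin\Theta_X$, the scalars $\sca{X_i}{\theta}$ are pairwise distinct, and by continuity the permutation $\sigma_{X',\theta}$ (and therefore every cell $V_{\theta,\sigma^{-1}(i)}$) is locally constant in $X'$. On such a neighborhood $f_\theta$ is smooth in each $X_i$ and a direct differentiation inside the integral gives its partial gradient as $p$ times the integrand of the claimed formula for $\nabla_{X_i} F$.

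\textbf{Passage to the integral; main obstacle.} To get differentiability of $F$, I would apply dominated convergence to the difference quotients $(f_\theta(X+hv) - f_\theta(X))/h$. Pointwise convergence to $\nabla f_\theta(X)\cdot v$ holds for a.e.\ $\theta$ by the previous step. The dominant comes from a uniform-in-$\theta$ local Lipschitz bound: combine the elementary inequality $|a^p - b^p|\leq p(|a|+|b|)^{p-1}|a-b|$ with the triangle estimate $|\W_p(P_{\theta\#}\mu_X,P_{\theta\#}\rho) - \W_p(P_{\theta\#}\mu_{X'},P_{\theta\#}\rho)| \leq \W_p(P_{\theta\#}\mu_X,P_{\theta\#}\mu_{X'}) \leq \max_i\|X_i - X_i'\|$ (via the diagonal coupling) and with the uniform a priori bound $\W_p(P_{\theta\#}\mu_X,P_{\theta\#}\rho) \leq \max_i\|X_i\| + \bigl(\int\|y\|^p d\rho(y)\bigr)^{1/p}$, finite since $\rho\in\cP_p(\R^d)$. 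This yields $|f_\theta(X)-f_\theta(X')| \leq K\|X-X'\|$ for $X'$ in a bounded neighborhood of $X$, with $K$ independent of $\theta$. This is the main technical obstacle: the bad set $\Theta_{X'}$ varies with $X'$, ruling out any uniform-on-a-neighborhood smoothness of $f_\theta$; only uniform Lipschitz continuity survives, but it suffices. Fréchet differentiability of $F$ with the claimed gradient follows, either by repeating the dominated-convergence argument for every direction $v$ or, equivalently, by noting that the candidate gradient is continuous on the complement of $\Delta_N$ (again by dominated convergence) and invoking the standard criterion.

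\textbf{Specialization to $p=2$.} Here $\sgn(a)|a|^{p-1}=a$, so the inner integrand splits into a mass term $\theta\sca{X_i}{\theta}$ and a moment term $-\theta\,x$. Since $P_{\theta\#}\rho(V_{\theta,k}) = 1/N$ by definition of the cells and $\int_{\bS^{d-1}}\theta\theta^\top d\theta = \Id/d$ by rotational symmetry, the first piece integrates over $\bS^{d-1}$ to $X_i/(Nd)$; the definition $b_{\theta,k} = N\int_{V_{\theta,k}} x\,dP_{\theta\#}\rho(x)$ turns the second piece into $-\frac{1}{N}\int_{\bS^{d-1}} b_{\theta,\sigma^{-1}(i)}\theta\,d\theta$. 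Summing these two contributions produces \eqref{eq:sw2_critical_point}.
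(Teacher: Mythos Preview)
Your argument is correct and a bit different from the paper's. The paper proves Fr\'echet differentiability directly: it fixes $\epsilon>0$, splits the sphere into the thin set $\Theta_\epsilon=\{\theta:\exists i\neq j,\ |\sca{X_i-X_j}{\theta}|\le\epsilon\}$ (of measure $O(\epsilon)$) and its complement, and estimates $F(X+H)-F(X)-\langle\tilde\nabla F(X),H\rangle$ as a sum $A(H)+B(H)+C(H)$. On $\Theta_\epsilon^c$ the sort is stable and a second-order Taylor bound (their Lemma~A.1) gives $A(H)=O(\|H\|^2)$; on $\Theta_\epsilon$ only Lipschitz estimates are used, yielding $B(H),C(H)=O(\epsilon\|H\|)$. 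Letting $\epsilon\downarrow 0$ proves the remainder is $o(\|H\|)$. Your route is softer: a.e.-$\theta$ smoothness of $f_\theta$ plus a uniform-in-$\theta$ Lipschitz dominant let dominated convergence produce all partial derivatives, and a second application of DCT shows continuity of the candidate gradient, so the classical ``continuous partials $\Rightarrow$ Fr\'echet'' criterion finishes. The paper's decomposition is more quantitative (one can read off an explicit modulus of differentiability), while yours is shorter and avoids the explicit $\epsilon$-splitting and the auxiliary remainder lemma.

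One small imprecision: your first alternative, ``repeating the dominated-convergence argument for every direction $v$'', only yields G\^ateaux differentiability (with a linear G\^ateaux derivative), which is strictly weaker than the Fr\'echet differentiability asserted in the proposition. It is your second alternative --- continuity of the candidate gradient on $(\R^d)^N\setminus\Delta_N$ together with the standard $C^1$ criterion --- that actually closes the argument; I would drop the ``either\dots or'' and keep only that route. The $p=2$ specialization is handled correctly and identically to the paper.
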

The proof of \Cref{prop:discrete_gradient} is deferred to  \Cref{sec:proof_of_discrete_gradient}. This proposition is valid in the semi-discrete setting, where the source measure is finitely supported and $\rho$ has a density, while similar results in the literature tackle different settings, e.g. fully-discrete~\citep{tanguy2024discrete_sw_losses} or where both measures are densities~\citep{manole2022minimax}. 

\paragraph{Descent lemma.}
While our previous result provides a general formula for gradients of SW distances of order $p\ge2 $, we  focus on the particular case $p = 2$ where the computations are the most simple. We then have the following "descent lemma", which gives guarantees that a gradient step decreases the loss, for the gradient descent on $F$, 

\begin{proposition} \label{prop:descent_lemma}
    For every $X \in (\R^d)^N \setminus \Delta_N$ and every $\lambda > 0$, denoting $Y := X - \lambda \nabla F(X)$, we have 
    \begin{equation} \label{eq:dsct_lma1}
        F(Y) - F(X) \leq -\lambda \left(1 - \frac{\lambda}{2Nd}\right) \|\nabla F(X)\|^2
    \end{equation}
\end{proposition}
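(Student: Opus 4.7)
The plan is to exploit an envelope/suboptimality trick that turns the non-smooth functional $F$ into something that behaves like an $L$-smooth function along the line from $X$ to $Y$, with effective constant $L = 1/(Nd)$. Concretely, I would write
\begin{equation*}
F(X) = \tfrac{1}{2}\int_{\bS^{d-1}} \W_2^2(P_{\theta\#}\mu_X, P_{\theta\#}\rho)\, d\theta
\end{equation*}
and treat each slice separately. For a fixed $\theta$, the semi-discrete optimal transport from $P_{\theta\#}\rho$ to $P_{\theta\#}\mu_X$ partitions $\R$ into the cells $V_{\theta,i}$ of mass $1/N$ and sends $V_{\theta,i}$ onto the $i$-th sorted atom $X_{\sigma_{X,\theta}(i)}$. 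Reusing this \emph{same} partition for $\mu_Y$ yields a valid transport plan, hence the inequality
\begin{equation*}
\tfrac{1}{2}\W_2^2(P_{\theta\#}\mu_Y, P_{\theta\#}\rho) \leq \tfrac{1}{2}\sum_k \int_{V_{\theta,\sigma_{X,\theta}^{-1}(k)}} (\sca{Y_k}{\theta}-x)^2\, dP_{\theta\#}\rho(x),
\end{equation*}
with equality when $Y=X$. This is the key step, and it conveniently bypasses the non-differentiability of $F$ on $\Delta_N$: we never differentiate, we only compare two quadratics.

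Next, I would expand $(\sca{Y_k}{\theta}-x)^2 - (\sca{X_k}{\theta}-x)^2 = 2\sca{Y_k-X_k}{\theta}(\sca{X_k}{\theta}-x) + \sca{Y_k-X_k}{\theta}^2$ and integrate over $V_{\theta,\sigma_{X,\theta}^{-1}(k)}$. Using $P_{\theta\#}\rho(V_{\theta,j}) = 1/N$ and the barycenter definition $b_{\theta,j} = N\int_{V_{\theta,j}} x\, dP_{\theta\#}\rho(x)$, the slicewise bound becomes
\begin{equation*}
f_\theta(Y) - f_\theta(X) \leq \tfrac{1}{N}\sum_k \sca{Y_k-X_k}{\theta}\bigl(\sca{X_k}{\theta}-b_{\theta,\sigma_{X,\theta}^{-1}(k)}\bigr) + \tfrac{1}{2N}\sum_k \sca{Y_k-X_k}{\theta}^2.
\end{equation*}

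Integrating over $\bS^{d-1}$ with the standard identity $\int_{\bS^{d-1}} \theta\theta^\top d\theta = \tfrac{1}{d}I_d$, the linear term collapses via \eqref{eq:sw2_critical_point} to $\sum_k \sca{Y_k - X_k}{\nabla_{X_k} F(X)} = \sca{Y-X}{\nabla F(X)}$, and the quadratic term becomes $\tfrac{1}{2Nd}\|Y-X\|^2$. Substituting $Y - X = -\lambda \nabla F(X)$ yields exactly
\begin{equation*}
F(Y) - F(X) \leq -\lambda \|\nabla F(X)\|^2 + \tfrac{\lambda^2}{2Nd}\|\nabla F(X)\|^2,
\end{equation*}
which is the claim.

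The main obstacle, conceptually, is justifying that we are allowed to use the $X$-partition as an upper bound on the cost of transporting $\mu_Y$ for \emph{every} $\theta$ simultaneously; this requires $\rho$ to have no atoms so that the cells $V_{\theta,i}$ are well-defined sets of mass $1/N$ and the plan built from them is admissible. Once that is in place the rest is a direct computation; in particular one never needs $F$ to be globally smooth or even differentiable along the segment $[X,Y]$, which is crucial since this segment may cross $\Delta_N$.
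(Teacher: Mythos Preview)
Your proposal is correct and follows essentially the same approach as the paper: both reuse the optimal $X$-partition as a suboptimal transport plan for $\mu_Y$ slice by slice, expand the quadratic, integrate over $\bS^{d-1}$ using $\int\theta\theta^\top d\theta=\tfrac{1}{d}I_d$ to recognize $\nabla F(X)$ via \eqref{eq:sw2_critical_point}, and then substitute $Y-X=-\lambda\nabla F(X)$. The paper phrases the intermediate step as a standalone $\tfrac{1}{Nd}$-smoothness inequality $F(Y)\le F(X)+\sca{\nabla F(X)}{Y-X}+\tfrac{1}{2Nd}\|Y-X\|^2$, but the argument and its justification are the same as yours.
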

The proof of \Cref{prop:descent_lemma} is provided in \Cref{sec:proof_descent_lemma} and relies on the semiconcavity of $F$. This proposition implies that if $X$ is not a critical point of $F$ and if the step-size $\lambda$ belongs to $(0,2Nd)$, one gradient descent step from $X$ strictly decreases the value of $F$. In particular, the  r.h.s. of the inequality \eqref{eq:dsct_lma1} is minimal for a step-size $\lambda = Nd$, and we may expect the convergence speed of the gradient descent to be the fastest for step-sizes around this value. 
Considering the expression of $\nabla F(X)$ given by \eqref{eq:sw2_critical_point}, one iteration of the gradient descent with such a step writes:
\begin{equation}\label{eq:gd_step_sw}
X_i^{k+1} \leftarrow X_i^k - Nd \nabla_i F(X^k) = d\int_{\bS^{d-1}} b_{\theta,\sigma_{X^k,\theta}^{-1}(i)}\theta d\theta. 
\end{equation}
Interestingly, choosing a step of $Nd$ for the $\SW^2_2$ objective is reminiscent of the results obtained by \citep{Mrigot2021NonasymptoticCB}. They study a  variant of Lloyd's algorithm, which optimizes $X \mapsto \Wass^2_2(\mu_X,\rho)$ by assigning to $X^{k+1}$ the barycenters of the Power cells  (also referred to as Laguerre cells) associated to $X^k$, and which was proven, under certain conditions, to approximate $\rho$ closely after a single step (see Theorem 3 and Corollary 4 in \citep{Mrigot2021NonasymptoticCB}).

Another consequence of \Cref{prop:descent_lemma} is that the sum of squared gradients of $F$ at $X^k$ is bounded. Indeed, for $\lambda = Nd$, we have 
\begin{equation}
    \|\nabla F(X^k) \|^2 \leq \frac{2}{Nd} (F(X^k) - F(X^{k+1})),
\end{equation}
which implies that any converging subsequence of $(X^k)$ converges to a critical point $X^*$ of the energy. The convergence of the whole sequence $(X^k)$ to a critical point is open in general. It can be proven if one assumes that that the energy level $F^{-1}(F(X^*))$ only contains a finite number of critical points, as in \citep[Appendix]{bourne2020laguerre}, but this hypothesis cannot be checked in practice. \cite{portales2024} proves  convergence of the whole sequence of iterates of  Lloyd-type algorithms in several settings, but they acknowledge that their techniques do not extend to the case of  $\cF = \frac{1}{2} \SW^2_2(\cdot,\rho)$ when $\rho$ is a probability density.

\paragraph{Well-behavedness of gradient descent} In the gradient descent scheme described above, it is a priori possible that the iterates will get close to the generalized diagonal $\Delta_N$. This is a problem, as $F$ is only known to be differentiable on $(\R^d)^N \setminus \Delta_N$. The following property ensures that, if the densities of the projections of $\rho$ are bounded, the iterates will remain  away from $\Delta_N$.

\begin{proposition} \label{prop:descent_well_behaved}
    Let $X \in (\R^d)^N \setminus \Delta_N$ and $\lambda > 0$, and define $Y := X - \lambda \nabla F(X)$. Then, if $\lambda \in (0,Nd)$, we have $Y \notin \Delta_N$. \newline
    Furthermore, if there exists $\beta > 0$ which bounds from above the density of $P_{\theta\#}\rho$ for every $\theta \in \bS^{d-1}$, then there exists $C = C(d)$ such that for every $i \neq j$, if $\|X_i - X_j\| < \frac{dC}{N\beta}$, then $\|Y_i - Y_j\| > \|X_i - X_j\|$. In particular, if $X$ is a critical point of $F$, then
    \begin{equation}
        \min_{i\neq j} \|X_i - X_j\| \geq \frac{dC}{N\beta}
    \end{equation}
\end{proposition}

The proof of Proposition \ref{prop:descent_well_behaved} is provided in \Cref{sec:proof_descent_well_behaved}. Interestingly, the proof strategy we use also implies that the continuous flow $\dot{X} = - \nabla F(X)$ is defined for all times when initialized from a point cloud $X(0)$ not in $\Delta_N$, as discussed in the same appendix.
 
 \begin{remark} \label{rk:extensions_sw_discrete}
    Note that \Cref{prop:discrete_gradient}, \Cref{prop:descent_lemma}, and the first part of \Cref{prop:descent_well_behaved} actually admit straightforward extensions, with the same statements, to the case where $\rho$ is only assumed to have no atoms (note that this includes for instance densities supported on a lower dimensional manifold of $\R^d$ (which are not absolutely continuous w.r.t. Lebesgue in $\R^d$ but are without atoms). Indeed, it turns out that for such $\rho$, for almost every $\theta \in \bS^{d-1}$, its projection $P_{\theta\#}\rho$ has no atoms, and we can thus define the Power cells $V_{\theta,i}$ and the barycenters $b_{\theta,i}$, which requires minimal changes in the proof. For further discussion, we refer to \Cref{sec:extensions_results_sw_discrete}, which also examines extensions of these results to more general target measures $\rho \in \cP_2(\R^d)$.
\end{remark}

\section{Characterization of critical points}\label{sec:general_properties}

The goal of this this section is to derive a rigorous characterization of Lagrangian critical points of the SW objective $\cF = \frac{1}{2}\SW_2^2(\cdot,\rho)$. Unlike in the previous section, where we worked in the semi-discrete setting (i.e. with $\mu$ discrete and $\rho$ a density), our framework will hold for general $\mu, \rho \in \cP_2(\R^d)$.

\subsection{Barycentric characterization}
As  in the introduction, we first define Lagrangian critical points using derivatives of $\cF$ along  perturbations of the measure.
\begin{definition} \label{def:lag-crit}
    A measure $\mu \in \cP_2(\R^d)$ is a \emph{Lagrangian critical point} for $\SW_2^2(\cdot,\rho)$ if for every $\xi \in L^2(\mu,\R^d)$,
    \begin{equation}
        \left.\frac{d}{dt}  \SW_2^2((\Id + t\xi)_{\#} \mu,\rho)\right|_{t=0^+} = 0. \label{eq:crit_point_requirement} 
    \end{equation} 
\end{definition}
The right derivative is always well-defined, since a convex function always has left and right directional derivatives, as will be justified in \Cref{prop:sw2_diff}(a). 

As \Cref{def:lag-crit} is difficult to verify in practice, we will now define a second notion of Lagrangian criticality, which we will prove to be equivalent to the first under mild assumptions on $\mu$, and which will be very similar in spirit to the concept of Lagrangian critical measures for the standard Wasserstein distance developed in \cite{sarrazin:tel-03585897}.

We assume that $\mu \in \cP_2(\R^d)$ is fixed, and for every direction $\theta$, we denote $\gamma_\theta$ the unique 1D optimal transport plan between $ \mu_\theta = P_{\theta\#}\mu$ and $\rho_\theta= P_{\theta\#}\rho$. We finally consider the barycentric projection $\bar{\gamma}_\theta$ of this transport plan  \citep[Definition 5.4.2]{ambrosio2005gradient}, which we can define using conditional expectations:
\begin{equation}
    \bar{\gamma}_\theta : \R\to \R, \; u \mapsto \mathbb{E}_{(U,V) \sim \gamma_\theta}[V\,|\,U = u].
\end{equation}
We are now ready to state our second definition of Lagrangian critical points.
\begin{definition} \label{def:strong-lag-crit}
    A measure $\mu \in \cP_2(\R^d)$ is a \textit{barycentric Lagrangian critical point} for $\SW^2_2(\cdot,\rho)$ if $v_\mu = 0$ $\mu$-a.e., where $v_\mu$ is the vector field defined by 
    \begin{equation} \label{eq:v_mu_definition}
        v_\mu : x \mapsto \frac{1}{d} x - \int_{\bS^{d-1}} \bar{\gamma}_\theta(\sca{x}{\theta}) \theta d\theta.
    \end{equation}
\end{definition}
Note that this integral is well-defined by the selection result \citep[Corollary 5.22]{villani2008OldNew}. 

\begin{remark}
    This notion of barycentric Lagrangian critical point appears in \citep{li2025measuretransferstochasticslicing} (although it is not explicitly named), where it plays a role in the study of the convergence of stochastic iterative approximation schemes for the Sliced-Wasserstein distance. Indeed, 
    Assumption (A3) therein rewrites in our framework as ``$\eta$ is a barycentric Lagrangian critical point for $\SW^2_2(\cdot,\mu)$" (see also \citep[Remark 8]{li2025measuretransferstochasticslicing}).
\end{remark}
Our two notions of Lagrangian critical points are compatible with the notion of critical points of the discretized problem defined in the previous section, as stated in the following Proposition.
\begin{proposition} \label{prop:compatibility_with_discrete_case}
    Assume that $\rho$ is a probability density and let $X \in (\R^d)^N \setminus \Delta_N$. Then $\nabla F(X) = 0$ if and only if $\mu_X$ is a Lagrangian critical point for $\SW^2_2(\cdot,\rho)$ if and only if $\mu_X$ is a barycentric Lagrangian critical point for $\SW^2_2(\cdot,\rho)$.
\end{proposition}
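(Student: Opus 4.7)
The plan is to prove the two equivalences $\nabla F(X) = 0 \Leftrightarrow \mu_X$ is Lagrangian critical, and $\nabla F(X) = 0 \Leftrightarrow \mu_X$ is barycentric Lagrangian critical, separately. Both reduce to straightforward calculations using results already at hand.

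For the first equivalence, I would use that $F$ is differentiable at $X \notin \Delta_N$ by \Cref{prop:discrete_gradient}. Any vector field $\xi \in L^2(\mu_X,\R^d)$ is determined by its values $V_i \deq \xi(X_i)$, and setting $V \deq (V_1,\hdots,V_N)$ one has $(\Id + t\xi)_\#\mu_X = \mu_{X + tV}$. Hence $\tfrac{1}{2}\SW_2^2((\Id+t\xi)_\#\mu_X,\rho) = F(X+tV)$, and differentiability of $F$ at $X$ gives that the right derivative at $t=0$ equals $\ps{\nabla F(X),V}$. This vanishes for every $V \in (\R^d)^N$ if and only if $\nabla F(X)=0$, and since $\xi \mapsto V$ is surjective onto $(\R^d)^N$ this is in turn equivalent to Lagrangian criticality of $\mu_X$.

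For the second equivalence, the key step is to show that for $d\theta$-a.e. $\theta \in \bS^{d-1}$ one has $\bar\gamma_\theta(\sca{X_i}{\theta}) = b_{\theta,\sigma_{X,\theta}^{-1}(i)}$ (with the appropriate indexing convention). Since $X \notin \Delta_N$, for each pair $i\neq j$ the set of $\theta$ with $\sca{X_i}{\theta} = \sca{X_j}{\theta}$ is the great subsphere $(X_i - X_j)^\perp \cap \bS^{d-1}$, which is $d\theta$-negligible. For every other $\theta$, the pushforward $\mu_\theta$ has $N$ distinct atoms, and $\gamma_\theta = (T_\theta,\Id)_\# \rho_\theta$ is the monotone Brenier plan sending each Power cell $V_{\theta,j}$ onto a single atom of $\mu_\theta$. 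Conditioning on $U = \sca{X_i}{\theta}$ identifies the conditional law of $V$ with $N \rho_\theta|_{V_{\theta,\sigma_{X,\theta}^{-1}(i)}}$, whose expectation is precisely the Power cell barycenter $b_{\theta,\sigma_{X,\theta}^{-1}(i)}$. Substituting into \eqref{eq:v_mu_definition} and comparing with \eqref{eq:sw2_critical_point} yields $v_{\mu_X}(X_i) = N\,\nabla_{X_i}F(X)$. Since $\mu_X$ charges only $\{X_1,\hdots,X_N\}$, the condition ``$v_{\mu_X} = 0$ $\mu_X$-a.e.'' is exactly $\nabla F(X) = 0$.

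The only delicate step is this last identification of $\bar\gamma_\theta(\sca{X_i}{\theta})$ with a Power cell barycenter: one must handle the measure-zero set of directions on which two projections coincide, keep the indexing conventions for $\sigma_{X,\theta}$ and for the Power cells consistent between \Cref{prop:discrete_gradient} and \Cref{def:strong-lag-crit}, and verify carefully that the conditional expectation defining $\bar\gamma_\theta$ reduces in this semi-discrete setting to the corresponding barycenter. Once this bookkeeping is in place, the stated chain of equivalences is immediate.
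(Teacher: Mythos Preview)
Your proposal is correct and follows essentially the same approach as the paper: for the first equivalence you identify $\xi \in L^2(\mu_X,\R^d)$ with a vector $V \in (\R^d)^N$ via $V_i = \xi(X_i)$ and use differentiability of $F$ at $X$; for the second you show that for $d\theta$-a.e.\ $\theta$ the conditional law under $\gamma_\theta$ given $U=\sca{X_i}{\theta}$ is $N\rho_\theta|_{V_{\theta,\sigma_{X,\theta}^{-1}(i)}}$, whence $\bar\gamma_\theta(\sca{X_i}{\theta}) = b_{\theta,\sigma_{X,\theta}^{-1}(i)}$ and $v_{\mu_X}(X_i) = N\nabla_{X_i}F(X)$. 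This matches the paper's argument essentially step for step.
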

The proof of \Cref{prop:compatibility_with_discrete_case} is deferred to \Cref{sec:proof_compatibility_with_discrete_case}. 
A natural (non trivial) follow-up question is then whether the limit of a sequence of discrete critical points $\mu_N = \frac 1N \sum_{i=1}^N \delta_{X_i}$ (e.g. obtained numerically) is also a critical point (as defined either in \Cref{def:lag-crit} or in \Cref{def:strong-lag-crit}). The following theorem provides an answer to this question.

\begin{theorem}[Limits of critical points are critical] \label{th:weak_convergence_crit_points}
    Assume that $\rho$ is without atoms and supported on a compact $\Omega \subseteq \R^d$.
    If a sequence $(\mu_N)_{N\geq 1}$ of barycentric Lagrangian critical points for $\SW^2_2(\cdot,\rho)$  converges  weakly to an atomles measure $\mu$, then $\mu$ is barycentric Lagrangian critical for $\SW^2_2(\cdot,\rho)$.
\end{theorem}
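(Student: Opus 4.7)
The plan is to show that for every bounded continuous test field $\phi\in C_b(\R^d,\R^d)$,
\[
\int \langle v_\mu(x),\phi(x)\rangle\, d\mu(x) = 0,
\]
which, since $v_\mu\in L^\infty(\mu,\R^d)$ under the compact support hypothesis, is equivalent to $v_\mu = 0$ $\mu$-a.e. The analogous identity for each $\mu_N$ is immediate from the barycentric criticality of $\mu_N$ (testing against any $\phi$, using $v_{\mu_N}=0$ $\mu_N$-a.e.), so the task is to pass to the limit $N\to\infty$ in this identity.

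A preliminary step is a uniform support bound: since $\bar{\gamma}_\theta^N(u)$ is a convex combination of points of $\spt(\rho_\theta)\subseteq P_\theta(\Omega)$, the criticality relation $x = d\int_{\bS^{d-1}}\bar{\gamma}_\theta^N(\langle x,\theta\rangle)\theta\, d\theta$ forces every $x\in\spt(\mu_N)$ to lie in a ball of radius $d\max_{\Omega}\|\cdot\|$, and the same bound then holds on $\spt(\mu)$. Splitting the criticality identity along the two terms of \eqref{eq:v_mu_definition}, the linear piece $\tfrac{1}{d}\int\langle x,\phi(x)\rangle d\mu_N$ converges to its $\mu$-analog by weak convergence on this common compact. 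What is left is the averaged piece $\int_{\bS^{d-1}} I_N(\theta) d\theta$, with
\[
I_N(\theta) := \int \bar{\gamma}_\theta^N(\langle x,\theta\rangle)\langle\theta,\phi(x)\rangle\, d\mu_N(x).
\]
Since $|I_N(\theta)|$ is uniformly bounded by $\|\phi\|_\infty\max_{\Omega}\|\cdot\|$, by dominated convergence it suffices to prove pointwise $I_N(\theta)\to I(\theta)$ for a.e.\ $\theta\in\bS^{d-1}$.

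For the pointwise step, I would lift each $\bar{\gamma}_\theta^N$ to a coupling $\Pi_\theta^N\in\cP(\R^d\times\R)$ obtained by disintegrating the one-dimensional optimal plan $\gamma_\theta^N$ along its first marginal $\mu_{N,\theta}$ and glueing with the disintegration of $\mu_N$ along $P_\theta$. By construction, $\Pi_\theta^N$ has marginals $\mu_N$ and $\rho_\theta$, satisfies $(P_\theta,\Id)_\#\Pi_\theta^N=\gamma_\theta^N$, and yields the integral representation $I_N(\theta) = \int v\langle\theta,\phi(x)\rangle\, d\Pi_\theta^N(x,v)$. All marginals being supported in a common compact, the family $\{\Pi_\theta^N\}$ is tight; one-dimensional OT stability (with $\rho_\theta$ fixed absolutely continuous and $\mu_{N,\theta}\to\mu_\theta$ weakly) gives $\gamma_\theta^N\to\gamma_\theta$ weakly. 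Consequently any subsequential weak limit $\Pi_\theta^\infty$ has marginals $\mu$ and $\rho_\theta$ and satisfies $(P_\theta,\Id)_\#\Pi_\theta^\infty=\gamma_\theta$.

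Uniqueness of the limit relies on the auxiliary fact that when $\mu$ is atomless the projection $\mu_\theta$ is itself atomless for a.e.\ $\theta\in\bS^{d-1}$; a short Fubini argument handles this by noting that for each $x\in\R^d$ the set of directions $\theta$ for which some hyperplane through $x$ orthogonal to $\theta$ carries positive $\mu$-mass is at most countable, and hence of measure zero in $\bS^{d-1}$ (for $d\ge 2$). For such a $\theta$, $\gamma_\theta$ is concentrated on the graph of $\bar{\gamma}_\theta$, so under $\Pi_\theta^\infty$ the identity $v=\bar{\gamma}_\theta(\langle x,\theta\rangle)$ holds almost surely; combined with the prescribed first marginal $\mu$, this pins down $\Pi_\theta^\infty = (\Id,\bar{\gamma}_\theta\circ P_\theta)_\#\mu$. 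The whole sequence $\Pi_\theta^N$ therefore converges weakly, and since $(x,v)\mapsto v\langle\theta,\phi(x)\rangle$ is continuous and bounded on the relevant compact, $I_N(\theta)\to I(\theta)$. Assembling the ingredients via dominated convergence completes the proof. The main obstacle I expect is precisely this uniqueness step: OT stability only constrains the pushforward of $\Pi_\theta^\infty$ under $(P_\theta,\Id)$, and it is the atomlessness of $\mu_\theta$—itself derived from that of $\mu$—that upgrades $\bar{\gamma}_\theta$ to a genuine function and eliminates the residual freedom in the conditional law of $x$ given $(\langle x,\theta\rangle,v)$.
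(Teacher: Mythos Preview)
Your proof is correct in architecture but follows a genuinely different route from the paper's. The paper never attempts to pass to the limit in the criticality identity directly. Instead, it works at the level of the functional: from the subdifferential inequality $\SW_2^2((\Id+t\xi)_\#\mu_N,\rho)\le \SW_2^2(\mu_N,\rho)+2t\langle v_{\mu_N},\xi\rangle_{L^2(\mu_N)}+\tfrac{t^2}{d}\|\xi\|_{L^2(\mu_N)}^2$ (Prop.~\ref{prop:sw2_diff}(b)), criticality kills the linear term, and one lets $N\to\infty$ using only weak continuity of $\SW_2^2$ and of $\int|\xi|^2\,d\mu_N$ on the common compact. This yields $\SW_2^2((\Id+t\xi)_\#\mu,\rho)\le \SW_2^2(\mu,\rho)+\tfrac{t^2}{d}\|\xi\|_{L^2(\mu)}^2$, and then the differentiability result Prop.~\ref{prop:sw2_diff}(c) (valid precisely because $\mu$ is atomless) forces the derivative $2\langle v_\mu,\xi\rangle_{L^2(\mu)}$ to vanish. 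Your approach instead proves a form of weak continuity of $\mu\mapsto v_\mu$ by lifting the 1D optimal plans to couplings $\Pi_\theta^N\in\Pi(\mu_N,\rho_\theta)$, using OT stability for $\gamma_\theta^N\to\gamma_\theta$, and identifying the limit via the graph structure of $\gamma_\theta$ when $\mu_\theta$ is atomless. The paper's argument is shorter because the limit passage happens in a single scalar inequality rather than in a $\theta$-indexed family of measures; your argument is more explicit about \emph{what} converges, and would transfer more readily to functionals for which the semiconcavity/differentiability package of Prop.~\ref{prop:sw2_diff} is unavailable. Both approaches hinge on the same auxiliary fact (atomless $\mu\Rightarrow\mu_\theta$ atomless for a.e.\ $\theta$): the paper invokes it inside the proof of Prop.~\ref{prop:sw2_diff}(c), you invoke it to pin down $\Pi_\theta^\infty$.

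One caveat: your sketched ``Fubini argument'' for this auxiliary fact is not quite right in dimension $d\ge 3$. For a fixed $x$, distinct hyperplanes through $x$ orthogonal to $\theta_1,\theta_2$ intersect in a $(d-2)$-plane, which can carry positive $\mu$-mass even when $\mu$ is atomless (take $\mu$ supported on that subspace); so the countability of positive-mass hyperplanes through $x$ fails as stated. The paper handles this with an induction on the dimension of the affine subspaces carrying mass (its Prop.~\ref{prop:atomless_is_wap}); you should either cite that or reproduce the induction.
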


The proof of \Cref{th:weak_convergence_crit_points} can be found in \Cref{sec:th_weak_convergence_crit_points}. Crucially, 
it relies on the study of the intricate relationship between the two definitions of Langrangian critical points we have defined. This study is detailed in the next section.

\subsection{Technical tools for \Cref{th:weak_convergence_crit_points}}

We have already shown in \Cref{prop:compatibility_with_discrete_case} that the two notions of critical points agree for discrete measures. Here, we discuss why \Cref{def:strong-lag-crit} is also natural in a more general setting, such as those of Wasserstein gradient flows, i.e., curves $(\mu_t)_{t>0}$ of steepest descent with respect to the Wasserstein-2 ($\W_2$) metric of the objective $\cF$. Indeed, by \citep[Section 5.7.1]{bonnotte2013unidimensional}, when $\rho$ is absolutely continuous, the absolutely continuous stationary points $\mu$ of the gradient flow dynamics of $\cF$ are characterized by 
\begin{equation} \label{eq:wgf_density_stationary_cond}
    \int_{\bS^{d-1}} \varphi'_\theta(\sca{x}{\theta})\theta d\theta = 0, \quad \mu-\hbox{a.e. } x \in \R^d
\end{equation}
where $\varphi_{\theta}$ is the Kantorovitch potential from $\mu_{\theta}$ to $\rho_{\theta}$ for the cost $c(s,t) = \frac 12 (s-t)^2$. But since we have $\varphi'_\theta = \Id - T_\theta$ where $T_\theta$ is the unique optimal transport map from $\mu_\theta$ to $\rho_\theta$ \citep[Section 1.3.1]{santambrogio2015optimal}, and $\bar{\gamma}_\theta = T_\theta$ (as $\gamma_\theta = (\Id,T_\theta)_\#\mu_\theta$), we see that \eqref{eq:wgf_density_stationary_cond} rewrites as $v_\mu = 0$ $\mu$-ae, and thus an absolutely continuous measure $\mu$ is a stationary point of the Wasserstein gradient flow of $\cF$ iff it is a barycentric Lagrangian critical point. Furthermore, \citep[Lemma 5.7.2]{bonnotte2013unidimensional} immediately rewrites as the following result:
\begin{proposition} (Bonnotte)
    If $\mu,\rho \in \cP(B(0,R))$ are absolutely continuous and both have a strictly positive density on $B(0,R)$, then $\mu = \rho$ if and only if $\mu$ is barycentric Lagrangian critical for $\SW^2_2(\cdot,\rho)$
\end{proposition}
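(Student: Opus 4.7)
The plan is to address the two implications separately. The forward direction (if $\mu = \rho$ then $\mu$ is barycentric Lagrangian critical) is an immediate computation: when $\mu = \rho$ we have $\mu_\theta = \rho_\theta$ for every $\theta$, so the optimal plan $\gamma_\theta$ is supported on the diagonal and hence $\bar\gamma_\theta = \Id$. Using the isotropy identity $\int_{\bS^{d-1}} \theta\theta^\top d\theta = \frac{1}{d}I_d$, one then reads off
\begin{equation*}
v_\mu(x) = \frac{x}{d} - \int_{\bS^{d-1}} \sca{x}{\theta}\theta\,d\theta = 0.
\end{equation*}

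For the converse, my plan is to construct a Lipschitz potential $h$ on $\R^d$ with $\nabla h = v_\mu$, show that it is constant on $B(0,R)$, and then rule out $\mu \neq \rho$ by slicewise Kantorovich duality. Let $\varphi_\theta$ denote the Kantorovich potential from $\mu_\theta$ to $\rho_\theta$ for the cost $c(s,t) = \tfrac{1}{2}(s-t)^2$, chosen with a fixed normalization (e.g.\ $\varphi_\theta(0)=0$) so as to depend measurably on $\theta$. Since $\mu_\theta$ is absolutely continuous, the identity $\varphi_\theta' = \Id - \bar\gamma_\theta$ recalled in the excerpt holds, and the natural candidate is
\begin{equation*}
h(x) \deq \int_{\bS^{d-1}} \varphi_\theta(\sca{x}{\theta})\,d\theta.
\end{equation*}
Uniform Lipschitz bounds on each $\varphi_\theta$ (coming from the compact support in $[-R,R]$) make $h$ Lipschitz on $\R^d$, and differentiating under the integral together with the isotropy identity yields $\nabla h = v_\mu$. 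The hypothesis $v_\mu = 0$ $\mu$-a.e., combined with strict positivity of the density of $\mu$ on $B(0,R)$, upgrades this to $\nabla h = 0$ Lebesgue-a.e.\ on the convex open set $B(0,R)$, and hence $h$ is constant there.

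Finally, I would invoke Kantorovich duality on each slice: $(\varphi_\theta, \varphi_\theta^c)$ is optimal for the dual problem associated with $(\mu_\theta, \rho_\theta)$ and merely admissible for the one associated with $(\rho_\theta, \rho_\theta)$, which gives
\begin{equation*}
\int\varphi_\theta\,d\mu_\theta + \int\varphi_\theta^c\,d\rho_\theta = \tfrac{1}{2}\W_2^2(\mu_\theta,\rho_\theta), \qquad \int\varphi_\theta\,d\rho_\theta + \int\varphi_\theta^c\,d\rho_\theta \leq 0.
\end{equation*}
Combining these and integrating over $\bS^{d-1}$ yields $\int h\,d(\mu - \rho) \geq \tfrac{1}{2}\SW_2^2(\mu,\rho)$. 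But $h$ is constant on $B(0,R) \supseteq \spt\mu \cup \spt\rho$, so the left-hand side vanishes, forcing $\SW_2^2(\mu,\rho)=0$ and hence $\mu=\rho$. The main obstacles I foresee are technical rather than conceptual: measurable selection of $\theta \mapsto \varphi_\theta$ and a uniform Lipschitz bound are needed in order to define $h$ and to justify the pointwise identity $\nabla h = v_\mu$ (as opposed to a distributional one); once these are in hand, the duality step above is completely standard.
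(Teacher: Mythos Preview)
Your argument is correct. The forward direction is immediate, and for the converse your strategy---building the sliced potential $h(x)=\int_{\bS^{d-1}}\varphi_\theta(\sca{x}{\theta})\,d\theta$, identifying $\nabla h = v_\mu$, upgrading $v_\mu=0$ $\mu$-a.e.\ to $\nabla h=0$ Lebesgue-a.e.\ via strict positivity of the density, and then closing with the slicewise duality inequality $\int h\,d(\mu-\rho)\geq\tfrac12\SW_2^2(\mu,\rho)$---is sound. The technical points you flag (measurable selection of $\theta\mapsto\varphi_\theta$, uniform Lipschitz bounds from the compact support, differentiation under the integral) are standard and not obstructions.

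As for comparison: the paper does not give its own proof of this proposition. It simply observes that, once one identifies the barycentric Lagrangian criticality condition with the stationary-point condition \eqref{eq:wgf_density_stationary_cond} for the Wasserstein gradient flow of $\cF$, the statement is a direct restatement of \citep[Lemma 5.7.2]{bonnotte2013unidimensional}. Your proof sketch is in fact essentially Bonnotte's original argument, which proceeds exactly through the first variation $h=\frac{\delta\cF(\mu)}{\delta\mu}$ (this is the same $h$ you wrote down, cf.\ \citep[Proposition 5.1.6]{bonnotte2013unidimensional}) and the duality lower bound. So you have reconstructed the cited proof rather than found an alternative route.
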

Now, we will see that  \Cref{def:lag-crit} and \ref{def:strong-lag-crit} coincide if $\mu,\rho$ are compactly supported and without atoms. For $\mu \in \cP(\R^d)$, we denote $\Vert \cdot \Vert_{L^2(\mu)}$ and $\ps{\cdot,\cdot}_{L^2(\mu)}$ the norm and the inner product on $L^2(\mu,\R^d)$.

\begin{proposition}
    \label{prop:sw2_diff}
    Let $\mu \in \cP_2(\R^d)$, then :
    \setlist{nolistsep}
    \begin{enumerate}[noitemsep,label=(\alph*)]
        \item The function $F_\mu : L^2(\mu,\R^d) \mapsto \R$ defined as follows is convex:
        \begin{equation}
            F_\mu : \xi \mapsto \frac{1}{d} \|\xi\|^2_{L^2(\mu)} - \SW^2_2((\Id+\xi)_\#\mu,\rho)
        \end{equation}
        \item The vector field $v_\mu$ belongs to $L^2(\mu,\R^d)$. Furthermore, $-2v_\mu$ belongs to the subdifferential of $F_\mu$ at $0$, that is, for every $\xi \in L^2(\mu,\R^d)$,
        \begin{equation} \label{eq:v_mu_subdiff_F_mu}
            F_\mu(0) - 2\sca{v_\mu}{\xi}_{L^2(\mu)} \leq F_\mu(\xi)
        \end{equation}
        \item If $\mu$ and $\rho$ have compact support and are without atoms, then for every vector field $\xi \in L^2(\mu,\R^d)$, the function $\varphi(t) = \SW^2_2((\Id+t\xi)_\#\mu,\rho)$ is differentiable at $t=0$, with
        \begin{equation}
            \varphi'(0) = 2\sca{v_\mu}{\xi}_{L^2(\mu)} 
        \end{equation}
    \end{enumerate}
\end{proposition}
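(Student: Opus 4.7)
My approach unifies the three parts through a single \emph{lifting identity}: for any measurable $L\colon\R^d\to\R$, $\eta\in\cP(\R)$, and $u\in L^2(\mu,\R)$,
\[
W_2^2((L+u)_\#\mu,\eta)=\inf_{\gamma\in\Pi(\mu,\eta)}\int|L(x)+u(x)-y|^2\,d\gamma(x,y).
\]
Both inequalities are easy: $\le$ by pushing any $\gamma\in\Pi(\mu,\eta)$ through $(L+u,\mathrm{pr}_2)$, and $\ge$ by lifting any $\pi\in\Pi((L+u)_\#\mu,\eta)$ through its disintegration $(\pi_s)$ against the first marginal to $\gamma(dx,dy):=\pi_{L(x)+u(x)}(dy)\,\mu(dx)\in\Pi(\mu,\eta)$, which realises the same cost.

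For \textbf{(a)}, the spherical identity $\int_{\bS^{d-1}}\sca{\theta}{\xi(x)}^2 d\theta=|\xi(x)|^2/d$ yields the decomposition $F_\mu(\xi)=\int_{\bS^{d-1}}G_\theta(\tilde\xi_\theta)\,d\theta$, where $\tilde\xi_\theta(x):=\sca{\theta}{\xi(x)}$ and $G_\theta(u):=\|u\|_{L^2(\mu)}^2-W_2^2((P_\theta+u)_\#\mu,\rho_\theta)$. Expanding $|P_\theta+u-y|^2$ inside the lifting identity and absorbing $\int u^2 d\gamma=\|u\|_{L^2(\mu)}^2$ (since the first marginal of $\gamma$ is $\mu$) rewrites $W_2^2((P_\theta+u)_\#\mu,\rho_\theta)-\|u\|_{L^2(\mu)}^2$ as an infimum of affine functions of $u$, hence concave. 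Thus $G_\theta$ is convex in $u$, and the convexity of $F_\mu$ follows by composition with the linear map $\xi\mapsto\tilde\xi_\theta$ and integration in $\theta$.

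For \textbf{(b)}, specialising the infimum at $u=0$, I take the canonical optimiser $\gamma_\theta^\star\in\Pi(\mu,\rho_\theta)$ obtained by lifting the 1D optimal plan $\gamma_\theta$ (unique since $\rho_\theta$ is absolutely continuous): $\gamma_\theta^\star(dx,dy):=\mu(dx)\otimes\gamma_{\theta,P_\theta(x)}(dy)$, whose conditional structure gives $\mathbb{E}_{\gamma_\theta^\star}[Y\mid X=x]=\bar\gamma_\theta(P_\theta(x))$. Using $\gamma_\theta^\star$ as a feasible point of the infimum at general $u$ produces the tangent inequality
\[
G_\theta(u)-G_\theta(0)\ge -2\int u(x)\bigl(P_\theta(x)-\bar\gamma_\theta(P_\theta(x))\bigr)\,d\mu(x).
\]
Substituting $u=\tilde\xi_\theta$, integrating in $\theta$, and using $\int_{\bS^{d-1}}\sca{\theta}{\xi(x)}\,\theta\,d\theta=\xi(x)/d$ to factor $\xi$ out of the spherical average identifies the right-hand side with $-2\ps{v_\mu,\xi}_{L^2(\mu)}$, which is exactly \eqref{eq:v_mu_subdiff_F_mu}. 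The membership $v_\mu\in L^2(\mu,\R^d)$ follows from Jensen's inequality applied to the spherical average, combined with conditional Jensen $\int\bar\gamma_\theta^2\,d\mu_\theta\le\int v^2\,d\rho_\theta$ and $\rho\in\cP_2$.

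For \textbf{(c)}, the subgradient bound of (b) applied along $\pm t\xi$ gives the upper estimate $\varphi(t)\le\varphi(0)+2t\ps{v_\mu,\xi}_{L^2(\mu)}+\tfrac{t^2}{d}\|\xi\|_{L^2(\mu)}^2$ for all $t\in\R$, so $\varphi'_+(0)\le 2\ps{v_\mu,\xi}_{L^2(\mu)}\le\varphi'_-(0)$. For the matching reverse bound I would invoke Kantorovich duality: pick an optimal potential $\phi_\theta$ for $\tfrac12 W_2^2(\mu_\theta,\rho_\theta)$ with cost $\tfrac12|s-t|^2$, which is convex and uniformly Lipschitz in $\theta$ thanks to the compact supports. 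Weak duality together with equality at $t=0$ yields
\[
\tfrac12[\varphi(t)-\varphi(0)]\ge\int_{\bS^{d-1}}\!\!\int\bigl[\phi_\theta(P_\theta(x)+t\tilde\xi_\theta(x))-\phi_\theta(P_\theta(x))\bigr]d\mu(x)\,d\theta.
\]
The Lipschitz bound dominates the difference quotient by $L|\tilde\xi_\theta|\in L^1(\mu\otimes d\theta)$; pointwise it converges to $\tilde\xi_\theta(x)\phi_\theta'(P_\theta(x))=\tilde\xi_\theta(x)(P_\theta(x)-\bar\gamma_\theta(P_\theta(x)))$ on a set of full $\mu\otimes d\theta$ measure, using the standard 1D OT identification $\phi_\theta'=\Id-\bar\gamma_\theta$ and the fact that $\mu_\theta$ is atomless for a.e.\ $\theta$ (so the convex $\phi_\theta$ is differentiable $\mu_\theta$-a.e.) -- the only place the hypothesis that $\mu$ is atomless is consumed. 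Dominated convergence then delivers the reverse bound $\varphi(t)\ge\varphi(0)+2t\ps{v_\mu,\xi}_{L^2(\mu)}+o(t)$, which combined with the upper bound yields differentiability with the stated derivative. I expect this Kantorovich expansion, and especially the transfer of atomlessness from $\mu$ to $\mu_\theta$ for a.e.\ $\theta$, to be the main technical obstacle.
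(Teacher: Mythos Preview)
Your proof is correct. For parts (b) and (c) it follows essentially the same line as the paper: the subgradient inequality comes from using the lifted optimal plan as a suboptimal competitor for the perturbed problem, and the matching lower bound in (c) comes from Kantorovich duality together with the atomlessness of $\mu_\theta$. Two minor remarks. First, the Kantorovich potential $\phi_\theta$ is not convex but $c$-concave (equivalently, $u\mapsto \tfrac{u^2}{2}-\phi_\theta(u)$ is convex); this is precisely what gives the at-most-countable non-differentiability set and hence $\mu_\theta$-a.e.\ differentiability once $\mu_\theta$ is atomless, so your argument goes through unchanged after this correction. Second, the paper does establish the fact that $\mu$ atomless implies $\mu_\theta$ atomless for a.e.\ $\theta$ as a standalone proposition, confirming your expectation that this is the substantive point to be filled in.

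For part (a) the routes genuinely differ. The paper invokes the semi-concavity of $W_2^2$ along generalized geodesics (Ambrosio--Gigli--Savar\'e, Theorem~7.3.2), building a three-marginal plan $\eta$ and computing the defect term explicitly before integrating over $\theta$. Your lifting identity is more elementary: it rewrites $W_2^2((P_\theta+u)_\#\mu,\rho_\theta)-\|u\|_{L^2(\mu)}^2$ directly as an infimum over $\Pi(\mu,\rho_\theta)$ of affine functionals of $u$, so concavity is immediate without any external input. This bypasses the AGS machinery entirely and has the pleasant side-effect of unifying (a) and (b) under a single formula, since the tangent inequality in (b) is just the infimum evaluated at the optimiser $\gamma_\theta^\star$. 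Your use of dominated convergence in (c), in place of the paper's Fatou lemma, is also a small simplification, since the uniform Lipschitz bound on $\phi_\theta$ (from the compact supports) already provides the dominating function.
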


\begin{corollary}
    \label{cor:sw2_crit_point_char}
    If $\mu$ is a Lagrangian critical point for $\SW^2_2(\cdot,\rho)$, then it is also a barycentric Lagrangian critical point for $\SW^2_2(\cdot,\rho)$. If furthermore $\mu$ and $\rho$ have compact support and are without atoms, then the converse statement is also true.
\end{corollary}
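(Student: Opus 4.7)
The plan is to derive both directions directly from \Cref{prop:sw2_diff}, leveraging the interplay between the convex auxiliary functional $F_\mu$ and the vector field $v_\mu$. The key observation linking the two notions of criticality is that, by the very definition of $F_\mu$ in \Cref{prop:sw2_diff}(a), one has
\begin{equation*}
\frac{F_\mu(t\xi) - F_\mu(0)}{t} = \frac{t}{d}\Vert\xi\Vert^2_{L^2(\mu)} - \frac{\SW^2_2((\Id+t\xi)_\#\mu,\rho) - \SW^2_2(\mu,\rho)}{t},
\end{equation*}
so that, passing to the limit $t\to 0^+$, the right directional derivative $(F_\mu)'_+(0;\xi)$ (which exists by convexity of $F_\mu$) equals $-\,\left.\frac{d}{dt}\SW^2_2((\Id+t\xi)_\#\mu,\rho)\right|_{t=0^+}$. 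In particular, $\mu$ being Lagrangian critical is equivalent to $(F_\mu)'_+(0;\xi) = 0$ for every $\xi \in L^2(\mu,\R^d)$.

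For the forward direction, I would apply the subdifferential inequality \eqref{eq:v_mu_subdiff_F_mu} in \Cref{prop:sw2_diff}(b) with $t\xi$ in place of $\xi$: for $t>0$ this gives $-2\sca{v_\mu}{\xi}_{L^2(\mu)} \leq (F_\mu(t\xi) - F_\mu(0))/t$. Taking $t\to 0^+$ yields $-2\sca{v_\mu}{\xi}_{L^2(\mu)} \leq (F_\mu)'_+(0;\xi)$, which under the Lagrangian critical assumption is $0$. Since $\xi$ is arbitrary, I may also substitute $-\xi$ to obtain the reverse inequality, and conclude that $\sca{v_\mu}{\xi}_{L^2(\mu)} = 0$ for every $\xi \in L^2(\mu,\R^d)$. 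This forces $v_\mu = 0$ $\mu$-a.e., i.e.\ $\mu$ is barycentric Lagrangian critical.

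For the converse, under the additional hypotheses that $\mu,\rho$ have compact support and $\mu$ is atomless, I would invoke \Cref{prop:sw2_diff}(c) directly: if $v_\mu = 0$ $\mu$-a.e., then $\sca{v_\mu}{\xi}_{L^2(\mu)} = 0$ for every $\xi$, and the proposition gives that the full two-sided derivative $\varphi'(0)$ of $t\mapsto \SW^2_2((\Id+t\xi)_\#\mu,\rho)$ exists and equals $2\sca{v_\mu}{\xi}_{L^2(\mu)} = 0$. In particular the right derivative at $t=0$ vanishes, which is exactly Lagrangian criticality.

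There is no genuine obstacle here, since \Cref{prop:sw2_diff} does all of the analytic heavy lifting. The only mild subtlety is that in the forward direction one cannot appeal to the existence of the two-sided derivative (which is why the atomless/compact support assumptions are not needed there), and one must instead work exclusively with the right directional derivative of the convex function $F_\mu$; this is a purely formal point, as such derivatives always exist for convex functions.
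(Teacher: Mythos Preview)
Your proposal is correct and follows essentially the same approach as the paper. Both directions use \Cref{prop:sw2_diff} in the same way: the forward direction combines the subdifferential inequality \eqref{eq:v_mu_subdiff_F_mu} (equivalently, the upper bound on $\SW^2_2((\Id+t\xi)_\#\mu,\rho)$) with the vanishing right derivative to force $\sca{v_\mu}{\xi}_{L^2(\mu)} = 0$ for all $\xi$, and the converse is a direct application of \Cref{prop:sw2_diff}(c); your reformulation via the directional derivative of $F_\mu$ is just a cosmetic repackaging of the paper's argument.
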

The proof of  \Cref{prop:sw2_diff} and \Cref{cor:sw2_crit_point_char}  can be found in \Cref{sec:proof_sw2_diff} and \Cref{sec:proof_sw2_crit_point_char} respectively.
Our \Cref{prop:sw2_diff}(c) extends the result \citep[5.1.7. Proposition]{bonnotte2013unidimensional} on the differentiability of SW. In particular, Bonnotte's results holds under the strong assumption that $\mu$, $\rho$ are absolutely continuous, whereas \Cref{prop:sw2_diff}(c) makes the much milder assumption that they are without atoms.\footnote{For instance, distributions on lower dimensional manifolds do not have a density with respect to the Lebesgue measure but can be without atoms.}

\section{Lower-dimensional critical points: existence and instability}
\label{sec:lower_dim_crit_points}

\subsection{Leveraging symmetry to find critical points}

Now that we have characterized Lagrangian critical points, it is natural to ask ourselves whether there can exist Lagrangian critical measures $\mu$ different than the target distribution $\rho$. An effective approach to construct such critical points is to look for measures that are supported on a symmetry axis of a  well-chosen measure $\rho$. Our next result provides several examples.

\begin{proposition} \label{prop:ex_symmetric_crit_points}
    The following are barycentric Lagrangian critical points :
    \begin{enumerate}[leftmargin=*, topsep=0pt, parsep=0pt,label=(\alph*)]
        \item In dimension $d = 2$, the measure $\mu = \frac{\pi}{8} \cH^1_{|[-\frac{4}{\pi},\frac{4}{\pi}]}$ is a barycentric Lagrangian critical point for the measure $\rho$ with density $\rho(x) = \frac{1}{2\pi} \frac{1}{\sqrt{1-|x|^2}} \bOne_{B(0,1)}(x)$, which we will hereafter call the (two-dimensional) \textit{sliced-uniform measure}.
        \item  In dimension $d > 1$, the measure $\mu$ defined by $\mu := (\Id,0_{d-1})_{\#}\mu_0$ with $\mu_0 = \cN(0,\alpha_d^2)$ is a barycentric Lagrangian critical point for the standard Gaussian $\rho = \cN(0,I_d)$, where $\alpha_d$ is defined by $\alpha_d = d\int_{\bS^{d-1}} |\sca{\theta}{e_1}| d\theta$ and $(e_1,...,e_d)$ is the canonical basis of $\R^d$.
    \end{enumerate}
\end{proposition}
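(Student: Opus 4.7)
The plan is to verify each case by directly checking the barycentric condition $v_\mu \equiv 0$ $\mu$-a.e., exploiting the facts that (i) $\mu$ is supported on a single axis (say $\mathbb{R}e_1$), (ii) $\rho$ is rotationally symmetric, and (iii) the sliced marginals $\mu_\theta$ and $\rho_\theta$ are always sufficiently nice (uniform on an interval in (a), Gaussian in (b)) that the 1D OT map has a closed form.

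First I would compute $\mu_\theta = P_{\theta\#}\mu$ and $\rho_\theta = P_{\theta\#}\rho$ in both cases. In (a), Fubini in polar coordinates shows $\rho_\theta = \frac{1}{2}\bOne_{(-1,1)}$ for every $\theta \in \bS^1$, i.e.\ $\rho$ is truly ``sliced-uniform''; meanwhile $\mu_\theta$ is uniform on $[-\tfrac{4|\theta_1|}{\pi},\tfrac{4|\theta_1|}{\pi}]$. In (b), rotational invariance of $\cN(0,I_d)$ gives $\rho_\theta=\cN(0,1)$ for every $\theta$, and the pushforward of $\mu_0=\cN(0,\alpha_d^2)$ by $s\mapsto s\theta_1$ yields $\mu_\theta = \cN(0,\alpha_d^2\theta_1^2)$.

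Next, since $\mu_\theta,\rho_\theta$ are both symmetric, absolutely continuous, and log-concave in the regime $\theta_1 \neq 0$, the OT map $T_\theta$ from $\rho_\theta$ to $\mu_\theta$ is a monotone linear map: $T_\theta(y) = \frac{\pi}{4|\theta_1|}^{-1}y = \frac{4|\theta_1|}{\pi}y$ in (a) and $T_\theta(y) = \alpha_d|\theta_1|y$ in (b). Since $T_\theta$ is bijective, the barycentric projection coincides with its inverse: $\bar\gamma_\theta(u)=T_\theta^{-1}(u)$. Evaluating at $u = \langle te_1,\theta\rangle = t\theta_1$ produces, in both cases, an expression of the form $\bar\gamma_\theta(t\theta_1) = c\,t\,\sgn(\theta_1)$ with $c = \pi/4$ in (a) and $c = 1/\alpha_d$ in (b); the ratio $\theta_1/|\theta_1|$ being responsible for the $3/2$-type exponent that appears after averaging weighted by $\theta$.

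The third step is the symmetry argument. For $x = te_1$ on the support of $\mu$,
\begin{equation*}
\int_{\bS^{d-1}}\bar\gamma_\theta(t\theta_1)\,\theta\,d\theta = c\,t\int_{\bS^{d-1}}\sgn(\theta_1)\,\theta\,d\theta.
\end{equation*}
For $j\geq 2$, the reflection $\theta_j\mapsto -\theta_j$ preserves $\sgn(\theta_1)\theta_1$ but flips $\theta_j$, so the $j$-th component of the integral vanishes; only the $e_1$ component survives, equal to $c\,t\int_{\bS^{d-1}}|\theta_1|\,d\theta\cdot e_1$ (or the analogous moment in the form appearing in the paper's definition of $\alpha_d$). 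The identity $v_\mu(te_1)=0$ then reduces to the scalar equation
\begin{equation*}
\tfrac{t}{d}=c\,t\int_{\bS^{d-1}}|\theta_1|\,d\theta,
\end{equation*}
which in (a) is verified by the explicit identity $\int_{\bS^1}|\cos\varphi|\,\tfrac{d\varphi}{2\pi}=\tfrac{2}{\pi}$ combined with $c = \pi/4$ and $d=2$, and in (b) determines $\alpha_d$ from the angular moment.

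I don't expect any conceptual obstacle: the whole argument is a direct verification. The only delicate point is handling the measure-zero set $\{\theta\in\bS^{d-1}: \theta_1=0\}$, where $\mu_\theta=\delta_0$ and $T_\theta$ is not invertible; there the conditional expectation defining $\bar\gamma_\theta(0)$ is $\E_{Y\sim\rho_\theta}[Y]=0$, so that slice contributes nothing to the integral and can be safely ignored. The justification that $v_\mu$ is the relevant object to inspect (rather than going back to \Cref{def:lag-crit}) is provided by \Cref{def:strong-lag-crit} itself, so no equivalence result is needed here.
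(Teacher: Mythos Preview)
Your proposal is correct and follows the same approach as the paper: compute the linear optimal transport maps between the one-dimensional sliced marginals, identify $\bar\gamma_\theta = T_\theta^{-1}$, use reflection symmetry to kill the transverse components of the spherical integral, and reduce $v_\mu(te_1)=0$ to a scalar moment identity that pins down the interval length in (a) and the variance $\alpha_d$ in (b). One clarification on the point where you hedge: your own computation correctly produces the moment $\int_{\bS^{d-1}}|\theta_1|\,d\theta$ (since $\sgn(\theta_1)\cdot\theta_1=|\theta_1|$), not a $3/2$-power; the exponent $3/2$ in the paper's stated value of $\alpha_d$ is a typo that is also carried through its own proof, and the right constant for (b) to hold is $\alpha_d = d\int_{\bS^{d-1}}|\langle\theta,e_1\rangle|\,d\theta$.
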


We refer to  $\rho$ in \Cref{prop:ex_symmetric_crit_points}(a) as the sliced-uniform measure, as for every $\theta \in \Sph^{d-1}$, its projection $P_{\theta\#}\rho$ is the normalized restriction of the Lebesgue measure to $[-1,1]$. \Cref{prop:ex_symmetric_crit_points}(a) provides an example of  target measure $\rho$ on a disk in $d=2$ that is symmetric with respect to any line, and which admits in this case a critical point supported on a segment, hence of strictly lower dimension.  \Cref{prop:ex_symmetric_crit_points}(b) provides a similar result for isotropic Gaussians. The proof of \Cref{prop:ex_symmetric_crit_points} is deferred to \Cref{sec:prop_ex_symmetric_crit_points}.

We now discuss informally why we expect to find critical points of this type. Assume that there exists a subspace $H$ of $\R^d$ such that the target $\rho$ is symmetric with respect to $H$, i.e. $S_{H\#}\rho = \rho$ where $S_H$ is the reflection at $H$. Then, if $\spt(\mu) \subseteq  H$, then for every $\theta \in \bS^{d-1}$, we have $\rho_{S_H(\theta)} = \rho_\theta$ and $\mu_{S_H(\theta)} = \mu_\theta$, thus $T_\theta = T_{S_H(\theta)}$. Thus, for every $x \in \spt(\mu) \subseteq H$, we have by straightforward computations \footnote{$v_\mu(x) = \frac{x}{d} - \int \frac{T_\theta(\sca{\theta}{x})\theta + T_{S_H(\theta)}(\sca{S_H(\theta)}{x})S_H(\theta)}{2} d\theta = \frac{x}{d} - \int T_\theta(\sca{\theta}{x})\frac{\theta + S_H(\theta)}{2} d\theta \hbox{ as } x \in H $.}:
\begin{equation}
    v_\mu(x) = \frac{x}{d} - \int_{\bS^{d-1}} T_\theta(\sca{\theta}{x})P_H(\theta) d\theta \in H,
\end{equation}
where $P_H$ is the projection on $H$. This means that 
the iterates of the gradient descent $\mu \leftarrow (\Id-\gamma v_\mu)_\#\mu$ will remain supported on $H$. Therefore, taking the limit of the trajectory (for an infinite number of iterations) should be a critical point of $\cF$, still supported on $H$.

\begin{figure*}[ht]
    \vskip 0.2in
    \begin{center}
        \centerline{\includegraphics[width=\textwidth]{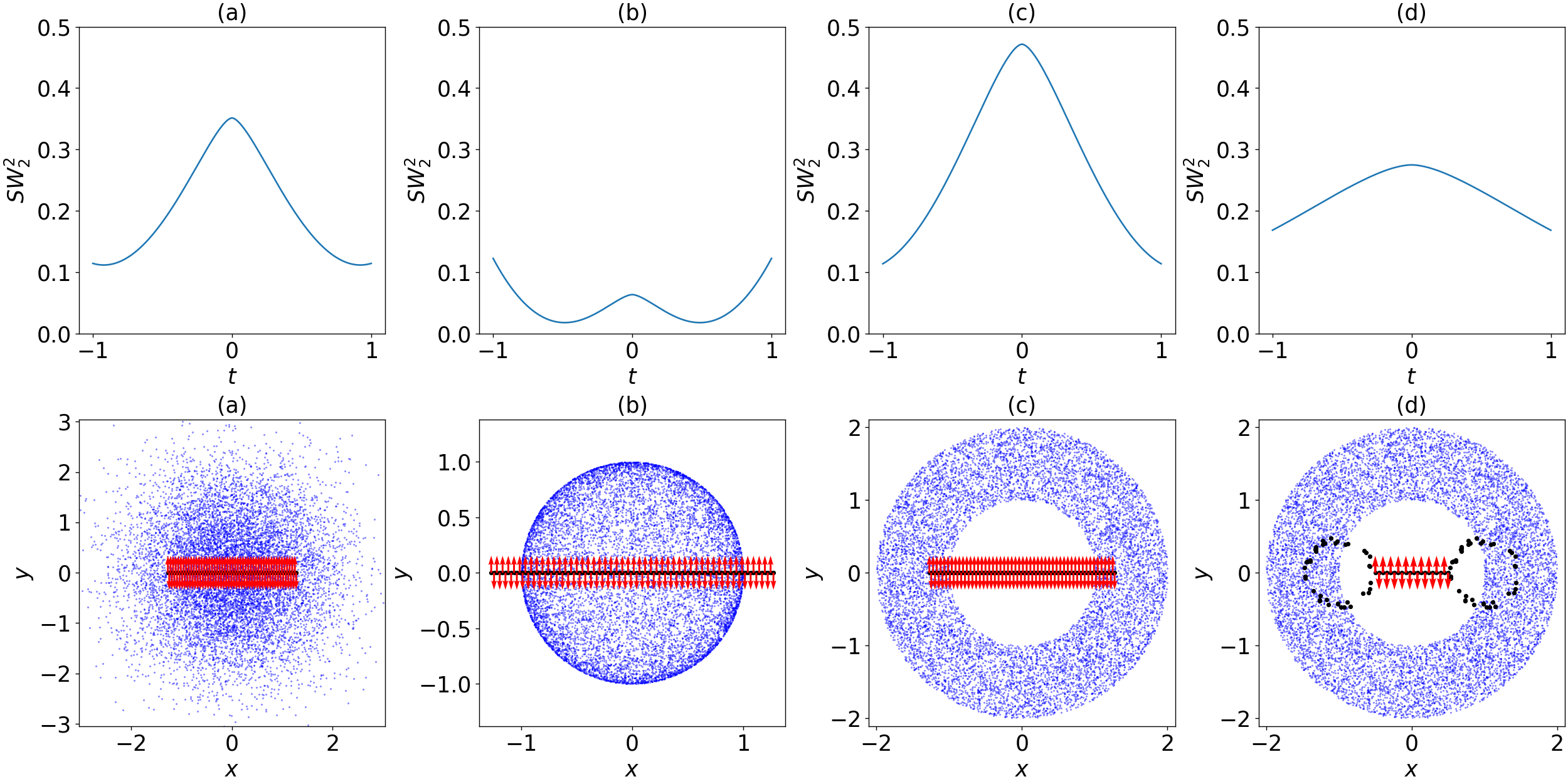}}
        \caption{Instability of measures containing an horizontal segment. On the top line are plotted the value $\SW^2_2(\mu^t, \rho)$ for different measures $\mu$, $\rho$ and perturbations $\xi$. On the bottom line are depictions of the different $\mu$ (black points), $\rho$ (approximated by the blue points) and $\xi$ (red arrows). Columns (a) and (b): $\mu$ is a point cloud of $N = 100$ points uniformly distributed on the segment $[-4/\pi,4/\pi] \times \{0\}$, $\xi$ alternates between $e_2$ and $-e_2$, and $\rho$ is the normal (a) and sliced-uniform distribution (see \Cref{prop:ex_symmetric_crit_points}) (b). Column (c): Same $\mu$ and $\xi$, and this time $\rho$ is the uniform measure on the shell $C(0,1,2)$. Column (d) : $\rho$ is again the shell, and $\mu$ is a point cloud with a "dumbbell-like" shape, whose central segment is perturbed similarly as in (a),(b),(c).}
        \label{fig:1}
    \end{center}
    \vskip -0.2in
\end{figure*}

\subsection{Some explicit unstable critical points} \label{section:unstable_points}

Previously, we highlighted critical points that are supported on a subset of $\R^d$, for a target distribution that is full-dimensional. This is problematic because our gradient algorithm may be stuck at these critical points, which are typically at a high level in the energy landscape. We now investigate their stability, as gradient descent is unlikely to get stuck at unstable critical points, with the aim of showing that such points do not appear in practice.

We will focus on a particular case of unstable behavior. We will restrict ourselves to the case $d = 2$, and we will show that when the target measure $\rho$ is absolutely continuous, measures $\mu$ that contain a part supported on a segment are not stable for $\SW^2_2$ when perturbed in a certain way. 

\begin{proposition} \label{prop:examples_unstable}
    Let $\rho \in \cP_2(\R^2)$ be an absolutely continuous measure, such that the densities of its projections $\rho_\theta$ are uniformly bounded from above by $b > 0$. Let $\mu \in \cP_2(\R^2)$ be any measure such that there exists a segment $S \subseteq \R^2$ and $a > 0$ such that $a\cH^1_{|S} \leq \mu$. Then, if $\mu^t$ is the perturbation
    \begin{equation}
        \mu^t := \frac{1}{2} (\tau_{-t\vec{n}\#}\mu + \tau_{t\vec{n}\#}\mu)
    \end{equation}
    where $\tau_{\vec{a}}$ is the translation by $\vec{a} \in \R^2$ and $\vec{n} \in \bS^1$ is orthogonal to $S$, then the perturbation $\mu_t$ is unstable for $\SW^2_2(\cdot, \rho)$: that is, for any $C > 0$, there exists a neighborhood $(-\varepsilon,\varepsilon)$ of $t=0$ in which
    \begin{equation}
        \SW^2_2(\mu^t, \rho) \leq \SW^2_2(\mu, \rho) - Ct^2.
    \end{equation}
\end{proposition}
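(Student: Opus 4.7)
The plan is to reduce the 2D problem to a one-dimensional slice-wise analysis and exhibit a super-quadratic decrease stemming from a logarithmic divergence in the contribution of directions nearly orthogonal to the segment $S$. After normalizing so that $\vec n = e_2$ and $S = [-Le_1, Le_1]$, parametrize $\theta \in \bS^1$ by the angle $\phi$ it makes with $\vec n$. On the direction $\theta$, the 2D perturbation $\mu^t$ becomes the convolution of $\mu_\theta := P_{\theta\#}\mu$ with $\eta_{s_\phi} := \tfrac12(\delta_{-s_\phi}+\delta_{s_\phi})$ where $s_\phi := t|\cos\phi|$; meanwhile the segment component $a\cH^1_{|S}$ projects to a ``spike'': uniform density $h_\phi := a/|\sin\phi|$ on $[A_\phi,B_\phi]$ of length $2L|\sin\phi|$. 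By definition,
\[\SW^2_2(\mu^t,\rho) - \SW^2_2(\mu,\rho) = \int_{\bS^1}\bigl[\W^2_2(\mu_\theta * \eta_{s_\phi},\rho_\theta) - \W^2_2(\mu_\theta,\rho_\theta)\bigr]\,d\theta.\]

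For directions $\theta$ in the \emph{non-splitting} regime $s_\phi \leq L|\sin\phi|$, I would use the 1D quantile representation to expand the integrand. Setting $g := F^{-1}_{\mu_\theta} - F^{-1}_{\rho_\theta}$ and $\delta := F^{-1}_{\mu_\theta * \eta_{s_\phi}} - F^{-1}_{\mu_\theta}$, we have $\W^2_2(\mu_\theta*\eta_{s_\phi},\rho_\theta) - \W^2_2(\mu_\theta,\rho_\theta) = 2\ps{\delta,g}_{L^2(0,1)} + \|\delta\|^2_{L^2(0,1)}$. Analysing how $\eta_{s_\phi}$ smooths the density jumps of $f_{\mu_\theta}$ of size $\pm h_\phi$ at $A_\phi$ and $B_\phi$ (whose magnitudes are independent of the background coming from $\mu' := \mu - a\cH^1_{|S}$), one finds that $\delta$ is supported on two short $u$-intervals near $u_A := F_{\mu_\theta}(A_\phi)$ and $u_B := F_{\mu_\theta}(B_\phi)$, with universal triangular ``mass'' $\int \delta\,du \approx -\tfrac12 h_\phi s_\phi^2$ around $u_A$ and $+\tfrac12 h_\phi s_\phi^2$ around $u_B$. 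This yields the leading-order expansion
\[\W^2_2(\mu_\theta * \eta_{s_\phi},\rho_\theta) - \W^2_2(\mu_\theta,\rho_\theta) \approx h_\phi s_\phi^2\bigl(g(u_B)-g(u_A)\bigr),\]
with remainder $O(h_\phi s_\phi^3) + \|\delta\|^2$.

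The heart of the argument is controlling the sign of $g(u_B)-g(u_A)$: we have $F^{-1}_{\mu_\theta}(u_B) - F^{-1}_{\mu_\theta}(u_A) = 2L|\sin\phi|$, while $\rho_\theta \leq b$ combined with $u_B - u_A \geq 2aL$ (which follows from $\mu \geq a\cH^1_{|S}$) gives $F^{-1}_{\rho_\theta}(u_B) - F^{-1}_{\rho_\theta}(u_A) \geq (u_B-u_A)/b \geq 2aL/b$. Hence $g(u_B) - g(u_A) \leq 2L|\sin\phi| - 2aL/b \leq -La/b$ whenever $|\sin\phi| \leq a/(2b)$, making the integrand $\lesssim -(La^2/b)\cdot t^2\cos^2\phi/|\sin\phi|$ on this range. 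Integrating over the $\phi$-range $[\arctan(t/L),\arcsin(a/(2b))]$ where both the non-splitting and the sign conditions are active, the factor $1/|\sin\phi|$ produces a logarithmic divergence: $\int d\phi/|\sin\phi| \asymp \log(1/t)$. Together with the $O(t^2)$ contributions from the splitting regime $|\phi| < \arctan(t/L)$, from $|\phi|$ near $\pi/2$, and from the higher-order error $O(h_\phi s_\phi^3)$, we conclude
\[\SW^2_2(\mu^t,\rho) - \SW^2_2(\mu,\rho) \leq -Ct^2\log(1/t) + O(t^2)\]
for some $C > 0$. Since $\log(1/t) \to +\infty$ as $t\to 0$, for any $C'>0$ the bound $\SW^2_2(\mu^t,\rho) \leq \SW^2_2(\mu,\rho) - C't^2$ holds for $|t|$ small enough.

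The main technical obstacle is making the leading-order 1D expansion uniform in $\phi$: since $h_\phi = a/|\sin\phi|$ blows up as $\phi \to 0$, the remainder estimates must be tracked with explicit $\phi$-dependence so that they integrate to $O(t^2)$ rather than $O(t^2\log(1/t))$. A helpful feature is that $\int \delta\,du$ near a CDF jump is determined purely by the jump size $h_\phi$ and the shift $s_\phi$, independently of the (possibly irregular) background density contributed by $\mu'_\theta$ near $A_\phi,B_\phi$; once this uniform control is secured, the remaining bookkeeping to isolate the $t^2\log(1/t)$ behaviour is routine.
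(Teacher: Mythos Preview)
Your high-level mechanism is exactly right and matches the paper: the super-quadratic decrease comes from a $t^2\log(1/t)$ term produced by integrating a factor of order $1/|\sin\phi|$ (or $1/|\cos\theta|$ in the paper's parametrization) over directions close to the normal $\vec n$, and the correct sign is forced by the density bound $\rho_\theta\le b$, which makes $F^{-1}_{\rho_\theta}$ expand by at least $2aL/b$ over the quantile interval carrying the segment while the segment itself only spans $2L|\sin\phi|$.

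The gap is in your treatment of $\delta=F^{-1}_{\mu_\theta*\eta_{s_\phi}}-F^{-1}_{\mu_\theta}$ for the \emph{full} measure $\mu_\theta$. The claim that $\delta$ is supported on two short intervals near $u_A,u_B$ is false once the background $\mu'=\mu-a\cH^1_{|S}$ is nontrivial: convolution with $\eta_{s_\phi}$ modifies the density wherever it is non-affine, and if $\mu'_\theta$ has atoms then $\delta$ is of order $s_\phi$ (not $s_\phi^2$) on intervals of fixed length. Since quantile functions are nonlinear in the measure, you cannot split $\delta=\delta_{\mathrm{jump}}+\delta_{\mathrm{bg}}$ and treat the two pieces separately; the ``mass'' of $\delta$ near $u_A,u_B$ is \emph{not} determined by $h_\phi,s_\phi$ alone. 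You flag this as the main obstacle but do not provide a mechanism to control it, and without one the leading-order identification $2\langle\delta,g\rangle\approx h_\phi s_\phi^2(g(u_B)-g(u_A))$ is unjustified.

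The paper circumvents this entirely by decomposing at the level of transport plans rather than quantiles. Writing $\mu=(1-\lambda)\mu_0+\lambda\mu_1$ with $\mu_1$ the pure (normalized) segment, it disintegrates the optimal plan $\hat\gamma_\theta\in\Pi(\mu,\rho)$ along this splitting to obtain $\rho=(1-\lambda)\rho_{\theta,0}+\lambda\rho_{\theta,1}$ with $\rho_{\theta,1,\theta}\le b/\lambda$, yielding
\[
\W_2^2(\mu^t_\theta,\rho_\theta)\le(1-\lambda)\W_2^2(\mu^t_{0,\theta},\rho_{\theta,0,\theta})+\lambda\W_2^2(\mu^t_{1,\theta},\rho_{\theta,1,\theta}),
\]
with equality at $t=0$. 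The background term is then disposed of by the trivial bound $\W_2^2(\mu^t_{0,\theta},\rho_{\theta,0,\theta})\le\W_2^2(\mu_{0,\theta},\rho_{\theta,0,\theta})+s_\phi^2$, contributing only $O(t^2)$, while the segment term involves $\mu_{1,\theta}^t$ whose quantile function is \emph{explicit} and piecewise affine. This is the missing idea: it cleanly separates the two contributions before passing to quantiles, so that all the delicate analysis happens on the pure segment where your heuristics become exact computations.
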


\begin{figure*}[ht]
    \vskip 0.2in 
    \begin{center}
        \centerline{\includegraphics[width=\linewidth]{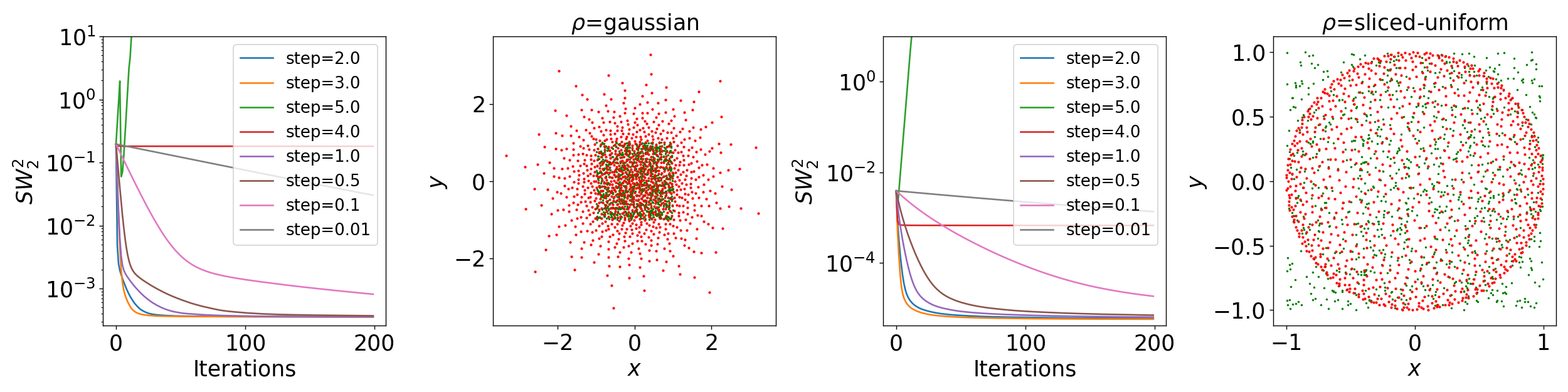}}
        \caption{Gradient descent of $\SW^2_2$. On a point cloud of $N = 1000$ points for different choices of step-size and $\rho$. Left : convergence speed of gradient descent, where $\rho$ is the normal distribution, for different step-sizes (given in multiples of $N$ in the legend). Center left : Initial point cloud (in green), sampled uniformly in $[-1,1]^2$, and final point cloud (in red) after $200$ iterations with step-size $\lambda=2N$. Center right and right : same as respectively the left and center left images, but with $\rho$ the sliced-uniform measure (see \Cref{prop:ex_symmetric_crit_points}).}
         \label{fig:2main}
    \end{center}
\end{figure*}

The proof of \Cref{prop:examples_unstable} is deferred to \Cref{sec:proof_examples_unstable}. Our \Cref{prop:examples_unstable} proves that critical points as described therein, are highly unstable. Indeed, we do not have a Taylor expansion $\SW^2_2(\mu^t, \rho) = \SW^2_2(\mu, \rho) + at + \frac{1}{2} b t^2 + o(t^2)$ with $a = 0$ and $b < 0$. Instead, the inequality $\SW^2_2(\mu^t, \rho) \leq \SW^2_2(\mu, \rho) - Ct^2$ is true for \textit{any} $C > 0$ provided that $t$ is close enough to $0$. In particular, this implies that $\SW^2_2(\mu^t, \rho)$ is not twice differentiable at $t = 0$. Hence, while the SW flow may exhibit critical points that are not global minimizers, they may be unstable in general. Our result proves this in the case where the target contains a segment.

On the other hand, the perturbation $\mu^t$ used in Proposition \ref{prop:examples_unstable} is not of the form $(\Id+t\xi)_\#\mu$, and thus does not fit in our previously defined framework of Lagrangian critical points. However, this result suggests that by taking a $L^2$ vector field $\xi$ which alternates rapidly between $\vec{n}$ and $-\vec{n}$ on the segment $S$, then $\SW^2_2((\Id+t\xi)_\#\mu,\rho)$ will also have a local maximum at $t = 0$ (see for example the numerical experiments in \Cref{fig:1} below).

Note that the proof of \Cref{prop:examples_unstable} makes heavy use of the properties of the segment, among which that the existence of a relatively simple closed form of the quantile functions of the projections are available. In general, it is difficult to describe how the quantile functions of the projections behave when considering general measures and perturbations.

\section{Experiments}\label{sec:experiments}

This section presents the results of our experiments, designed to examine the extent to which the theoretical findings from the previous sections hold in practice\footnote{Code available at \url{https://github.com/cvauthier/Critical-Points-of-Sliced-Wasserstein}}. 

In the experiments, $F(X)$ is approximated by taking the average of 1D Wasserstein distances over $L = 100$ directions, and by approximating $\rho$ with a point cloud $Y$ containing $M = 10000$ points. 

\paragraph{Instability of critical points.} \label{paragraph:instability_experiments}
First, we considered a point cloud $X = (X_1,...,X_N)$ with $X_i = -\frac{4}{\pi} + \frac{8}{\pi}\frac{i-1}{N-1}$, with $N = 100$, that approximates the measure $\mu = \frac{\pi}{8} \cH^1_{|[-\frac{4}{\pi},\frac{4}{\pi}]}$ that was studied in \Cref{sec:lower_dim_crit_points}. We considered a perturbation $\xi$ that alternates between $e_2$ and $-e_2$ and we plotted $t \mapsto F(X + t\xi) = \SW^2_2(\mu_X^t, \rho)$ in Figure \ref{fig:1} for different choices of $\rho$. We see that the numerical results are consistent with our theoretical findings: indeed, we have a local maximum for all three considered target measures. Furthermore, when $X$ is a point cloud with a more complex shape but which includes an horizontal segment, we still observe an instability by perturbing the segment and leaving the other points of the point cloud unchanged. Moreover, while the perturbation considered in \Cref{prop:examples_unstable} is not induced by a vector field $\xi$, those in these experiments are, and they do exhibit an instability. This suggests that, if we approximate the perturbation in \Cref{prop:examples_unstable} closely enough with a vector field that alternates between $\vec{n}$ and $-\vec{n}$, we could obtain a unstable perturbation of the form $(\Id+t\xi)_\#\mu$, which would fit in our framework of Lagrangian critical points.

\paragraph{Gradient descent.}
We also investigated the convergence speed of the gradient descent for $\SW^2_2$ for different choices of step-sizes, as shown in Figure~\ref{fig:2main}. We observe that choosing step-sizes close to $\lambda = dN$ (here $d=2$), as justified in \Cref{section:sw_discrete} does indeed yield a important decrease of the loss at the first few iterations, while lower step-sizes result in slower convergence of the descent, and step-sizes larger than $2dN$ (the threshold above which \Cref{prop:descent_lemma} stops applying) result in divergence of the descent.

\section{Conclusion}

In this work, we have studied critical points of SW objectives with respect to a probability measure $\rho$,  by leveraging the notion of Lagrangian critical points in the space of measures. We provided a detailed analysis of the critical points of a flow associated with a non-convex objective distance, in contrast with most of the literature that primarily deals with convex ones or that uses functional inequalities.   

One limitation of our study is that, while we have defined our framework of Lagrangian critical points for all measures $\mu, \nu \in \cP_2(\R^d)$, most of our results require the target $\rho$ to be without atoms (in \Cref{section:sw_discrete}, $\rho$ is assumed to be a density, but, as pointed out in \Cref{rk:extensions_sw_discrete}, most of its results can be extended to $\rho$ without atoms). This can limit the applicability of our results to machine learning applications where one often has to work with discrete targets $\rho$. However, our assumptions are sufficient to allow us to tackle many types of singular measures which arise in machine learning and generative modeling, such as densities supported on a lower dimensional manifold of $\R^d$ (which are not absolutely continuous but are without atoms). Furthermore, the fact that our numerical experiments, in which the target measures were discretized, exhibit the behaviors of convergence and instability that our theoretical analysis highlighted, suggests that our results should still be relevant in the cases where the target measure is approximated by a discrete measure. Another limitation is that our main instability result, \Cref{prop:examples_unstable}, only holds in dimension $d=2$ and involves a perturbation which is technically outside our framework of Lagrangian critical points. Generalizing this result to higher dimensions or exhibiting more general unstable Lagrangian critical points could be an avenue for future work.

Finally, many important open questions about critical points of SW remain. First, is it possible to prove that any Wasserstein or Lagrangian critical point $\mu$ of $\cF = \frac12 \SW^2_2(\cdot,\rho)$ which is absolutely continuous must be equal to $\rho$ ? Theorem 4.1 in \citep{cozzi2024long} gives a (very) partial answer to this question: it implies in particular that if $\rho$ is a standard Gaussian and if $\mu$ has finite entropy, then $\mu=\rho$. Second, can we get a better understanding of stable critical points? There exists finitely supported stable critical points (e.g. the global minimizers of the discretized energy) and we have shown in \Cref{prop:examples_unstable} that stable critical points cannot contain a segment. More generally, one could hope to show that any stable critical point $\mu$ of $\cF$ which is atomless must be equal to $\rho$.
Third, we note that there exists other proxies of the Wasserstein-$p$ distances based on 1-dimensional projections, such as Max-sliced Wasserstein \cite{deshpande2019max}, SW distances with respect to other probability measures on the unit sphere \cite{nguyen2024energy,rowland2019orthogonal,mahey2024fast}. Extending our study to these variants of SW is the topic of future research. 

\section*{Acknowlegements} QM and AK  acknowledge the support of the Agence nationale de la recherche, through the PEPR PDE-AI project (ANR-23-PEIA-0004). CV acknowledges the support of Région Île-de-France through the DIM AI4IDF project.

\section*{Impact Statement}

This paper presents work whose goal is to advance the field
of Machine Learning. There are many potential societal
consequences of our work, none which we feel must be
specifically highlighted here.

\bibliography{biblio}

\begin{thebibliography}{48}
\providecommand{\natexlab}[1]{#1}
\providecommand{\url}[1]{\texttt{#1}}
\expandafter\ifx\csname urlstyle\endcsname\relax
  \providecommand{\doi}[1]{doi: #1}\else
  \providecommand{\doi}{doi: \begingroup \urlstyle{rm}\Url}\fi

\bibitem[Ambrosio et~al.(2005)Ambrosio, Gigli, and Savar{\'e}]{ambrosio2005gradient}
Ambrosio, L., Gigli, N., and Savar{\'e}, G.
\newblock \emph{Gradient flows: in metric spaces and in the space of probability measures}.
\newblock Springer Science \& Business Media, 2005.

\bibitem[Arjovsky et~al.(2017)Arjovsky, Chintala, and Bottou]{arjovsky2017wasserstein}
Arjovsky, M., Chintala, S., and Bottou, L.
\newblock {W}asserstein generative adversarial networks.
\newblock In Precup, D. and Teh, Y.~W. (eds.), \emph{Proceedings of the 34th International Conference on Machine Learning}, volume~70 of \emph{Proceedings of Machine Learning Research}, pp.\  214--223. PMLR, 06--11 Aug 2017.

\bibitem[Birrell et~al.(2022)Birrell, Dupuis, Katsoulakis, Pantazis, and Rey-Bellet]{birrell2022f}
Birrell, J., Dupuis, P., Katsoulakis, M.~A., Pantazis, Y., and Rey-Bellet, L.
\newblock (f, gamma)-divergences: Interpolating between f-divergences and integral probability metrics.
\newblock \emph{Journal of machine learning research}, 23\penalty0 (39):\penalty0 1--70, 2022.

\bibitem[Blei et~al.(2017)Blei, Kucukelbir, and McAuliffe]{blei2017variational}
Blei, D.~M., Kucukelbir, A., and McAuliffe, J.~D.
\newblock {Variational Inference: A Review for Statisticians}.
\newblock \emph{Journal of the American statistical Association}, 112\penalty0 (518):\penalty0 859--877, 2017.

\bibitem[Bond-Taylor et~al.(2021)Bond-Taylor, Leach, Long, and Willcocks]{bond2021deep}
Bond-Taylor, S., Leach, A., Long, Y., and Willcocks, C.~G.
\newblock {Deep Generative Modelling: A Comparative Review of VAEs, GANs, Normalizing Flows, Energy-Based and Autoregressive Models}.
\newblock \emph{IEEE transactions on pattern analysis and machine intelligence}, 44\penalty0 (11):\penalty0 7327--7347, 2021.

\bibitem[Bonneel et~al.(2015)Bonneel, Rabin, Peyr{\'e}, and Pfister]{bonneel2015sliced}
Bonneel, N., Rabin, J., Peyr{\'e}, G., and Pfister, H.
\newblock Sliced and radon wasserstein barycenters of measures.
\newblock \emph{Journal of Mathematical Imaging and Vision}, 51:\penalty0 22--45, 2015.

\bibitem[Bonnotte(2013)]{bonnotte2013unidimensional}
Bonnotte, N.
\newblock \emph{Unidimensional and evolution methods for optimal transportation}.
\newblock PhD thesis, Universit{\'e} Paris Sud-Paris XI; Scuola normale superiore (Pise, Italie), 2013.

\bibitem[Bourne et~al.(2020)Bourne, Kok, Roper, and Spanjer]{bourne2020laguerre}
Bourne, D.~P., Kok, P.~J., Roper, S.~M., and Spanjer, W.~D.
\newblock Laguerre tessellations and polycrystalline microstructures: a fast algorithm for generating grains of given volumes.
\newblock \emph{Philosophical Magazine}, 100\penalty0 (21):\penalty0 2677--2707, 2020.

\bibitem[Cai et~al.(2024{\natexlab{a}})Cai, Modi, Margossian, Gower, Blei, and Saul]{cai2024eigenvi}
Cai, D., Modi, C., Margossian, C., Gower, R., Blei, D., and Saul, L.
\newblock {EigenVI: score-based variational inference with orthogonal function expansions}.
\newblock \emph{Advances in Neural Information Processing Systems}, 37:\penalty0 132691--132721, 2024{\natexlab{a}}.

\bibitem[Cai et~al.(2024{\natexlab{b}})Cai, Modi, Pillaud-Vivien, Margossian, Gower, Blei, and Saul]{cai2024batch}
Cai, D., Modi, C., Pillaud-Vivien, L., Margossian, C.~C., Gower, R.~M., Blei, D.~M., and Saul, L.~K.
\newblock Batch and match: black-box variational inference with a score-based divergence.
\newblock In \emph{International Conference on Machine Learning}, 2024{\natexlab{b}}.

\bibitem[Chizat \& Bach(2018)Chizat and Bach]{chizat2018global}
Chizat, L. and Bach, F.
\newblock {On the Global Convergence of Gradient Descent for Over-parameterized Models using Optimal Transport}.
\newblock \emph{Advances in neural information processing systems}, 31, 2018.

\bibitem[Choi et~al.(2024)Choi, Choi, and Kang]{choi2024scalable}
Choi, J., Choi, J., and Kang, M.
\newblock Scalable wasserstein gradient flow for generative modeling through unbalanced optimal transport.
\newblock In \emph{International Conference on Machine Learning}, 2024.

\bibitem[Cozzi \& Santambrogio(2024)Cozzi and Santambrogio]{cozzi2024long}
Cozzi, G. and Santambrogio, F.
\newblock Long-time asymptotics of the sliced-wasserstein flow.
\newblock \emph{SIAM J. Im. Sciences}, 2024.
\newblock URL \url{http://cvgmt.sns.it/paper/6495/}.
\newblock cvgmt preprint.

\bibitem[Dai \& Seljak(2021)Dai and Seljak]{dai2020sliced}
Dai, B. and Seljak, U.
\newblock Sliced iterative normalizing flows.
\newblock \emph{Proceedings of Machine Learning Research}, 2021.

\bibitem[Deshpande et~al.(2019)Deshpande, Hu, Sun, Pyrros, Siddiqui, Koyejo, Zhao, Forsyth, and Schwing]{deshpande2019max}
Deshpande, I., Hu, Y.-T., Sun, R., Pyrros, A., Siddiqui, N., Koyejo, S., Zhao, Z., Forsyth, D., and Schwing, A.~G.
\newblock Max-sliced wasserstein distance and its use for gans.
\newblock In \emph{Proceedings of the IEEE/CVF conference on computer vision and pattern recognition}, pp.\  10648--10656, 2019.

\bibitem[Du et~al.(2023)Du, Li, Pang, Yan, and Lin]{du2023nonparametric}
Du, C., Li, T., Pang, T., Yan, S., and Lin, M.
\newblock Nonparametric generative modeling with conditional sliced-wasserstein flows.
\newblock In \emph{International Conference on Machine Learning (ICML)}, 2023.

\bibitem[Fan et~al.(2022)Fan, Zhang, Taghvaei, and Chen]{fan2022}
Fan, J., Zhang, Q., Taghvaei, A., and Chen, Y.
\newblock Variational wasserstein gradient flow.
\newblock In \emph{International Conference on Machine Learning}, 2022.

\bibitem[Fisher et~al.(2021)Fisher, Nolan, Graham, Prangle, and Oates]{fisher2021measure}
Fisher, M., Nolan, T., Graham, M., Prangle, D., and Oates, C.
\newblock Measure transport with kernel stein discrepancy.
\newblock In \emph{International Conference on Artificial Intelligence and Statistics}, pp.\  1054--1062. PMLR, 2021.

\bibitem[Genevay et~al.(2018)Genevay, Peyr{\'e}, and Cuturi]{genevay2018learning}
Genevay, A., Peyr{\'e}, G., and Cuturi, M.
\newblock Learning generative models with sinkhorn divergences.
\newblock In \emph{International Conference on Artificial Intelligence and Statistics}, pp.\  1608--1617. PMLR, 2018.

\bibitem[Goodfellow et~al.(2020)Goodfellow, Pouget-Abadie, Mirza, Xu, Warde-Farley, Ozair, Courville, and Bengio]{goodfellow2020generative}
Goodfellow, I., Pouget-Abadie, J., Mirza, M., Xu, B., Warde-Farley, D., Ozair, S., Courville, A., and Bengio, Y.
\newblock Generative adversarial networks.
\newblock \emph{Communications of the ACM}, 63\penalty0 (11):\penalty0 139--144, 2020.

\bibitem[Hertrich et~al.(2024)Hertrich, Wald, Altekr{\"u}ger, and Hagemann]{hertrich2024generative}
Hertrich, J., Wald, C., Altekr{\"u}ger, F., and Hagemann, P.
\newblock Generative sliced {MMD} flows with {R}iesz kernels.
\newblock In \emph{International Conference of Learning Representations}, 2024.

\bibitem[Kolouri et~al.(2018)Kolouri, Pope, Martin, and Rohde]{kolouri2018sliced}
Kolouri, S., Pope, P.~E., Martin, C.~E., and Rohde, G.~K.
\newblock Sliced wasserstein auto-encoders.
\newblock In \emph{International Conference on Learning Representations}, 2018.

\bibitem[Korba et~al.(2021)Korba, Aubin-Frankowski, Majewski, and Ablin]{korba2021kernel}
Korba, A., Aubin-Frankowski, P.-C., Majewski, S., and Ablin, P.
\newblock Kernel stein discrepancy descent.
\newblock In \emph{International Conference on Machine Learning}, pp.\  5719--5730. PMLR, 2021.

\bibitem[Li et~al.(2017)Li, Chang, Cheng, Yang, and P{\'o}czos]{li2017mmd}
Li, C.-L., Chang, W.-C., Cheng, Y., Yang, Y., and P{\'o}czos, B.
\newblock Mmd gan: Towards deeper understanding of moment matching network.
\newblock \emph{Advances in neural information processing systems}, 30, 2017.

\bibitem[Li \& Moosmueller(2025)Li and Moosmueller]{li2025measuretransferstochasticslicing}
Li, S. and Moosmueller, C.
\newblock Measure transfer via stochastic slicing and matching, 2025.
\newblock URL \url{https://arxiv.org/abs/2307.05705}.

\bibitem[Liutkus et~al.(2019)Liutkus, Simsekli, Majewski, Durmus, and St{\"o}ter]{liutkus2019sliced}
Liutkus, A., Simsekli, U., Majewski, S., Durmus, A., and St{\"o}ter, F.-R.
\newblock Sliced-wasserstein flows: Nonparametric generative modeling via optimal transport and diffusions.
\newblock In \emph{International Conference on Machine Learning}, pp.\  4104--4113. PMLR, 2019.

\bibitem[Mahey et~al.(2024)Mahey, Chapel, Gasso, Bonet, and Courty]{mahey2024fast}
Mahey, G., Chapel, L., Gasso, G., Bonet, C., and Courty, N.
\newblock Fast optimal transport through sliced generalized wasserstein geodesics.
\newblock \emph{Advances in Neural Information Processing Systems}, 36, 2024.

\bibitem[Manole et~al.(2022)Manole, Balakrishnan, and Wasserman]{manole2022minimax}
Manole, T., Balakrishnan, S., and Wasserman, L.
\newblock Minimax confidence intervals for the sliced wasserstein distance.
\newblock \emph{Electronic Journal of Statistics}, 16\penalty0 (1):\penalty0 2252--2345, 2022.

\bibitem[Mei et~al.(2018)Mei, Montanari, and Nguyen]{mei2018mean}
Mei, S., Montanari, A., and Nguyen, P.-M.
\newblock {A Mean Field View of the Landscape of Two-Layer Neural Networks}.
\newblock \emph{Proceedings of the National Academy of Sciences}, 115\penalty0 (33):\penalty0 E7665--E7671, 2018.

\bibitem[M{\'e}rigot et~al.(2021)M{\'e}rigot, Santambrogio, and Sarrazin]{Mrigot2021NonasymptoticCB}
M{\'e}rigot, Q., Santambrogio, F., and Sarrazin, C.
\newblock Non-asymptotic convergence bounds for wasserstein approximation using point clouds.
\newblock In \emph{Neural Information Processing Systems}, 2021.
\newblock URL \url{https://api.semanticscholar.org/CorpusID:235436063}.

\bibitem[Nadjahi et~al.(2020)Nadjahi, Durmus, Chizat, Kolouri, Shahrampour, and Simsekli]{nadjahi2020}
Nadjahi, K., Durmus, A., Chizat, L., Kolouri, S., Shahrampour, S., and Simsekli, U.
\newblock Statistical and topological properties of sliced probability divergences.
\newblock \emph{Advances in Neural Information Processing Systems}, 33:\penalty0 20802--20812, 2020.

\bibitem[Nguyen \& Ho(2024)Nguyen and Ho]{nguyen2024energy}
Nguyen, K. and Ho, N.
\newblock Energy-based sliced wasserstein distance.
\newblock \emph{Advances in Neural Information Processing Systems}, 36, 2024.

\bibitem[Nietert et~al.(2022)Nietert, Goldfeld, Sadhu, and Kato]{nietert2022statistical}
Nietert, S., Goldfeld, Z., Sadhu, R., and Kato, K.
\newblock Statistical, robustness, and computational guarantees for sliced wasserstein distances.
\newblock \emph{Advances in Neural Information Processing Systems}, 35:\penalty0 28179--28193, 2022.

\bibitem[Panageas et~al.(2019)Panageas, Piliouras, and Wang]{lee2019firstOrderMethods}
Panageas, I., Piliouras, G., and Wang, X.
\newblock First-order methods almost always avoid saddle points: The case of vanishing step-sizes.
\newblock In Wallach, H., Larochelle, H., Beygelzimer, A., d\textquotesingle Alch\'{e}-Buc, F., Fox, E., and Garnett, R. (eds.), \emph{Advances in Neural Information Processing Systems}, volume~32. Curran Associates, Inc., 2019.
\newblock URL \url{https://proceedings.neurips.cc/paper_files/paper/2019/file/3fb04953d95a94367bb133f862402bce-Paper.pdf}.

\bibitem[Peyr{\'e} et~al.(2019)Peyr{\'e}, Cuturi, et~al.]{peyre2019computational}
Peyr{\'e}, G., Cuturi, M., et~al.
\newblock Computational optimal transport: With applications to data science.
\newblock \emph{Foundations and Trends{\textregistered} in Machine Learning}, 11\penalty0 (5-6):\penalty0 355--607, 2019.

\bibitem[Portales et~al.(2024)Portales, Cazelles, and Pauwels]{portales2024}
Portales, L., Cazelles, E., and Pauwels, E.
\newblock {On the sequential convergence of Lloyd's algorithms}.
\newblock working paper or preprint, May 2024.
\newblock URL \url{https://hal.science/hal-04593982}.

\bibitem[Rabin et~al.(2012)Rabin, Peyr{\'e}, Delon, and Bernot]{rabin2012wasserstein}
Rabin, J., Peyr{\'e}, G., Delon, J., and Bernot, M.
\newblock Wasserstein barycenter and its application to texture mixing.
\newblock In \emph{Scale Space and Variational Methods in Computer Vision: Third International Conference, SSVM 2011, Ein-Gedi, Israel, May 29--June 2, 2011, Revised Selected Papers 3}, pp.\  435--446. Springer, 2012.

\bibitem[Rachev \& R{\"u}schendorf(1998)Rachev and R{\"u}schendorf]{rachev1998mass}
Rachev, S.~T. and R{\"u}schendorf, L.
\newblock \emph{Mass transportation problems: Volume I: theory}, volume~1.
\newblock Springer Science \& Business Media, 1998.

\bibitem[Rowland et~al.(2019)Rowland, Hron, Tang, Choromanski, Sarlos, and Weller]{rowland2019orthogonal}
Rowland, M., Hron, J., Tang, Y., Choromanski, K., Sarlos, T., and Weller, A.
\newblock Orthogonal estimation of wasserstein distances.
\newblock In \emph{The 22nd International Conference on Artificial Intelligence and Statistics}, pp.\  186--195. PMLR, 2019.

\bibitem[Salim et~al.(2020)Salim, Korba, and Luise]{salim2020wasserstein}
Salim, A., Korba, A., and Luise, G.
\newblock {The Wasserstein Proximal Gradient Algorithm}.
\newblock \emph{Advances in Neural Information Processing Systems}, 33:\penalty0 12356--12366, 2020.

\bibitem[Santambrogio(2015)]{santambrogio2015optimal}
Santambrogio, F.
\newblock Optimal transport for applied mathematicians.
\newblock \emph{Birk{\"a}user, NY}, 55\penalty0 (58-63):\penalty0 94, 2015.

\bibitem[Sarrazin(2022)]{sarrazin:tel-03585897}
Sarrazin, C.
\newblock \emph{{Lagrangian discretization of variational problems in Wasserstein spaces}}.
\newblock Theses, {Universit{\'e} Paris-Saclay}, January 2022.
\newblock URL \url{https://theses.hal.science/tel-03585897}.

\bibitem[Tanguy et~al.(2024{\natexlab{a}})Tanguy, Flamary, and Delon]{tanguy2024discrete_sw_losses}
Tanguy, E., Flamary, R., and Delon, J.
\newblock Properties of discrete sliced {W}asserstein losses.
\newblock \emph{Mathematics of Computation}, June 2024{\natexlab{a}}.
\newblock URL \url{https://www.ams.org/journals/mcom/0000-000-00/S0025-5718-2024-03994-7/}.

\bibitem[Tanguy et~al.(2024{\natexlab{b}})Tanguy, Flamary, and Delon]{tanguy2024reconstructing}
Tanguy, E., Flamary, R., and Delon, J.
\newblock Reconstructing discrete measures from projections. consequences on the empirical sliced {W}asserstein distance.
\newblock \emph{Comptes Rendus. Math\'ematique}, 362:\penalty0 1121--1129, 2024{\natexlab{b}}.
\newblock \doi{10.5802/crmath.601}.
\newblock URL \url{https://comptes-rendus.academie-sciences.fr/mathematique/articles/10.5802/crmath.601/}.

\bibitem[Villani(2008)]{villani2008OldNew}
Villani, C.
\newblock \emph{Optimal transport -- Old and new}, volume 338.
\newblock Springer Berlin, Heidelberg, 01 2008.

\bibitem[Wibisono(2018)]{wibisono2018sampling}
Wibisono, A.
\newblock {Sampling as optimization in the space of measures: The Langevin dynamics as a composite optimization problem}.
\newblock In \emph{Conference on Learning Theory}, pp.\  2093--3027. PMLR, 2018.

\bibitem[Yi \& Liu(2023)Yi and Liu]{yi2023sliced}
Yi, M. and Liu, S.
\newblock Sliced wasserstein variational inference.
\newblock In \emph{Asian Conference on Machine Learning}, pp.\  1213--1228. PMLR, 2023.

\bibitem[Yi et~al.(2023)Yi, Zhu, and Liu]{yi2023monoflow}
Yi, M., Zhu, Z., and Liu, S.
\newblock Monoflow: Rethinking divergence gans via the perspective of wasserstein gradient flows.
\newblock In \emph{International Conference on Machine Learning}, pp.\  39984--40000. PMLR, 2023.

\end{thebibliography}
\bibliographystyle{icml2025}

\newpage

\appendix

\onecolumn

\section{Some useful results}

\subsection{Projections of measures without atoms}

In this subsection, we prove an useful lemma on measures without atoms. If $\mu$ is a measure on $\R^d$, we say that $\mu$ is \textit{with atomless projections}, which we abbreviate WAP, if its projection $\mu_\theta$ is without atoms for almost every $\theta \in \bS^{d-1}$. It is straightforward that if $\mu$ is WAP, then it is without atoms. It turns out that for finite measures, the converse is also true :

\begin{proposition} \label{prop:atomless_is_wap}
    Let $\mu$ be a finite measure on $\R^d$, then $\mu$ is atomless if and only if it is WAP.
\end{proposition}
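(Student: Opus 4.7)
The plan is to split the proposition into its two implications. The easy direction --- WAP implies atomless, which the statement itself flags as straightforward --- amounts to the observation that if $\mu$ has an atom at $x_0$ with mass $m > 0$, then for every $\theta \in \bS^{d-1}$ the projection $\mu_\theta$ has an atom of mass at least $m$ at $\sca{\theta}{x_0}$, so $\mu_\theta$ is atomless for no $\theta$ and $\mu$ is not WAP.

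The substantive direction is the converse. I would normalize $\mu$ to a probability and let $\sigma$ denote the uniform probability on $\bS^{d-1}$. The key reduction rests on the elementary identity, valid for any Borel probability $\nu$ on $\R$,
\begin{equation*}
    (\nu \otimes \nu)\bigl(\{(u,u) : u \in \R\}\bigr) = \sum_{u \in \R} \nu(\{u\})^2,
\end{equation*}
which vanishes if and only if $\nu$ is atomless. Applying this to $\nu = \mu_\theta$, it suffices to show that
\begin{equation*}
    I := \int_{\bS^{d-1}} (\mu_\theta \otimes \mu_\theta)\bigl(\{(u,u) : u \in \R\}\bigr) \, d\sigma(\theta) = 0,
\end{equation*}
since this will force the non-negative integrand to vanish $\sigma$-almost everywhere.

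To compute $I$, I would use that $\mu_\theta \otimes \mu_\theta = (P_\theta, P_\theta)_\#(\mu \otimes \mu)$ and apply Fubini-Tonelli to swap the integrations:
\begin{equation*}
    I = \int_{\R^d \times \R^d} \sigma\bigl(\{\theta \in \bS^{d-1} : \sca{\theta}{x - y} = 0\}\bigr) \, d(\mu \otimes \mu)(x,y).
\end{equation*}
For $x \neq y$, the inner set is the subsphere $\bS^{d-1} \cap (x-y)^{\perp}$, a smooth submanifold of dimension $d-2$ in $\bS^{d-1}$, hence $\sigma$-negligible when $d \geq 2$. For $x = y$, the inner set has full $\sigma$-measure, but the contribution of the diagonal is $(\mu \otimes \mu)(\{(x,y) : x = y\}) = \int_{\R^d} \mu(\{x\}) \, d\mu(x) = 0$ by atomlessness of $\mu$. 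Therefore $I = 0$ and the proof is complete.

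The main obstacle, such as it is, is routine measure-theoretic bookkeeping: Fubini-Tonelli requires joint Borel-measurability of $(\theta, x, y) \mapsto \mathbf{1}\{\sca{\theta}{x-y} = 0\}$ on $\bS^{d-1} \times \R^d \times \R^d$, which follows since it is the indicator of a closed set. The case $d = 1$ is trivial, as $\bS^0 = \{\pm 1\}$, $\mu_1 = \mu$, and $\mu_{-1}$ is the image of $\mu$ under $x \mapsto -x$; both are atomless iff $\mu$ is.
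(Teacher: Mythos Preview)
Your proof is correct and takes a genuinely different route from the paper's. The paper argues by induction on the dimension of affine subspaces: starting from $\mu_0 = \mu$, it builds a sequence $\mu_0,\ldots,\mu_{d-1}$ where $\mu_k$ is obtained from $\mu_{k-1}$ by stripping off the mass sitting on $k$-dimensional affine subspaces (there are at most countably many such subspaces carrying positive mass once the lower-dimensional ones are gone), shows that $\mu_k$ being WAP forces $\mu_{k-1}$ to be WAP, and closes the induction by noting that $\mu_{d-1}$ gives zero mass to every hyperplane, so all of its projections are atomless. Your argument bypasses this structural decomposition entirely via the diagonal identity $(\nu\otimes\nu)(\Delta) = \sum_u \nu(\{u\})^2$ and a single Fubini swap, reducing the question to the $\sigma$-negligibility of great subspheres $(x-y)^\perp \cap \bS^{d-1}$. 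Your approach is shorter, more self-contained, and arguably more transparent; the paper's approach, on the other hand, yields as a byproduct a finer picture of where the projection atoms can come from (namely, from affine subspaces of positive $\mu$-mass), which is not visible in your computation but also not needed for the statement at hand.
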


\begin{proof}
    We have already seen that if $\mu$ has atoms, then it can't be WAP. \newline
    Now, for every $k \in \{0,\ldots,d-1\}$, let $AG_k(\R^d)$ be the $k$-th affine Grassmannian of $\R^d$, that is the set of affine subspaces of $\R^d$ of dimension $k$, and for every $k \in \{0,\ldots,d-1\}$ and measure $\mu$ on $\R^d$, we note
    \begin{equation}
        A_{k,\mu} = \{V \in AG_k(\R^d) \setcond \mu(V) > 0 \}
    \end{equation}
    (in particular, $A_{0,\mu}$ is the set of atoms of $\mu$). Let $\mu$ be a fixed finite measure on $\R^d$ without atoms. We construct by induction a sequence of finite measures $\mu_0 = \mu, \mu_1, \ldots, \mu_{d-1}$ such that for every $k$, $AG_{k,\mu_k} = \emptyset$, and if $k > 0$, then $\mu_k \hbox{ is WAP} \Rightarrow \mu_{k-1} \hbox{ is WAP}$. Our first term $\mu_0 = \mu$ satisfies by assumption $A_{0,\mu_0} = \emptyset$. Now assume that we have built $\mu_0,\ldots,\mu_{k-1}$. \newline
    If $V_1,\ldots,V_l \in A_{k,\mu_{k-1}}$ are distinct, then
    \begin{equation} \mu_{k-1}(V_1 \cup \ldots \cup V_l) = \sum_{i=1}^l \mu_{k-1}(V_i) \end{equation}
    as the intersection of any subset of these has null $\mu_{k-1}$-measure since $A_{k-1,\mu_{k-1}} = \emptyset$. In particular, the family $(\mu_{k-1}(V))_{V \in A_{k,\mu_{k-1}}}$ is summable, with sum $\leq 1$, and $A_{k,\mu_{k-1}}$ is at most countable. Define
    \begin{equation} \mu_k := \mu_{k-1} - \mu_{k-1| \bigcup A_{k,\mu_{k-1}}} \end{equation}
    Then, by construction, $A_{k,\mu_k} = \emptyset$. Now, let $\theta \in \bS^{d-1}$ be such that $(\mu_{k-1})_\theta$ has an atom : there exists $u \in \R$ such that $(\mu_{k-1})_\theta(\{u\}) > 0$. Assume that $(\mu_k)_\theta(\{u\}) = 0$, then this implies that there exists $V \in A_{k,\mu_{k-1}}$ such that $(\mu_{k-1|V})_\theta(\{u\}) > 0$, that is $\mu_{k-1}(V \cap P_\theta^{-1}(u)) > 0$. Since $A_{k-1,\mu_{k-1}} = \emptyset$, this implies that $V \cap P_\theta^{-1}(u)$ is an affine subspace of dimension $k$, that is $V \subseteq P_\theta^{-1}(u)$, and $\theta \in V^\perp$. This argument thus proves
    \begin{equation}
         \{\theta \in \bS^{d-1} \setcond (\mu_{k-1})_\theta \hbox{ has an atom} \} \subseteq \{\theta \in \bS^{d-1} \setcond (\mu_k)_\theta \hbox{ has an atom} \} \cup \{ \theta \in \bS^{d-1} \setcond \exists V \in A_{k,\mu_{k-1}}, \theta \in V^\perp \}
    \end{equation}
    Since the second set in the RHS is of null measure (as an at most countable union of sets of null measure), this inclusion implies that if $\mu_k$ is WAP, then $\mu_{k-1}$ is also WAP. This finishes our induction. \newline
    Now, we have built our sequence $\mu_0,\ldots,\mu_{d-1}$. But $A_{d-1,\mu_{d-1}} = \emptyset$ implies that $\mu_{d-1}$ is WAP (and that in fact $(\mu_{d-1})_\theta$ is without atoms for \textit{every} $\theta$). Thus, all the measures of the sequence are WAP, and in particular $\mu_0 = \mu$ is WAP.
\end{proof}

\subsection{Disintegration of measures}

We state here the so-called \textit{disintegration theorem}, which we will need in the proofs of our results. Let $X$, $Y$ be two separable metric spaces. We say that a family $(\mu_x)_{x \in X}$ of probability measures on $\cP(Y)$ is a \textit{Borel family of measures} if for every Borel set $B \subset Y$, the map $x \in X \mapsto \mu_x(B)$ is Borel measurable. We say that a separable metric space $X$ is a \textit{Radon space} if for every $\mu \in \cP(X)$, every $\eps > 0$ and every Borel set $B \subseteq X$, there exists a compact set $K_\eps \subseteq X$ such that $K_\eps \subseteq B$ and $\mu(B \setminus K_\eps) \leq \eps$. In particular, it is known that Polish spaces (i.e. complete metric separable spaces) are Radon spaces (see \citep[Section 5.1]{ambrosio2005gradient}), so $\R^d$ is a Radon space.

\begin{theorem} \label{th:disintegration}
    Let $X$, $Y$ be two Radon separable metric spaces, $\mu \in \cP(X)$ and $\pi : X \mapsto Y$ be a measurable map. Let $\nu := \pi_\#\mu \in \cP(Y)$. Then there exists a Borel family of measures $(\mu_y)_{y \in Y} \subseteq \cP(X)$, which is $\nu$-a.e. uniquely defined, such that
    \begin{equation}
        \mu_y(X \setminus \pi^{-1}(y)) = 0 \quad \hbox{ for } \nu\hbox{-a.e. } y \in Y
    \end{equation}
    and, for every measurable map $f : X \mapsto [0,\infty]$,
    \begin{equation}
        \int_X f(x) d\mu(x) = \int_Y \int_{\pi^{-1}(y)} f(x) d\mu_y(x) d\nu(y)
    \end{equation}
\end{theorem}

The statement of this theorem is taken from \citep[Theorem 5.3.1]{ambrosio2005gradient}. In the case where $X$ is of the form $X = Z \times Y$ and $\pi$ is the projection on the second coordinate, we may identify each $\pi^{-1}(y)$ with $Z$, and the theorem reformulates as : there exists a Borel family of measures $(\mu_y)_{y \in Y} \subseteq \cP(Z)$, which is $\nu$-a.e. uniquely defined, such that for every measurable map $f : Z \times Y \mapsto [0,\infty]$, $\int_{Z \times Y} f(z,y) d\mu(z,y) = \int_Y \int_Z f(z,y) d\mu_y(z) d\nu(y)$.

\section{Proofs}

\subsection{Proof of \texorpdfstring{\Cref{prop:discrete_gradient}}{}}
\label{sec:proof_of_discrete_gradient}

First, consider a probability density $\rho \in \cP_p(\R)$, with cumulative distribution function $F_\rho : \R \mapsto [0;1]$. 
Let $\mu_X = \frac{1}{N} 
\sum_{i=1}^N \delta_{x_i}$ the uniform empirical measure associated to $X=(x_1,...,x_N) \in \R^N$. For every $i \in \{1,...,N\}$, we define $V_i = F_\rho^{-1}([\frac{i-1}{N};\frac{i}{N}])$ the $i$-th Power cell associated to $\rho$. Then the properties of one-dimensional optimal transport imply that, for every $X = (x_1,...,x_N) \in \R^N$ with $x_{\sigma(1)} \leq ... \leq x_{\sigma(N)}$, $\sigma \in \mathfrak{S}_N$, we have 
\begin{equation}
    G(X):=\frac{1}{p} \W_p^p(\mu_X, \rho) = \frac{1}{p} \sum_{i=1}^N \int_{V_i} |x_{\sigma(i)} - x|^p d\rho(x) = \frac{1}{p} \sum_{i=1}^N \int_{V_{\sigma^{-1}(i)}} |x_i - x|^p d\rho(x).
\end{equation}
We can then easily see that when $p > 1$, $G$ is $C^1$ on the complement of the generalized diagonal $\Delta_N = \{(x_1,...,x_N) \in \R^N \mid \exists i \neq j, x_i = x_j \}$, and its partial derivatives are given by 
\begin{equation}
    \partial_i G (x_1,...,x_N) = \int_{V_{\sigma^{-1}(i)}} \sgn(x_i - x)|x_i - x|^{p-1} d\rho(x),
\end{equation}
where $\sigma \in \mathfrak{S}_N$ is such that $x_{\sigma(1)} < ... < x_{\sigma(N)}$. In the particular case where $p = 2$, the partial derivatives take the simpler form 
\begin{equation}
   \partial_i G (x_1,...,x_N) = \int_{V_{\sigma^{-1}(i)}} (x_i - x)d\rho(x) = \frac{1}{N} (x_i - b_{\sigma^{-1}(i)}) 
\end{equation}
with $b_i = N\int_{V_i} xd\rho(x)$ the barycenter of the $i$-th Power cell $V_i$. \\
With these considerations on one-dimensional measures in mind, we can now move on to prove \Cref{prop:discrete_gradient}. For this, we will need the following lemma.

\begin{lemma}
    \label{lm:lemma_1}
    If $p \geq 2$, $\rho \in \cP_p(\R)$ is a probability density and $X = (x_1,...,x_N) \in \Delta_N$ with $x_{\sigma(1)} < ... < x_{\sigma(N)}$, $\sigma \in \mathfrak{S}_N$, and $H = (h_1,...,h_N) \in \R^N$ is a perturbation such that $X+H$ has the same ordering $\sigma$ as $X$, then writing $R_1 G(X,H) = G(X+H) - G(X) - \sca{\nabla G(X)}{H}$ we have
    \begin{equation}
        |R_1 G(X,H)| \leq 2^{p-2} (p-1) \sum_{i=1}^N |h_i|^p + |h_i|^2 \int |x_i - x|^{p-2} d\rho(x) 
    \end{equation}
    (this is a finite quantity since $\rho$ has finite order $p$ moments).
\end{lemma}

\begin{proof}
    Consider the function $f(x) = |x|^p$. Since $p \geq 2$, we see that $f$ is $C^2$ and that $f'(x) = px|x|^{p-2}$, $f"(x) = p(p-1)|x|^{p-2}$. As a consequence, applying Taylor's theorem, for every $x,h \in \R$,
    \begin{align}
        f(x+h) - f(x) - f'(x)h &= \int_{x}^{x+h} f"(t)(x-t)dt \\
        |f(x+h) - f(x) - f'(x)h| &\leq \int_{x}^{x+h} |f"(t)(x-t)|dt \\
            &\leq \int_{x}^{x+h} p(p-1) \max(|x|,|x+h|)^{p-2}|h|dt \\
            &\leq p(p-1) |h|^2 (|x|+|h|)^{p-2} \\
            &\leq 2^{p-2} p(p-1) |h|^2 (|x|^{p-2}+|h|^{p-2}) \\
            &\leq 2^{p-2} p(p-1) (|h|^p + |h|^2 |x|^{p-2})
    \end{align}
    Therefore, since $X+H$ and $X$ have the same ordering $\sigma$, 
    \begin{align}
        R_1 G(X,H)
            &= \frac{1}{p} \sum_{i=1}^N \int_{V_{\sigma^{-1}(i)}} (|x_i + h_i - x|^p - |x_i - x|^p - p\,\sgn(x_i - x)|x_i - x|^{p-1}) d\rho(x) \\
        |R_1 G(X,H)| 
            &\leq \frac{1}{p}\sum_{i=1}^N \int_{V_{\sigma^{-1}(i)}} 2^{p-2} p(p-1) (|h_i|^p + |h_i|^2 |x_i - x|^{p-2}) d\rho(x) \\
            &\leq 2^{p-2} (p-1) \sum_{i=1}^N \int_{V_{\sigma^{-1}(i)}} |h_i|^p + |h_i|^2 |x_i - x|^{p-2} d\rho(x) \\
            &\leq 2^{p-2} (p-1) \sum_{i=1}^N |h_i|^p + |h_i|^2 \int |x_i - x|^{p-2} d\rho(x)  
    \end{align}
\end{proof}

Now we can prove \Cref{prop:discrete_gradient}.

\begin{proof} [Proof (\Cref{prop:discrete_gradient})]
    First, let's introduce the following definitions : for every $\epsilon > 0$ let 
    \begin{equation}
        \Theta_\epsilon := \{ \theta \in \Sph^{d-1} \mid \exists i \neq j, |\sca{X_i-X_j}{\theta}| \leq \epsilon \}
    \end{equation}
    and for every $\theta \in \Sph^{d-1}$ define the function $G_\theta : X \in \R^N \mapsto \frac{1}{p} \Wass_p^p(\mu_X,P_{\theta\#}\rho)$
    For every point cloud $X \in (\R^d)^N$ and every $\theta \in \Sph^{d-1}$, let $\sigma_{\theta,X} \in \mathfrak{S}_N$ be a (not necessarily unique) permutation such that $\sca{X_{\sigma_{\theta,X}(1)}}{\theta} \leq ... \leq \sca{X_{\sigma_{\theta,X}(N)}}{\theta}$, and let 
    \begin{equation}
        \tilde{\nabla}_{X_i}F(X) := \int_{\bS^{d-1}} \int_{V_{\theta,\sigma_{\theta,X}^{-1}(i)}} \sgn(\sca{X_i}{\theta} - x)|\sca{X_i}{\theta} - x|^{p-1}\theta dP_{\theta\#}\rho(x) d\theta
    \end{equation}
    We want to prove that if $X \notin \Delta_N$, $F$ is differentiable at $X$ and $\nabla F(X) = \tilde{\nabla}F(X)$.

    Let $\epsilon > 0$ be fixed. We see that if $\|H\| \leq \epsilon$, then for every $\theta \notin \Theta_{2\epsilon}$, $\sigma_{\theta,X+H} = \sigma_{\theta,X}$. Furthermore we know that there exists $C_0 = C_0(X) > 0$ such that
    \begin{equation}
        \mathcal{U}_{\bS^{d-1}}(\Theta_\epsilon) \leq C_0\epsilon
    \end{equation}
    where $\mathcal{U}_{\bS^{d-1}}$ is the uniform distribution (i.e. the normalized volume measure) on $\bS^{d-1}$. We now consider a perturbation $H$ such that $\|H\| \leq \epsilon/2$. We have
    \begin{equation}
        F(X+H) - F(X) - \sca{\tilde{\nabla}F(X)}{H} = A(H) + B(H) + C(H)
    \end{equation}
    with
    \begin{align}
        A(H) &= \int_{\Theta_{\epsilon}^c} (G_\theta(P_\theta(X+H)) - G_\theta(P_\theta(X)) - \sca{P_\theta(H)}{\nabla G_\theta(P_\theta(X))}) d\theta \\
        B(H) &= \int_{\Theta_{\epsilon}} (G_\theta(P_\theta(X+H)) - G_\theta(P_\theta(X))) d\theta \\
        C(H) &= - \int_{\Theta_{\epsilon}} \sca{P_\theta(H)}{\nabla G_\theta(P_\theta(X))} d\theta
    \end{align}
    When $\theta \in \Theta_\epsilon^c$, we have $\sigma_{\theta,X+H} = \sigma_{\theta,X}$ and we can apply lemma \ref{lm:lemma_1} to $G_\theta$ to obtain that
    \begin{equation}
        \left| G_\theta(P_\theta(X+H)) - G_\theta(P_\theta(X)) - \sca{P_\theta(H)}{\nabla G_\theta(P_\theta(X))} \right| \leq C \|H\|^2
    \end{equation}
    with a constant $C$ that is uniform on $\theta$ and depends only on $X$, $\rho$, $\epsilon$ and $p$ (indeed, the moments of $P_{\theta\#}\rho$ are bounded by those of $\rho$). Therefore we deduce that
    \begin{equation}
        A(H) = o(\|H\|)
    \end{equation}
    Now, notice that 
    \begin{equation}
        |\partial_i G_\theta(P_\theta(X))| \leq \int_{V_{\theta,\sigma_{\theta,X}^{-1}(i)}} |\sca{X_i}{\theta} - x|^{p-1} dP_{\theta\#}\rho(x)
    \end{equation}
    so
    \begin{equation}
        \sum_{i=1}^N | \partial_i G_\theta(P_\theta(X))| \leq \sum_{i=1}^N \int_{V_{\theta,\sigma_{\theta,X}^{-1}(i)}} |\sca{X_i}{\theta} - x|^{p-1} dP_{\theta\#}\rho(x) = \Wass_{p-1}^{p-1}(\mu_{P_\theta(X)},P_{\theta\#}\rho) \leq \Wass_{p-1}^{p-1}(\mu_X,\rho)
    \end{equation}
    therefore we deduce that
    \begin{equation}
        |C(H)| \leq C_0\epsilon\|H\|\Wass_{p-1}^{p-1}(\mu_X,\rho)
    \end{equation}

    Finally, for a generic $\theta$, using the mean value inequality on $f(x) = x^p$ with $f'(x) = px^{p-1}$, and using the shorthand notations $W_p(X) = W_p(\mu_{P_\theta(X)},P_{\theta\#}\rho)$, we have
    \begin{align}
        |G_\theta(P_\theta(X+H)) - G_\theta(P_\theta(X))|
            &= |W_p(X+H)^p - W_p(X)^p| \\
            &\leq |W_p(X+H) - W_p(X)| \sup_{[W_p(X),W_p(X+H)]} |f'| \\
            &\leq p |W_p(X+H) - W_p(X)| \max(W_p(X),W_p(X+H))^{p-1}
    \end{align}
    Now, by the triangle inequality, we have
    \begin{equation}
        |W_p(X+H) - W_p(X)| \leq W_p(P_{\theta\#}\mu_{X+H},P_{\theta\#}\mu_X) \leq W_p(\mu_{X+H},\mu_X) \leq \|H\|
    \end{equation}
    And similarly $W_p(X) \leq W_p(\mu_X,\rho)$ and
    \begin{equation}
        W_p(X+H) \leq W_p(X) + W_p(P_{\theta\#}\mu_{X+H},P_{\theta\#}\mu_X) \leq W_p(X) + \|H\| \leq W_p(\mu_X,\rho) + \epsilon
    \end{equation}
    Therefore, we have
    \begin{equation}
        |G_\theta(P_\theta(X+H)) - G_\theta(P_\theta(X))| \leq C\|H\|
    \end{equation}
    with a constant $C$ which is uniform in $\theta$ and depends only on $p$, $\epsilon$ and $\Wass(\mu_X,\rho)$.
    Therefore 
    \begin{equation}
        |B(H)| \leq C_0 C\epsilon\|H\|
    \end{equation}
    Thus, we have proven that 
    \begin{equation}
        F(X+H) - F(X) - \sca{\tilde{\nabla}F(X)}{H} = o(\|H\|)
    \end{equation}
    which shows that $\nabla F(X) = \tilde{\nabla}F(X)$. To show the continuity of $\nabla F$, let $X^k \in (\R^d)^N$ be a sequence converging to $X$, with $X^k, X \notin \Delta_N$. Recall that for every $i \in \{1,\ldots,N\}$, we have
    \begin{equation}
        \nabla_{X_i} F(X^k) = \int_{\bS^{d-1}} \int_{V_{\theta,\sigma_{\theta,X^k}^{-1}(i)}} \sgn(\sca{X^k_i}{\theta} - x)|\sca{X^k_i}{\theta} - x|^{p-1} \theta dP_{\theta\#}\rho(x) d\theta
    \end{equation}
    Let $\theta \in \bS^{d-1}$ be such that $\sca{X_i}{\theta} \neq \sca{X_j}{\theta}$ for every $i \neq j$, then there exists $k_0$ such that for every $k \geq k_0$, $\sigma_{\theta,X^k} = \sigma_{\theta,X}$. In particular, this implies that for every $i$, $\sigma_{\theta,X^k}^{-1}(i) = \sigma_{\theta,X}^{-1}(i)$, and thus
    \begin{align}
        \int_{V_{\theta,\sigma_{\theta,X^k}^{-1}(i)}} \sgn(\sca{X^k_i}{\theta} - x)|\sca{X^k_i}{\theta} - x|^{p-1} \theta dP_{\theta\#}\rho(x) &= \int_{V_{\theta,\sigma_{\theta,X}^{-1}(i)}} \sgn(\sca{X_i^k}{\theta} - x)|\sca{X^k_i}{\theta} - x|^{p-1} \theta dP_{\theta\#}\rho(x) \\ 
        &\xrightarrow[k \to \infty]{} \int_{V_{\theta,\sigma_{\theta,X}^{-1}(i)}} \sgn(\sca{X_i}{\theta} - x)|\sca{X_i}{\theta} - x|^{p-1} \theta dP_{\theta\#}\rho(x)
    \end{align}
    where the limit is obtained by dominated convergence, using the fact that the sequence $X^k$ is bounded and that $P_{\theta\#}\rho$ has finite moments of order $p-1$. Moreover, since for every $k$ and $i$,
    we have
    \begin{align}
        \left|\int_{V_{\theta,\sigma_{\theta,X^k}^{-1}(i)}} \sgn(\sca{X^k_i}{\theta} - x)|\sca{X^k_i}{\theta} - x|^{p-1} \theta dP_{\theta\#}\rho(x)\right| & \leq \int |\sca{X_i^k}{\theta} - x|^{p-1} dP_{\theta\#}\rho(x) \\
        &\leq 2^{p-1}(|\sca{X_i^k}{\theta}|^{p-1} + \int |x|^{p-1} dP_{\theta\#}\rho(x)) \\
        &\leq 2^{p-1}(|X_i^k|^{p-1} + \int |x|^{p-1} d\rho(x))
    \end{align}
    and since the sequence $X^k$ is bounded and $\rho$ has finite moments of order $p-1$, this implies by dominated convergence that $\lim_{k \to \infty} \nabla_{X_i} F(X^k) = \nabla_{X_i} F(X)$ for every $i$. This proves the continuity of $\nabla F$. Finally, in the case $p = 2$, the expression of $\nabla_{X_i}F$ simplifies as
    \begin{align}
        \nabla_{X_i} F(X) &= \int_{\bS^{d-1}} \int_{V_{\theta,\sigma_{\theta,X}^{-1}(i)}} \sgn(\sca{X_i}{\theta} - x)|\sca{X_i}{\theta} - x| \theta dP_{\theta\#}\rho(x) d\theta \\
        &= \int_{\bS^{d-1}} \int_{V_{\theta,\sigma_{\theta,X}^{-1}(i)}} (\sca{X_i}{\theta} - x) \theta dP_{\theta\#}\rho(x) d\theta \\
        &= \int_{\bS^{d-1}} \frac 1N \sca{X_i}{\theta}\theta d\theta - \int_{\bS^{d-1}} \int_{V_{\theta,\sigma_{\theta,X}^{-1}(i)}} x dP_{\theta\#}\rho(x)\theta d\theta \\
        &= \frac 1N \left(\frac 1d X_i - \int_{\bS^{d-1}} b_{\theta,\sigma_{\theta,X}^{-1}(i)}\theta d\theta \right)
    \end{align}
    where we used the definition of the $b_{\theta,i}$ in the last line.
\end{proof}

As a side note, remark that $F$ is actually twice differentiable almost everywhere, as a consequence of the following semi-concavity property for $F$ :

\begin{proposition}
    $F$ is $\frac{1}{Nd}$-semiconcave (i.e. $F - \frac{1}{2Nd}\|\cdot\|^2$ is concave).
\end{proposition}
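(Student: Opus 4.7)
The plan is to exploit the variational formulation of one-dimensional quadratic optimal transport as a minimum over finitely many assignments, combined with the isotropy identity $\int_{\bS^{d-1}} \langle v,\theta\rangle^2 \, d\theta = \frac{\|v\|^2}{d}$ for $v\in\R^d$ (which follows from $\int_{\bS^{d-1}}\theta\theta^\top d\theta = \frac{1}{d}I_d$).

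First, I would fix a direction $\theta \in \bS^{d-1}$ and study the single-slice energy $G_\theta(X) := \frac{1}{2}\W_2^2(P_{\theta\#}\mu_X, P_{\theta\#}\rho)$. Recalling that the Power cells $V_{\theta,j}$ of $P_{\theta\#}\rho$ each have mass $1/N$, the semi-discrete 1D optimal transport formula yields
$$G_\theta(X) = \frac{1}{2}\min_{\sigma \in \mathfrak{S}_N}\sum_{i=1}^N \int_{V_{\theta,\sigma^{-1}(i)}}(\langle X_i, \theta\rangle - x)^2 \, dP_{\theta\#}\rho(x).$$
Expanding the square, the $\langle X_i,\theta\rangle^2$-terms sum to $\frac{1}{2N}\sum_i \langle X_i,\theta\rangle^2$ (independent of $\sigma$, since the cells partition $\R$ with equal weights), the $x^2$-terms give a constant $c_\theta$ in $X$, and the cross-terms depend on $\sigma$ but are \emph{affine} in $X$. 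Hence
$$G_\theta(X) = \frac{1}{2N}\sum_{i=1}^N \langle X_i, \theta\rangle^2 + c_\theta + \min_{\sigma \in \mathfrak{S}_N}\bigl(\text{affine in } X\bigr).$$
A minimum of finitely many affine functions is concave, so $X\mapsto G_\theta(X) - \frac{1}{2N}\sum_i \langle X_i,\theta\rangle^2$ is concave for every $\theta$.

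Next I would integrate over $\theta$. By Fubini and the isotropy identity,
$$\int_{\bS^{d-1}} \frac{1}{2N}\sum_{i=1}^N \langle X_i, \theta\rangle^2 \, d\theta = \frac{1}{2Nd}\sum_{i=1}^N \|X_i\|^2 = \frac{1}{2Nd}\|X\|^2,$$
and the pointwise integral in $\theta$ of a measurable family of concave functions of $X$ remains concave in $X$. Therefore $F(X) - \frac{1}{2Nd}\|X\|^2$ is concave up to an additive constant, which is exactly the claimed $\frac{1}{Nd}$-semiconcavity.

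The only routine point to verify is a measurability and integrability check so that Fubini and the concavity-under-integration apply: one needs the family $\theta\mapsto G_\theta(X)$ (and the constants $c_\theta$) to be integrable uniformly on compacts in $X$. This reduces to the elementary bound $\W_2^2(P_{\theta\#}\mu_X, P_{\theta\#}\rho) \le \W_2^2(\mu_X,\rho) < \infty$, which is uniform in $\theta$ and finite under the standing assumption $\rho \in \cP_2(\R^d)$. No extra regularity on $\rho$ is needed, and there is no conceptual obstacle.
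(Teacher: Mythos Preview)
Your proof is correct and follows essentially the same approach as the paper: write $F(X)-\frac{1}{2Nd}\|X\|^2$ as $\int_{\bS^{d-1}}\bigl[G_\theta(X)-\frac{1}{2N}\sum_i\langle X_i,\theta\rangle^2\bigr]\,d\theta$, use the $\frac{1}{N}$-semiconcavity of the one-dimensional map $Y\mapsto\frac12\W_2^2(\mu_Y,\tilde\rho)$, and invoke linearity of $P_\theta$ together with the isotropy identity. The only difference is that the paper cites the 1D semiconcavity from \cite{Mrigot2021NonasymptoticCB}, whereas you prove it inline via the minimum-over-permutations representation; this makes your argument slightly more self-contained but otherwise identical in substance.
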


\begin{proof}
    Indeed, $ F(X) - \frac{1}{2Nd}\|X\|^2 = \int_{\bS^{d-1}} \frac{1}{2} \W^2_2(\mu_{P_\theta(X)},P_{\theta\#}\rho) - \frac{1}{2N} \|P_\theta(X)\|^2 d\theta $ for every $X \in \R^{d \times N}$, and we use the fact that the projection $P_\theta$ is linear and that $Y \in \R^N \mapsto \frac 12 \W_2^2(\mu_Y,P_{\theta\#}\rho)$ is $\frac{1}{N}$-semiconcave (see for example Proposition 1, \citep{Mrigot2021NonasymptoticCB})
\end{proof}

\subsection{Proof of \texorpdfstring{\Cref{prop:descent_lemma}}{}} \label{sec:proof_descent_lemma}

To prove the descent lemma \Cref{prop:descent_lemma}, we first need to prove that $F$ is smooth.

\begin{proposition} \label{prop:l_smoothness}
    For every $X,Y \in (\R^d)^N \setminus \Delta_N$, we have 
    \begin{equation}\label{eq:l_smoothness}
        F(Y) \leq F(X) + \sca{\nabla F(X)}{Y-X}  + \frac{1}{2Nd} \|X-Y\|^2
    \end{equation}
\end{proposition}

\begin{proof}
    First, let $\theta \in \bS^{d-1}$ be fixed, such that $\sca{X_i}{\theta} \neq \sca{X_j}{\theta}$ for every $i \neq j$. Then, since the map which sends $V_{\theta,i}$ to $\sca{Y_{\sigma_{X,\theta}(i)}}{\theta}$ is a (not necessarily optimal) transport map from $\rho_\theta$ to $\mu_{P_\theta(Y)}$, we have
    \begin{align}
        \W_2^2(\mu_{P_\theta(Y)}, \rho_\theta) &\leq \sum_{i=1}^N \int_{V_{\theta,\sigma_{X,\theta}^{-1}(i)}} |\sca{Y_i}{\theta} - x|^2 d\rho_\theta(x) \\
            &\leq \sum_{i=1}^N \int_{V_{\theta,\sigma_{X,\theta}^{-1}(i)}} |\sca{Y_i}{\theta} - \sca{X_i}{\theta} + \sca{X_i}{\theta} - x|^2 d\rho_\theta(x) \\
            &\leq \frac{1}{N} \sum_{i=1}^N \sca{Y_i-X_i}{\theta}^2 + \sum_{i=1}^N \int_{V_{\theta,\sigma_{X,\theta}^{-1}(i)}} 2\sca{Y_i-X_i}{\theta}(\sca{X_i}{\theta}  - x) d\rho_\theta(x) + \W_2^2(\mu_{P_\theta(X)}, \rho_\theta) \\
            &\leq \frac{1}{N} \sum_{i=1}^N \sca{Y_i-X_i}{\theta}^2 + \sum_{i=1}^N \frac{2}{N} \sca{Y_i-X_i}{\theta}(\sca{X_i}{\theta}  - b_{\theta,\sigma^{-1}_{X,\theta}(i)}) + \W_2^2(\mu_{P_\theta(X)}, \rho_\theta)
    \end{align}
    Integrating over the sphere we have 
    \begin{align}
        \SW_2^2(\mu_Y, \rho) &\leq \frac{1}{N} \sum_{i=1}^N \int_{\bS^{d-1}} \sca{Y_i-X_i}{\theta}^2 d\theta + \frac{2}{N} \sum_{i=1}^N \int_{\bS^{d-1}} \sca{Y_i-X_i}{\theta} (\sca{X_i}{\theta}  - b_{\theta,\sigma^{-1}_{X,\theta}(i)})d\theta + \SW_2^2(\mu_X, \rho) \\
         &\leq \frac{1}{Nd} \sum_{i=1}^N \|Y_i-X_i\|^2  + \sum_{i=1}^N \left\langle Y_i-X_i \mid \frac{2}{N} \int_{\bS^{d-1}} (\sca{X_i}{\theta}  - b_{\theta,\sigma^{-1}_{X,\theta}(i)})\theta d\theta \right\rangle + \SW_2^2(\mu_X, \rho)
    \end{align}
    In the RHS of the last inequality, we recognize the expression of the gradient of $F$ which we recall is $\nabla_{X_i} F = \frac{1}{N} \int_{\bS^{d-1}} (\sca{X_i}{\theta} - b_{\theta,\sigma^{-1}_{X,\theta}(i)})\theta d\theta$. Therefore, substituting it gives the intended result
    \begin{equation}
        F(Y) \leq \frac{1}{2Nd} \|X-Y\|^2 + \sca{Y-X}{\nabla F(X)}  + F(X).  \qedhere 
    \end{equation}
\end{proof}

Now, we can prove \Cref{prop:descent_lemma}. Equation \eqref{eq:dsct_lma1} is obtained directly from \Cref{eq:l_smoothness} by taking $Y := X - \lambda \nabla F(X)$.

\subsection{Proof of \texorpdfstring{\Cref{prop:descent_well_behaved}}{}} \label{sec:proof_descent_well_behaved}

We will first need to prove the following lemmas :

\begin{lemma} \label{lemma:barycenter_bound_distance}
    Let $\rho \in \cP([a,b])$ be an absolutely continuous probability measure, with density (which we will also denote $\rho$) bounded from above by $\beta > 0$. Then the barycenter $x_0 = \int_a^b x d\rho(x)$ of $\rho$ satisfies $|x_0 - a|, |x_0 - b| \geq \frac{1}{2\beta}$.
\end{lemma}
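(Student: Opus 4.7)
The plan is to use a bathtub-principle style rearrangement argument. Writing $x_0 - a = \int_a^b (x-a)\rho(x)\,dx$, I will view this as a linear functional of $\rho$ subject to the constraints $0 \leq \rho \leq \beta$ and $\int_a^b \rho = 1$, and argue that it is minimized by placing as much mass as possible as close to $a$ as the density cap allows.

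Concretely, I will introduce the comparison density $\rho^\star := \beta \mathbf{1}_{[a,\,a+1/\beta]}$ (which lies in $[a,b]$ since the constraints $\rho \leq \beta$ and $\int\rho = 1$ force $b-a \geq 1/\beta$). A direct computation gives $\int_a^b (x-a)\rho^\star(x)\,dx = \frac{1}{2\beta}$, so it suffices to show $\int_a^b (x-a)\rho(x)\,dx \geq \int_a^b (x-a)\rho^\star(x)\,dx$. This amounts to proving $\int_a^b (x-a)(\rho^\star - \rho)(x)\,dx \leq 0$.

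For this, split the integral at $a + 1/\beta$. On $[a, a+1/\beta]$ we have $\rho^\star - \rho = \beta - \rho \geq 0$ and $x-a \leq 1/\beta$, so $(x-a)(\rho^\star - \rho) \leq \frac{1}{\beta}(\rho^\star - \rho)$. On $[a+1/\beta, b]$ we have $\rho^\star - \rho = -\rho \leq 0$ and $x-a \geq 1/\beta$, so again $(x-a)(\rho^\star - \rho) \leq \frac{1}{\beta}(\rho^\star - \rho)$. Summing and using that $\rho^\star$ and $\rho$ both integrate to $1$, the right-hand side is $\frac{1}{\beta}\int(\rho^\star - \rho) = 0$, giving the desired inequality and hence $x_0 - a \geq \frac{1}{2\beta}$.

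Finally, the bound $b - x_0 \geq \frac{1}{2\beta}$ follows by symmetry: apply the argument just established to the pushforward of $\rho$ by the reflection $x \mapsto a+b-x$, which is again absolutely continuous on $[a,b]$ with density bounded by $\beta$, and whose barycenter is $a+b-x_0$. No step here looks like a genuine obstacle; the only point worth care is verifying that $\rho^\star$ is well-defined (i.e.\ $a+1/\beta \leq b$), which is immediate from $\int\rho = 1 \leq \beta(b-a)$.
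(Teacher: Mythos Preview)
Your proof is correct. Both you and the paper introduce the same extremal comparison density $\rho^\star = \beta\,\mathbf{1}_{[a,\,a+1/\beta]}$ and reduce the claim to showing $\int (x-a)\rho \geq \int (x-a)\rho^\star = \tfrac{1}{2\beta}$, so the overall strategy is identical. The execution differs at the key inequality: the paper argues via stochastic dominance, observing that $\rho \leq \beta$ forces $F_\rho \leq F_{\rho^\star}$, hence $F_\rho^{-1} \geq F_{\rho^\star}^{-1}$, and then integrates the quantile functions. Your splitting argument at $a+1/\beta$ with the pointwise bound $(x-a)(\rho^\star-\rho) \leq \tfrac{1}{\beta}(\rho^\star-\rho)$ is more elementary and self-contained, avoiding any appeal to quantile functions; the paper's route, on the other hand, is natural in its optimal-transport context where the identity $\int x\,d\mu = \int_0^1 F_\mu^{-1}$ is already in play. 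Both are clean and of comparable length.
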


\begin{proof}
    Since $\rho \leq \beta$, integrating $\rho$ on $[a,b]$, we note that $\frac{1}{\beta} \leq b-a$. Let $\rho_0 \in \cP([a,b])$ be the probability with density $\beta$ on $[a,a+1/\beta]$ and $0$ on $[a+1/\beta,b]$. Its cumulative distribution function is thus
    \begin{equation}
         F_{\rho_0}(x) = \begin{cases}
        \beta(x-a) & \hbox{ if } x \in [a,a+1/\beta] \\
        1 & \hbox{ if } x \geq a+\frac{1}{\beta}
        \end{cases}
    \end{equation}
    and, since $\rho \leq \beta$, we have $F_{\rho} \leq F_{\rho_0}$ on $[a,b]$. Thus, the quantile functions of $\rho,\rho_0$ satisfy $F_{\rho}^{-1} \geq F^{-1}_{\rho_0}$ (this follows directly from their definition), and we have
    \begin{align}
        x_0 - a &= \int_a^b (x-a) d\rho(x) = \int_0^1 (F^{-1}_{\rho}(x) - a)dx \\
        &\geq \int_0^1 (F^{-1}_{\rho_0}(x)-a) dx = \int_a^b (x-a) d\rho_0(x) = \int_a^{a+\frac{1}{\beta}} \beta (x-a) dx = \frac{1}{2\beta}
    \end{align}
    where we used the fact that $\mu = F^{-1}_{\mu\#}\cL^1_{[0,1]}$ for any probability measure $\mu$ on the real line (see \citep[Proposition 2.2]{santambrogio2015optimal}). Similarly, we can show that $b - x_0 \geq \frac{1}{2\beta}$.
\end{proof}

\begin{lemma} \label{lemma:gradient_diff_bound}
    For every $X \in (\R^d)^N \setminus \Delta_N$, we have for every $i \neq j$,
    \begin{equation} \label{eq:lemma_grad_diff_bound0}
        N\sca{\nabla_{X_i}F(X) - \nabla_{X_j}F(X)}{X_i - X_j} \leq \frac{1}{d} \|X_i - X_j\|^2
    \end{equation}
    If we further assume that there exists $\beta > 0$ bounding from above the density of $\rho_\theta$ for every $\theta \in \bS^{d-1}$, then there exists $C = C(d)$ such that for every $i \neq j$,
    \begin{equation} \label{eq:lemma_grad_diff_bound}
        N\sca{\nabla_{X_i}F(X) - \nabla_{X_j}F(X)}{X_i - X_j} \leq \frac{1}{d} \|X_i - X_j\|^2 - \frac{C}{N\beta}\|X_i - X_j\|
    \end{equation}
\end{lemma}

\begin{proof}
    Using the notations of \Cref{prop:discrete_gradient} and \Cref{eq:sw2_critical_point}, we have
    \begin{equation}
        N\sca{\nabla_{X_i}F(X) - \nabla_{X_j}F(X)}{X_i - X_j} = \frac{1}{d} \|X_i - X_j\|^2 - \int_{\bS^{d-1}} (b_{\theta,\sigma^{-1}_{X,\theta}(i)} - b_{\theta,\sigma^{-1}_{X,\theta}(j)})\sca{\theta}{X_i - X_j} d\theta
    \end{equation}
    By symmetry, we have in fact
    \begin{equation} \label{eq:appendix_l183}
        N\sca{\nabla_{X_i}F(X) - \nabla_{X_j}F(X)}{X_i - X_j} = \frac{1}{d} \|X_i - X_j\|^2 - 2\int_{\{\sca{\theta}{X_i - X_j} > 0\}} (b_{\theta,\sigma^{-1}_{X,\theta}(i)} - b_{\theta,\sigma^{-1}_{X,\theta}(j)})\sca{\theta}{X_i - X_j} d\theta
    \end{equation}
    Indeed, for every $\theta \in \bS^{d-1}$, we can check that we have $\sigma_{X,-\theta}^{-1}(k) = N + 1 - \sigma_{X,\theta}^{-1}(k)$ and $b_{-\theta,k} = b_{\theta,N+1-k}$ for every $k = 1,\ldots,N$. However, if $\theta \in \bS^{d-1}$ is such that $\sca{\theta}{X_i - X_j} > 0$, then we have $\sigma^{-1}_{X,\theta}(i) > \sigma^{-1}_{X,\theta}(j)$, and thus
    $b_{\theta,\sigma^{-1}_{X,\theta}(i)} - b_{\theta,\sigma^{-1}_{X,\theta}(j)} \geq 0$. Therefore the integrand in the right-hand side of \eqref{eq:appendix_l183} is nonnegative, and we deduce from this
    \begin{equation}
        N\sca{\nabla_{X_i}F(X) - \nabla_{X_j}F(X)}{X_i - X_j} \leq \frac{1}{d} \|X_i - X_j\|^2 
    \end{equation}
    This proves \eqref{eq:lemma_grad_diff_bound0}. Now, if we assume that there exists $\beta > 0$ such that $\rho_\theta \leq \beta$ for every $\theta \in \bS^{d-1}$, then we have
    \begin{equation}
        b_{\theta,\sigma^{-1}_{X,\theta}(i)} - b_{\theta,\sigma^{-1}_{X,\theta}(j)} \geq \frac{1}{N \beta}
    \end{equation}
    Indeed, for every $k = 1,\ldots,N$, the distance separating the barycenter $b_{\theta,k}$ from the boundary of its corresponding Power cell $V_{\theta,k}$ is at least $\frac{1}{2\beta N}$, which we see by applying \Cref{lemma:barycenter_bound_distance} to the probability measure $N \rho_{\theta|V_{\theta,k}}$. In particular, since $\sca{\theta}{X_i - X_j}$ is also positive, we have
    \begin{equation}
        \sca{\theta}{X_i - X_j}(b_{\theta,\sigma^{-1}_{X,\theta}(i)} - b_{\theta,\sigma^{-1}_{X,\theta}(j)}) \geq \frac{1}{N \beta}\sca{\theta}{X_i - X_j}
    \end{equation}
    Injecting this into \Cref{eq:appendix_l183}, we obtain the inequality
    \begin{align}
        N\sca{\nabla_{X_i}F(X) - \nabla_{X_j}F(X)}{X_i - X_j} &\leq \frac{1}{d} \|X_i - X_j\|^2 - 2\int_{\{\sca{\theta}{X_i - X_j} > 0\}} \frac{1}{N \beta}\sca{\theta}{X_i - X_j} d\theta \\
        &\leq \frac{1}{d} \|X_i - X_j\|^2 - \frac{2}{N\beta}\|X_i - X_j\|\int_{\{\sca{\theta}{\theta_0} > 0\}} \sca{\theta}{\theta_0} d\theta \\
        &\leq \frac{1}{d} \|X_i - X_j\|^2 - \frac{C}{N\beta}\|X_i - X_j\|
    \end{align}
    where $\theta_0 := \frac{X_i - X_j}{\|X_i - X_j\|}$, and where $C := 2\int_{\{\sca{\theta}{\theta_0} > 0} \sca{\theta}{\theta_0} d\theta > 0$. Note that, by symmetry, $C$ does not depend on $\theta_0 \in \bS^{d-1}$ and depends only on $d$. This proves \eqref{eq:lemma_grad_diff_bound}.
\end{proof}

We can now prove the proposition.

\begin{proof}[Proof (\Cref{prop:descent_well_behaved})]
    If $i \neq j$, then we have
    \begin{align}
        \sca{Y_i - Y_j}{X_i - X_j} &= \sca{X_i - X_j - \lambda (\nabla_{X_i} F(X) - \nabla_{X_j} F(X))}{X_i - X_j} \\
        &= \|X_i - X_j\|^2 - \lambda \sca{\nabla_{X_i} F(X) - \nabla_{X_j} F(X)}{X_i - X_j} \\
        &\geq \|X_i - X_j\|^2 - \frac{\lambda}{N} \left(\frac{1}{d} \|X_i - X_j\|^2\right) = \left(1 - \frac{\lambda}{Nd}\right)\|X_i - X_j\|^2 
    \end{align}
    where we used \eqref{eq:lemma_grad_diff_bound0} from \Cref{lemma:gradient_diff_bound} to obtain the last line. In particular, if $\lambda \in (0,Nd)$, then $\sca{Y_i - Y_j}{X_i - X_j} > 0$ and thus $Y_i \neq Y_j$, for every $i \neq j$. Therefore, $Y \notin \Delta_N$. This proves the first part of the proposition. \newline
    Now, assuming that there exists $\beta > 0$ such that the density of $\rho_\theta$ is bounded from above by $\beta$ for every $\theta \in \bS^{d-1}$, we then have for every $i \neq j$,
    \begin{align}
        \|Y_i - Y_j\|^2 &= \|(X_i - X_j) - \lambda (\nabla_{X_i} F(X) - \nabla_{X_j} F(X)) \|^2 \\
        &= \|X_i - X_j\|^2 - 2\lambda \sca{\nabla_{X_i} F(X) - \nabla_{X_j} F(X)}{X_i - X_j} + \lambda^2 \|\nabla_{X_i} F(X) - \nabla_{X_j} F(X)\|^2 \\
        &\geq \|X_i - X_j\|^2 - 2\lambda \sca{\nabla_{X_i} F(X) - \nabla_{X_j} F(X)}{X_i - X_j} \\
        &\geq \|X_i - X_j\|^2 - 2\frac{\lambda}{N} \left(\frac{1}{d} \|X_i - X_j\|^2 - \frac{C}{N\beta}\|X_i - X_j\|\right) 
    \end{align}
    where we used \eqref{eq:lemma_grad_diff_bound} from \Cref{lemma:gradient_diff_bound} in the last line. Thus, we have proved
    \begin{equation}
        \|Y_i - Y_j\|^2 \geq \|X_i - X_j\|^2 + 2\frac{\lambda}{N} \|X_i - X_j\| \left(\frac{C}{N\beta} - \frac{1}{d}\|X_i - X_j\|\right) \label{eq:grad_descent_iterates_distances}
    \end{equation}
    Now :
    \begin{itemize}
        \item If $\|X_i - X_j\| \leq \frac{dC}{N\beta}$, we have directly $\|Y_i - Y_j\| > \|X_i - X_j\|$ from \Cref{eq:grad_descent_iterates_distances}.
        \item If $X$ is a critical point, we have $\nabla F(X) = 0$ and thus $Y = X$. Therefore, \Cref{eq:grad_descent_iterates_distances} yields
        \begin{equation}
            0 \geq 2\frac{\lambda}{N} \|X_i - X_j\| \left(\frac{C}{N\beta} - \frac{1}{d} \|X_i - X_j\|\right)
        \end{equation}
        which implies
        \begin{equation}
            \frac{1}{d} \|X_i - X_j\| \geq \frac{C}{N\beta}
        \end{equation}
    \end{itemize}
\end{proof}

As a side note, observe that if we consider the continuous time gradient flow
\begin{equation}
    \begin{cases}
    X(t = 0) = X_0 & \hbox{ with } X_0 \in (\R^d)^N \setminus \Delta_N \\
    \dot{X}(t) = -\nabla F(X(t)) & \hbox{ for } t > 0
\end{cases}
\end{equation}
then \Cref{lemma:gradient_diff_bound} implies that for every $t > 0$ at which the flow is well-defined, for every $i \neq j$,
\begin{align}
    \frac{d}{dt} \frac 12\|X_i - X_j\|^2 &= -\sca{\nabla_{X_i} F(X) - \nabla_{X_j} F(X)}{X_i - X_j} \\
    &\geq -\frac{1}{Nd} \|X_i - X_j\|^2 + \frac{C}{N^2\beta}\|X_i - X_j\| \\
    &\geq \frac{\|X_i - X_j\|}{N} \left(\frac{C}{N\beta} - \frac{1}{d}\|X_i - X_j\| \right)
\end{align}
where we used \eqref{eq:lemma_grad_diff_bound} to obtain the second line, and, in particular,
\begin{equation} \frac{d}{dt} \|X_i - X_j\|^2 > 0 \end{equation}
whenever $\|X_i - X_j\| \leq \frac{dC}{N\beta}$. This implies that :
\begin{itemize}
    \item If $\|X_i - X_j\| \geq \frac{dC}{N\beta}$ at $t = 0$, then this inequality must stay true at every $t > 0$.
    \item If $\|X_i - X_j\| \leq \frac{dC}{N\beta}$ at $t = 0$, then $\|X_i - X_j\|$ increases until it is greater or equal than $\frac{dC}{N\beta}$, and does not become lower than this threshold afterwards.
\end{itemize}
Thus, we see that the continuous time gradient flow is also well-behaved, in that it will tend to stay far away from the generalized diagonal $\Delta_N$.

\subsection{Extensions of the results of \Cref{section:sw_discrete}} \label{sec:extensions_results_sw_discrete}

If, instead of assuming that $\rho$ is absolutely continuous with respect to the Lebesgue measure, we simply assume that $\rho$ is without atoms, then, by \Cref{prop:atomless_is_wap}, the projection $\rho_\theta$ is without atoms for every $\theta \in \bS^{d-1} \setminus N_\rho$ where $N_\rho \subseteq \bS^{d-1}$ is some set of directions of measure zero. In particular, for every $\theta \notin N_\rho$, the cumulative distribution function $F_{\rho_\theta}$ is continuous, and we can define the Power cells $V_{\theta,i} := F^{-1}_{\rho_\theta}\left(\left[\frac{i-1}{N},\frac{i}{N}\right]\right)$ and the barycenters $b_{\theta,i} := N \int_{V_{\theta,i}} x d\rho_\theta(x)$ for every $i \in \{1,\ldots,N\}$. Then \Cref{prop:discrete_gradient}, \Cref{prop:descent_lemma} and the first part of \Cref{prop:descent_well_behaved} extend to $\rho$, with the exact same statement. Indeed, their proofs as stated in \Cref{sec:proof_of_discrete_gradient}, \Cref{sec:proof_descent_lemma} and \Cref{sec:proof_descent_well_behaved} work exactly the same (with the difference that we only consider directions $\theta$ that are not in $N_\rho$). \newline

In fact, we can find extensions of \Cref{prop:discrete_gradient} and \Cref{prop:descent_lemma} when we only assume that $\rho \in \cP_2(\R^d)$. The difficulty is that, since the projections $\rho_\theta$ can no longer be assumed to be without atoms, we can't characterize the optimal transport between $\rho_\theta$ and $P_{\theta\#}\mu_X$ in terms of Power cells. For this, we first recall the following well-known results on optimal transport in 1D :

\begin{theorem} \label{th:ot_1D}
    Let $\mu, \nu \in \cP_2(\R)$, then the so-called \textit{monotone transport plan} between $\mu$ and $\nu$, given by $\gamma_{mon} := (F^{-1}_\mu,F^{-1}_\nu)_\#\cL^1_{|[0,1]}$, is the unique optimal transport plan between $\mu$ and $\nu$ for any cost of the form $c(x,y) = h(x-y)$ with $h : \R \mapsto [0,\infty)$ strictly convex. Furthermore, $\gamma_{mon}$ is the unique transport plan $\gamma \in \Pi(\mu,\nu)$ which satisfies
    \begin{equation} \label{eq:monotone_1d_plan}
        \forall (x,y), (x',y') \in \spt(\gamma), x < x' \Rightarrow y \leq y'
    \end{equation}
\end{theorem}

We refer to \citep[Chapter 2]{santambrogio2015optimal} for the detailed statement and proof of these results. In the special case where one of the two mesures is a point cloud, we then have the following lemma :

\begin{lemma} \label{lemma:ot_N_points_decomposition}
    Let $\rho \in \cP_2(\R)$ and $N > 0$. Then there exists an unique family of probability measures $\rho_1,\ldots,\rho_N \in \cP_2(\R)$ such that :
    \begin{itemize}
        \item $\rho = \frac 1N \sum_{i=1}^N \rho_i$
        \item For every $i < j$, $x_i \in \spt(\rho_i)$ and $x_j \in \spt(\rho_j)$, we have $x_i \leq x_j$
        \item For every $X = (x_1,\ldots,x_N) \in \R^N$ with $x_1 < \ldots < x_N$, the unique optimal transport plan between $\mu_X$ and $\rho$ (for any cost of the form $c(x,y) = h(x-y)$ with $h : \R \mapsto [0,\infty)$ strictly convex) is given by
        \begin{equation}
            \gamma = \frac 1N \sum_{i=1}^N \delta_{x_i} \otimes \rho_i
        \end{equation}
    \end{itemize}
\end{lemma}

\begin{proof}
    First, we fix a point cloud $X = (x_1,\ldots,x_n) \in \R^N$ such that $x_1 < \ldots < x_N$, and we let $\gamma_X$ be the unique optimal transport plan between $\mu_X$ and $\rho$. By the disintegration theorem \Cref{th:disintegration}, there exists an unique family $\rho_{X,1}, \ldots, \rho_{X,N} \in \cP_2(\R)$ such that $\gamma_X = \frac 1N \sum_{i=1}^N \delta_{x_i} \otimes \rho_{X,i}$. In particular, we have $\rho = \pi_{2\#}\gamma_X = \frac 1N \sum_{i=1}^N \rho_{X,i}$. Furthermore, if $i < j$, $y_i \in \spt(\rho_{X,i})$ and $y_j \in \spt(\rho_{X,j})$, then we have $(x_i,y_i),(x_j,y_j) \in \spt(\gamma_X)$. and \Cref{th:ot_1D} directly implies that $y_i \leq y_j$. Now, all that is left to do is to show that the family $(\rho_{X,i})_i$ does not actually depend on $X$. This is the case since, if $X' = (x'_1,\ldots,x'_N) \in \R^N$ is another point cloud with $x'_1 < \ldots < x'_N$, we see that the transport plan $\gamma = \frac 1N \sum_{i=1}^N \delta_{x'_i} \otimes \rho_{X,i}$ also satisfies \eqref{eq:monotone_1d_plan}, so that $\gamma = \gamma_{X'}$ by \Cref{th:ot_1D}. But then we must have $\rho_{X,i} = \rho_{X',i}$ for every $i$ by unicity of the family $(\rho_{X',i})_i$. This finishes the proof.
\end{proof}

\begin{remark} \label{rk:ot_N_points_no_atoms}
    In the case where $\rho \in \cP_2(\R)$ has no atoms, it is not difficult to see that $\rho_i = N \rho_{|V_i}$ where $V_i$ is the $i$-th Power cell $V_i := F^{-1}_\rho([(i-1)/N,i/N])$.
\end{remark}

Then, fixing $\rho \in \cP_2(\R^d)$, for every $\theta \in \bS^{d-1}$, we denote $\rho_{\theta,1},\ldots,\rho_{\theta,N} \in \cP_2(\R)$ the measures given by applying \Cref{lemma:ot_N_points_decomposition} to $\rho_\theta$, and we define $b_{\theta,i} := \int x d\rho_{\theta,i}(x)$ the corresponding barycenters (by \Cref{rk:ot_N_points_no_atoms}, if $\rho$ is without atoms, this is indeed the barycenter of $\rho_\theta$ on the Power cell $V_{\theta,i}$, and there is no conflict of notation). We then have the following extensions of \Cref{prop:discrete_gradient}, \Cref{prop:descent_lemma} and \Cref{prop:descent_well_behaved} :

\begin{proposition} \label{prop:extension_sec3_generic_rho}
    If $p \geq 2$, then $F : X \in (\R^d)^N \mapsto \frac 1p \SW^p_p(\mu_X,\rho)$ is differentiable at any point cloud $X = (X_1,\hdots,X_N) \in (\R^d)^N$ which does not belong to the generalized diagonal $\Delta_N$. The gradient of $F$ is continuous on $(\R^d)^N \setminus \Delta_N$ and the expression of its component with respect to the $i$-th vector $X_i$ is then 
    \begin{equation}
        \nabla_{X_i} F(X) = \frac 1N \int_{\bS^{d-1}} \int \sgn(\sca{X_i}{\theta} - x) |\sca{X_i}{\theta} - x|^{p-1} \theta d\rho_{\theta,\sigma_{X,\theta}^{-1}(i)}(x) d\theta,
    \end{equation}
    In the particular case where $p = 2$, this expression can be further simplified :
    \begin{equation} \label{eq:sw2_discrete_gradient_general}
        \nabla_{X_i} F(X) = \frac{1}{N} \left(\frac{1}{d} X_i - \int_{\bS^{d-1}} b_{\theta,\sigma_{X,\theta}^{-1}(i)}\theta d\theta\right)
    \end{equation}
    Still in the case $p = 2$, for every $X \in (\R^d)^N \setminus \Delta_N$ and every $\lambda > 0$, denoting $Y := X - \lambda \nabla F(X)$, we have 
    \begin{equation} 
        F(Y) - F(X) \leq -\lambda \left(1 - \frac{\lambda}{2Nd}\right) \|\nabla F(X)\|^2
    \end{equation}
    and, provided $\lambda \in (0,Nd)$, we have $Y \notin \Delta_N$.
\end{proposition}

\begin{proof}
    The optimal transport plan between $P_{\theta\#}\mu_X$ and $\rho_\theta$ is given by
    \begin{equation}
        \gamma_\theta = \frac 1N \sum_{i=1}^N \delta_{\sca{X_i}{\theta}} \otimes \rho_{\theta,\sigma_{X,\theta}^{-1}(i)}
    \end{equation}
    We then prove that $F$ is differentiable at $X$ with the given expression, and that $\nabla F$ is continuous, the same way as in the proof of \Cref{prop:discrete_gradient} (in \Cref{sec:proof_of_discrete_gradient}), where we replace every integration on a Power cell (of the form $\int_{V_{\theta,i}} \ldots d\rho_\theta(x)$) by an integration on $\rho_{\theta,i}$ (of the form $\frac 1N \int \ldots d\rho_{\theta,i}(x)$). \newline
    Similarly, we prove the upper bound on $F(Y) - F(X)$ the same way as in the proof of \Cref{prop:descent_lemma} (in \Cref{sec:proof_descent_lemma}), by noting that a (not necessarily optimal) transport plan between $P_{\theta\#}\mu_Y$ and $\rho_\theta$ is given by
    \begin{equation}
        \tilde{\gamma}_\theta = \frac 1N \sum_{i=1}^N \delta_{\sca{Y_i}{\theta}} \otimes \rho_{\theta,\sigma_{X,\theta}^{-1}(i)}
    \end{equation}
    Finally, if $\lambda \in (0,Nd)$, we prove that $Y \notin \Delta_N$ the same way as in the proof of \Cref{prop:descent_well_behaved} (in \Cref{sec:proof_descent_well_behaved}), as we will still have $N\sca{\nabla_{X_i}F(X) - \nabla_{X_j}F(X)}{X_i - X_j} \leq \frac{1}{d} \|X_i - X_j\|^2$ by the same reasoning.
\end{proof}

Furthermore, for a fixed $N > 0$, provided $\rho \in \cP_2(\R^d)$ satisfies a technical assumptions on its barycenters $b_{\theta,i}$, then we have the following extension of \Cref{prop:descent_well_behaved} :

\begin{proposition} \label{prop:extension_descent_well_behaved}
    Assume that there exists $m > 0$ and $\Theta \subseteq \bS^{d-1}$ with $\mathcal{U}_{\bS^{d-1}}(\Theta) > 0$, such that for every $\theta \in \Theta$ and $i \in \{1,\ldots,N-1\}$, $b_{\theta,i+1} - b_{\theta,i} \geq m$. Then there exists some constant $C = C(\Theta) > 0$ such that for every $X \in (\R^d)^N$ and $\lambda > 0$, setting $Y := X - \lambda \nabla F(X)$, for every $i \neq j$, if $\|X_i - X_j\| < dCm$, then $\|Y_i - Y_j\| > \|X_i - X_j\|$. In particular, if $X$ is a critical point of $F$, then
    \begin{equation}
        \min_{i \neq j} \|X_i - X_j\| \geq dCm
    \end{equation}
\end{proposition}

\begin{proof}
    By the same argument as in the proof of \Cref{prop:descent_well_behaved} in \Cref{sec:proof_descent_well_behaved}, we have
    \begin{equation}
        \|Y_i - Y_j\|^2 \geq \|X_i - X_j\|^2 - 2\lambda \sca{\nabla_{X_i}F(X) - \nabla_{X_j}F(X)}{X_i - X_j} \label{eq:appendix_l434}
    \end{equation}
    with
    \begin{equation}
        N\sca{\nabla_{X_i}F(X) - \nabla_{X_j}F(X)}{X_i - X_j} = \frac 1d \|X_i - X_j\|^2 - 2\int_{\{\sca{\theta}{\theta_0} > 0\}} (b_{\theta,\sigma^{-1}_{X,\theta}(i)} - b_{\theta,\sigma_{X,\theta}^{-1}(j)})\sca{\theta}{X_i - X_j} d\theta \label{eq:appendix_l438}
    \end{equation}
    and $\theta_0 := \frac{X_i - X_j}{\|X_i - X_j\|}$. Indeed, we can again check that we have $\sigma_{X,-\theta}^{-1}(k) = N + 1 - \sigma_{X,\theta}^{-1}(k)$ and $b_{-\theta,k} = b_{\theta,N+1-k}$ for every $k = 1,\ldots,N$. In particular, we may assume that $\Theta = -\Theta$. Now, if $\theta \in \bS^{d-1}$ is such that $\sca{\theta}{\theta_0} \geq 0$, we again have $(b_{\theta,\sigma^{-1}_{X,\theta}(i)} - b_{\theta,\sigma_{X,\theta}^{-1}(j)})\sca{\theta}{X_i - X_j} > 0$, and if furthermore $\theta \in \Theta$, we have $(b_{\theta,\sigma^{-1}_{X,\theta}(i)} - b_{\theta,\sigma_{X,\theta}^{-1}(j)})\sca{\theta}{X_i - X_j} \geq m\sca{\theta}{X_i-X_j}$, so that
    \begin{align}
        \int_{\{\sca{\theta}{\theta_0} > 0\}} (b_{\theta,\sigma^{-1}_{X,\theta}(i)} - b_{\theta,\sigma_{X,\theta}^{-1}(j)})\sca{\theta}{X_i - X_j} d\theta &\geq \int_{\Theta \cap \{\sca{\theta}{\theta_0} > 0\}} m\sca{\theta}{X_i - X_j} d\theta \\
        &\geq m\|X_i - X_j\| \int_{\Theta \cap \{\sca{\theta}{\theta_0} > 0\}} \sca{\theta}{\theta_0} d\theta \label{eq:appendix_l443}
    \end{align}
    Now, let $\alpha \in (0,1]$ be the unique value such that 
    \begin{equation} \label{eq:appendix_l446}
        \mathcal{U}_{\bS^{d-1}}(\{\alpha > \sca{\theta}{\theta_0} > 0\}) = \mathcal{U}_{\bS^{d-1}}(\Theta \cap \{\sca{\theta}{\theta_0} > 0\}) = \frac 12 \mathcal{U}_{\bS^{d-1}}(\Theta)
    \end{equation}
    (by symmetry, $\alpha$ does not depend on $\theta_0$, and only depends on $\mathcal{U}_{\bS^{d-1}}(\Theta)$). Then, we have
    \begin{align}
        \int_{\Theta \cap \{\sca{\theta}{\theta_0} > 0\}} \sca{\theta}{\theta_0} d\theta &= \int_{\Theta \cap \{\sca{\theta}{\theta_0} \geq \alpha\}} \sca{\theta}{\theta_0} d\theta + \int_{\Theta \cap \{\alpha > \sca{\theta}{\theta_0} > 0\}} \sca{\theta}{\theta_0} d\theta \\
        &\geq \alpha \mathcal{U}_{\bS^{d-1}}(\Theta \cap \{\sca{\theta}{\theta_0} \geq \alpha\}) + \int_{\Theta \cap \{\alpha > \sca{\theta}{\theta_0} > 0\}} \sca{\theta}{\theta_0} d\theta \\
        &\geq \alpha \mathcal{U}_{\bS^{d-1}}(\Theta^c \cap \{\alpha > \sca{\theta}{\theta_0} > 0\}) + \int_{\Theta \cap \{\alpha > \sca{\theta}{\theta_0} > 0\}} \sca{\theta}{\theta_0} d\theta \\
        &\geq \int_{\{\alpha > \sca{\theta}{\theta_0} > 0\}} \sca{\theta}{\theta_0} d\theta \label{eq:appendix_l454}
    \end{align}
    where the third line is obtained by noticing that \eqref{eq:appendix_l446} implies that $\mathcal{U}_{\bS^{d-1}}(\Theta \cap \{\sca{\theta}{\theta_0} \geq \alpha\}) = \mathcal{U}_{\bS^{d-1}}(\Theta^c \cap \{\alpha > \sca{\theta}{\theta_0} > 0\})$. Thus, denoting $C = C(\Theta) := 2\int_{\{\alpha > \sca{\theta}{\theta_0} > 0\}} \sca{\theta}{\theta_0} d\theta$, we have, combining \eqref{eq:appendix_l434}, \eqref{eq:appendix_l438}, \eqref{eq:appendix_l443} and \eqref{eq:appendix_l454}, 
    \begin{equation}
        \|Y_i - Y_j\|^2 \geq \|X_i - X_j\|^2 - \frac{2\lambda}{N}\left(\frac 1d \|X_i - X_j\|^2 - Cm\|X_i - X_j\| \right)
    \end{equation}
    from which we deduce $\|Y_i - Y_j\| > \|X_i - X_j\|$ if $\|X_i - X_j\| < Cmd$. In particular, if $X$ is a critical point of $F$, then $\nabla F(X) = 0$ and $Y = X$, so we must have $\|X_i - X_j\| \geq Cmd$.
\end{proof}

\begin{remark}
    In particular, if there exists $\beta > 0$ and $\Theta \subseteq \bS^{d-1}$ with $\mathcal{U}_{\bS^{d-1}}(\Theta) > 0$ such that for every $\theta \in \Theta$, $\rho_\theta$ is absolutely continuous with a density bounded from above by $\beta$, then, by the same reasoning as in the proof of \Cref{prop:descent_well_behaved} in \Cref{sec:proof_descent_well_behaved}, for every $\theta \in \Theta$ and $i = 1,\ldots,N-1$, we have $b_{\theta,i+1} - b_{\theta,i} \geq \frac{1}{N\beta}$. Thus $\rho$ satisfies the assumption of \Cref{prop:extension_descent_well_behaved} for every $N > 0$ with $\Theta$ and $m := \frac{1}{\beta N}$. Therefore, we only need to have an upper bound on the densities of the $\rho_\theta$ for a non negligible set of directions $\theta$ (instead of all of them) for the gradient descent to be well-behaved (i.e. to guarantee that the iterates do not get too close to the generalized diagonal and are repelled by it).
\end{remark}

\subsection{Proof of \texorpdfstring{\Cref{prop:compatibility_with_discrete_case}}{}} \label{sec:proof_compatibility_with_discrete_case}

First, it will be helpful to introduce the following family of transport plans between the projected measures : for a given $\theta \in \bS^{d-1}$, we use \Cref{th:disintegration} to disintegrate $\mu$ and $\rho$ with respect to $P_\theta$ to get families of probabilities $(\mu_{\theta,u})_{u\in \R}$ and $(\rho_{\theta,v})_{v\in \R}$ such that $\spt (\mu_{\theta,u}) \subseteq P_\theta^{-1}(u)$, $\spt (\rho_{\theta,v}) \subseteq P_\theta^{-1}(s)$ and for every test function $\varphi \in C^0(\Omega)$, $\int \varphi(x) d\mu(x) = \int \int \varphi(x) d\mu_{\theta,u}(x) d\mu_\theta(u)$ and $\int \varphi(y) d\rho(y) = \int \int \varphi(y) d\rho_{\theta,v}(y) d\rho_\theta(v)$. We then define $\hat{\gamma}_\theta$ as the probability measure whose integral over a test function $\varphi(x,y) \in C^0(\Omega \times \Omega)$ is

\begin{equation}
    \int \varphi(x,y) d\hat{\gamma}_\theta(x,y) = \int \int \int \varphi(x,y) d\mu_{\theta,u}(x)d\rho_{\theta,v}(y) d\gamma_\theta(u,v).
\end{equation}

We can see then that $\hat{\gamma}_\theta$ is a transport plan (not necessarily optimal) between $\mu$ and $\nu$ and that $(P_\theta,P_\theta)_\#\hat{\gamma}_\theta = \gamma_\theta$ (in other words, $\hat{\gamma}_\theta$ is optimal for the cost function $(x,y) \mapsto \sca{y-x}{\theta}^2$). We also disintegrate $\gamma_\theta$ with respect to the first variable, giving a family of probabilities $(\gamma_{\theta,u})_{u \in \R}$ such that for every test function $\varphi \in C^0(\R \times \R)$, $\int \varphi(u,v) d\gamma_\theta(u,v) = \int \int \varphi(u,v) d\gamma_{\theta,u}(v) d\mu_\theta(u)$. Notice that these give an alternative definition of $\bar{\gamma}_\theta$ : indeed $\bar{\gamma}_\theta(u) = \int v d\gamma_{\theta,u}(v)$.

We can now proceed to the proof of \Cref{prop:compatibility_with_discrete_case}.

\begin{proof}[Proof (\Cref{prop:compatibility_with_discrete_case})]
    First, if $\xi \in L^2(\mu_X,\R^d)$, then, defining $H \in (\R^d)^N$ by $H_i := \xi(X_i)$ for every $i = 1,\ldots,N$, we have for every $t > 0$ (small enough so that $X + tH \notin \Delta_N$),
    \begin{equation} F(X + tH) = \frac 12 \SW^2_2(\mu_{X+tH}, \rho) = \frac 12 \SW^2_2((\Id+t\xi)_\#\mu_X,\rho) \end{equation}
    from which we deduce, by taking the right derivative at $t = 0$,
    \begin{equation} \sca{\nabla F(X)}{H} = \left. \frac{d}{dt} \SW^2_2((\Id+t\xi)_\#\mu,\rho) \right|_{t=0^+} \end{equation}
    In particular, we immediately see from \Cref{def:lag-crit} that $\nabla F(X) = 0$ if and only if $\mu_X$ is a Lagrangian critical point. \newline

    Second, the condition $v_{\mu_X} = 0$ $\mu_X$-a.e. from \Cref{def:strong-lag-crit} writes as
    \begin{equation} \label{eq:appendix_l259}
        \frac{1}{d} X_i - \int_{\bS^{d-1}} \bar{\gamma}_\theta(\sca{X_i}{\theta}) \theta d\theta = 0, \quad i \in \{1,\ldots,N\}
    \end{equation}
    Fix $\theta \in \bS^{d-1}$ such that the $\sca{X_1}{\theta},\ldots,\sca{X_N}{\theta}$ are distinct. Using the notations from \Cref{sec:extensions_results_sw_discrete}, we know that $\gamma_\theta = \frac 1N \sum_{i=1}^N \delta_{\sca{X_i}{\theta}} \otimes \rho_{\theta,\sigma_{X,\theta}^{-1}(i)}$ where $\rho_{\theta,1},\ldots,\rho_{\theta,N}$ is the family given by applying \Cref{lemma:ot_N_points_decomposition} to $\rho_\theta$. In particular, we deduce that for every $i$, $\gamma_{\theta,\sca{X_i}{\theta}} = \rho_{\theta,\sigma^{-1}_{X,\theta}(i)}$, and thus
    \begin{equation} 
        \bar{\gamma}_\theta(\sca{X_i}{\theta}) = \int_\R v d\gamma_{\theta,\sca{\theta}{X_i}}(v) = \int vd\rho_{\theta,\sigma^{-1}_{X,\theta}(i)}(v) = b_{\theta,\sigma_{X,\theta}^{-1}(i)} 
    \end{equation}
    and, using \eqref{eq:sw2_discrete_gradient_general}, \eqref{eq:appendix_l259} rewrites as
    \begin{equation} N\nabla_{X_i} F(X) = 0, \quad i \in \{1,\ldots,N\} \end{equation}
    Thus, $\nabla F(X) = 0$ iff $\mu_X$ is a barycentric Lagrangian critical point.
\end{proof}

\begin{remark}
    Notice that this proof works in fact for general $\rho \in \cP_2(\R^d)$ (with the expression of gradient $\nabla F$ given in \Cref{prop:extension_sec3_generic_rho}).
\end{remark}

\subsection{Proof of \texorpdfstring{\Cref{prop:sw2_diff}}{}} \label{sec:proof_sw2_diff}

First, we prove \Cref{prop:sw2_diff}(a). Let $\xi_0,\xi_1 \in L^2(\mu,\R^d)$, we denote $S^t = \Id + (1-t)\xi_0 + t\xi_1$ and $\mu^t = S^t_{\#}\mu$. For any fixed $t\in [0,1]$,  $\gamma := ((P_\theta,P_\theta) \circ (S_0,S_1))_\#\mu$ is a transport plan between $\mu^0_\theta$ and $\mu^1_\theta$ such that 
\begin{equation} \mu^t_\theta = ((1-t)\pi_1 + t\pi_2)_{\#} \gamma.\end{equation}
where $\pi_i$ is the projection on the $i$-th coordinate. Furthermore, by Proposition 7.3.1 of \cite{ambrosio2005gradient}, there exists a plan $\eta \in \cP(\R \times \R \times \R)$ such that $(\pi_1,\pi_2)_\#\eta = \gamma$ and $((1-t)\pi_1+t\pi_2,\pi_3)_\#\eta$ is an optimal transport plan between $\mu^t_\theta$ and $\rho_\theta$. Then, according to Theorem 7.3.2 of \cite{ambrosio2005gradient}, asserting the semi-concavity of the squared Wasserstein distance, we have
\begin{equation} \label{eq:appendix_l295}
    \W_2^2(\mu^t_\theta,\rho_\theta) \geq (1-t)\W_2^2(\mu^0_\theta,\rho_\theta) + t\W_2^2(\mu^0_\theta,\rho_\theta) - t(1-t)\W_{\eta}^2(\mu^0_\theta,\mu^1_\theta),
\end{equation}
where $W_\eta$ is defined in (7.3.2) of \cite{ambrosio2005gradient} by
\begin{equation} \label{eq:appendix_l297}
    W^2_\eta(((1-t)\pi_i + t\pi_j)_\#\eta,\pi_{k\#}\eta) := \int_{\R \times \R \times \R} |(1-t)x_i + tx_j - x_k|^2 d\eta(x_i,x_j,x_k)
\end{equation}
for every $i,j,k \in \{1,2,3\}$ and $t \in [0,1]$. In this case, we have
\begin{align}
    \W_{\eta}^2(\mu^0_\theta,\mu^1_\theta) &= \int_{\R^3} (x_1 - x_2)^2 d\eta(x_1,x_2,x_3) = \int_{\R^2} (x-y)^2 d\gamma(x,y) \\
    &= \int_{\R^2} \sca{x - y}{\theta}^2 d(S_0,S_1)_{\#}\mu(x,y) = \int \sca{\xi_0(x) - \xi_1(x)}{\theta}^2 d\mu(x)
\end{align}
(we take $i=0,j=2,k=1$ and $t=0$ in \eqref{eq:appendix_l297}). Integrating the inequality \eqref{eq:appendix_l295} over $\theta \in \bS^{d-1}$, we get 
\begin{align}
    \SW_2^2(\mu^t,\rho) 
    &\geq (1-t)\SW_2^2(\mu^0,\rho) + t \SW_2^2(\mu^1,\rho) -  t(1-t)\int \int_{\bS^{d-1}}\sca{\xi_1(x) - \xi_0(x)}{\theta}^2  d\theta d\mu(x) \\
    &\geq (1-t)\SW_2^2(\mu^0,\rho) + t \SW_2^2(\mu^1,\rho) -  \frac{1}{d} t(1-t)\|\xi_1 - \xi_0\|^2_{L^2(\mu)}
\end{align}
This rewrites as
\begin{equation} F_\mu((1-t)\xi_0 + t\xi_1) \leq (1-t) F_\mu(\xi_0) + t F_\mu(\xi_1) \end{equation}
which proves the convexity of $F_\mu$. \newline

Now, we prove \Cref{prop:sw2_diff}(b). First, we show that $v_\mu \in L^2(\mu,\R^d)$. This is the case because $\Id \in L^2(\mu,\R^d)$ as $\mu \in \cP_2(\R^d)$, and
\begin{align}
    \int_{\R^d}\left|\int_{\bS^{d-1}}\bar{\gamma}_\theta(\sca{x}{\theta})\theta d\theta\right|^2 d\theta d\mu(x) &\leq \int_{\R^d} \int_{\bS^{d-1}} \bar{\gamma}^2_\theta(\sca{x}{\theta}) d\theta d\mu(x) \\
    &\leq \int_{\R^d} \int_{\bS^{d-1}} \int_\R v^2 d\gamma_{\theta,\sca{x}{\theta}}(v) d\theta d\mu(x) \\
    &\leq \int_{\bS^{d-1}} \int_{\R^d} \int_\R v^2 d\gamma_{\theta,\sca{x}{\theta}}(v) d\mu(x) d\theta \\
    &\leq \int_{\bS^{d-1}} \int_{\R} \int_\R v^2 d\gamma_{\theta,u}(v) d\mu_\theta(u) d\theta \\
    &\leq \int_{\bS^{d-1}} \int_{\R^2} v^2 d\gamma_{\theta}(u,v) d\theta \\
    &\leq \int_{\bS^{d-1}} \int_\R v^2 d\rho_\theta(v) d\theta \\
    &\leq \int_{\bS^{d-1}} \int_{\R^d} \sca{y}{\theta}^2 d\rho(y) d\theta = \frac{1}{d} \int_{\R^d} \|y\|^2 d\rho(y) < \infty
\end{align}
where we used Jensen's inequality in the first lines, and $\rho \in \cP_2(\R^d)$. This proves that $v_\mu$ is in $L^2(\mu,\R^d)$. \newline

Fix now $\xi \in L^2(\mu,\R^d)$. Denote $S_\xi = \Id + \xi$ and $\mu^\xi = S_{\xi\#} \mu$, then for every $\theta \in \bS^{d-1}$, the plan $\hat{\gamma}_\theta^\xi := (S_\xi,\Id)_\#\hat{\gamma}_\theta$ is a transport plan between $\mu^\xi$ and $\rho$, such that $(P_\theta,P_\theta)_{\#}\hat{\gamma}_\theta^\xi \in \Pi(\mu^\xi_\theta,\rho_\theta)$ is not necessarily optimal. Then, we have
\begin{align}
    \W^2_2(\mu_\theta^\xi,\rho_\theta) &\leq \int_{(\R^d)^2} \sca{x - y}{\theta}^2 d\hat{\gamma}_\theta^\xi(x,y) \notag \\
    &\leq \int_{(\R^d)^2} \sca{S_\xi(x) - y}{\theta}^2 d\hat{\gamma}_\theta(x,y) \notag \\
    &\leq \int_{(\R^d)^2} \sca{x + \xi(x) - y}{\theta}^2 d\hat{\gamma}_\theta(x,y) \notag \\
    &\leq \int_{(\R^d)^2} \sca{x - y}{\theta}^2 d\hat{\gamma}_\theta(x,y) + 2\int_{(\R^d)^2} \sca{x-y}{\theta}\sca{\theta}{\xi(x)} d\hat{\gamma}_\theta(x,y) + \int_{(\R^d)^2} \sca{\xi(x)}{\theta}^2 d\hat{\gamma}_\theta(x,y) \notag \\
    &\leq \W_2^2(\mu_\theta,\rho_\theta) + 2\int_{(\R^d)^2} \sca{x-y}{\theta}\sca{\theta}{\xi(x)} d\hat{\gamma}_\theta(x,y) + \int_{\R^d} \sca{\xi(x)}{\theta}^2 d\mu(x) \label{eq:appendix_l334}
\end{align}
The second term in the right hand side of the last inequality is
\begin{align}
    \int_{(\R^d)^2} \sca{x-y}{\theta}\sca{\theta}{\xi(x)}d\hat{\gamma}_\theta(x,y) 
    &= \int (u-v) \int \sca{\theta}{\xi(x)} d\mu_{\theta,u}(x) d\gamma_\theta(u,v) \notag \\
    &= \int \int \int (u-v)\sca{\theta}{\xi(x)} d\mu_{\theta,u}(x) d\gamma_{\theta,u}(v) d\mu_\theta(u) \notag \\
    &= \int \int \int (u-v)\sca{\theta}{\xi(x)} d\gamma_{\theta,u}(v) d\mu_{\theta,u}(x) d\mu_\theta(u) \notag \\
    &= \int_{\R^d} \sca{\theta}{\xi(x)} \int (\sca{x}{\theta} - v)  d\gamma_{\theta,\sca{x}{\theta}}(v) d\mu(x) \notag \\
    &= \int_{\R^d} \sca{\theta}{\xi(x)} (\sca{x}{\theta} - \bar{\gamma}_\theta(\sca{x}{\theta})) d\mu(x) \label{eq:appendix_l343}
\end{align}
Therefore, integrating \eqref{eq:appendix_l334} using \eqref{eq:appendix_l343}, we get
\begin{equation} \label{eq:appendix_l346}
    \SW^2_2(\mu^\xi,\rho) \leq \SW^2_2(\mu,\rho) + 2\int_{\bS^{d-1}} \int_{\R^d} (\sca{x}{\theta} - \bar{\gamma}_\theta(\sca{x}{\theta})) \sca{\theta}{\xi(x)} d\mu(x) d\theta + \frac{1}{d} \|\xi\|^2_{L^2(\mu)}
\end{equation}
but since
\begin{align}
    \int_{\bS^{d-1}} \int_{\R^d} (\sca{x}{\theta} - \bar{\gamma}_\theta(\sca{x}{\theta})) \sca{\theta}{\xi(x)} d\mu(x) d\theta &= \int_{\R^d} \sca{\xi(x)}{\int_{\bS^{d-1}} (\sca{x}{\theta} - \bar{\gamma}_\theta(\sca{x}{\theta}))\theta d\theta} d\mu(x) \\
    &= \int_{\R^d} \sca{\xi(x)}{\frac{1}{d} x - \int_{\bS^{d-1}} \bar{\gamma}_\theta(\sca{x}{\theta})\theta d\theta} d\mu(x) \\
    &= \sca{\xi}{v_\mu}_{L^2(\mu)}
\end{align}
equation \eqref{eq:appendix_l346} rewrites as
\begin{equation} \label{eq:appendix_l356}
    \SW^2_2(\mu^\xi,\rho) \leq \SW^2_2(\mu,\rho) + 2\sca{v_\mu}{\xi}_{L^2(\mu)} + \frac 1d \|\xi\|^2_{L^2(\mu)}
\end{equation}
that is
\begin{equation} F_\mu(0) - 2\sca{v_\mu}{\xi}_{L^2(\mu)} \leq F_\mu(\xi) \end{equation}
and this finishes proving \Cref{prop:sw2_diff}(b). \newline

Finally, we prove \Cref{prop:sw2_diff}(c). Assume that $\mu, \rho$ are supported in some compact set $\Omega \subseteq \R^d$ and are without atoms. Let $\xi \in L^2(\mu,\R)$ be fixed and define $\varphi(t) := \SW^2_2(\mu^t,\rho)$ where $\mu^t = (\Id+t\xi)_\#\mu$. Equation \eqref{eq:appendix_l356} applied to the vector field $t\xi$ gives
\begin{equation} \varphi(t) \leq \varphi(0) + 2t\sca{v_\mu}{\xi}_{L^2(\mu)} + \frac 1d t^2\|\xi\|^2_{L^2(\mu)} \end{equation}
Therefore, we immediately have the inequalities
\begin{equation} \limsup_{t \mapsto 0^+} \frac{1}{t} (\varphi(t) - \varphi(0)) \leq 2\sca{\xi}{v_\mu}_{L^2(\mu)} \end{equation}
\begin{equation} \liminf_{t \mapsto 0^-} \frac{1}{t} (\varphi(t) - \varphi(0)) \geq 2\sca{\xi}{v_\mu}_{L^2(\mu)} \end{equation}

Let's derive the other inequalities : let $(\varphi_\theta, \psi_\theta)$ be a pair of c-concave Kantorovich potentials for $(\mu_\theta,\rho_\theta)$ (for the cost $c(u,v) = \frac 12(u-v)^2$). For every $t > 0$, we then have 
\begin{align}
    \frac{1}{t} (\Wass^2_2(\mu^t_\theta,\rho_\theta) - \Wass^2_2(\mu_\theta,\rho_\theta)) 
    &\geq \frac{2}{t} \int_{\R^2} \varphi_\theta(u) (d\mu^t_\theta(u) - d\mu_\theta(u)) \\
    &\geq \frac{2}{t} \left( \int_{\R^d} \varphi_\theta(\sca{x + t\xi(x)}{\theta}) - \varphi_\theta(\sca{x}{\theta}) d\mu(x) \right) 
\end{align}
(the factor 2 comes from the factor $\frac 12$ in the cost $c$). By c-concavity, $\varphi_\theta$ is Lipschitz on $P_\theta(\Omega)$ (it has the same modulus of continuity as $c$ - note that we use here the fact that $\mu$ and $\rho$ have compact support). Thus, $t \mapsto \frac{1}{t}(\varphi_\theta(\sca{x + t\xi(x)}{\theta}) - \varphi_\theta(\sca{x}{\theta}))$ is bounded from below by $-L|\sca{\xi(x)}{\theta}|$, which is integrable as $\xi \in L^2(\mu,\R^d)$, where $L$ is the Lipschitz constant of $\varphi_\theta$, which depends only on $\mathrm{diam}(\Omega)$. Since for the cost $c$, $c$-concavity means that $\frac{1}{2}|\cdot|^2 - \varphi_\theta$ is convex and lsc (see \citep[Proposition 1.21]{santambrogio2015optimal}), $\varphi_\theta$ has at every point right and left derivatives $\varphi_\theta^+$ and $\varphi_\theta^-$, therefore, applying Fatou's lemma and integrating on $\bS^{d-1}$,
\begin{align}
    \liminf_{t \mapsto 0^+} \frac{1}{t} (\Wass^2_2(\mu^t_\theta,\rho_\theta) - \Wass^2_2(\mu_\theta,\rho_\theta)) 
    &\geq 2\int_{\bS^{d-1}} \int_{\R^d} \varphi^{\sgn(\sca{\xi(x)}{\theta})}_\theta(\sca{x}{\theta})\sca{\xi(x)}{\theta} d\mu(x) d\theta 
\end{align}
However, since $\mu$ is without atoms, by \Cref{prop:atomless_is_wap}, for almost every $\theta \in\bS^{d-1}$, $\mu_\theta$ is without atoms, and for $\mu_\theta$-almost every $u$, $\varphi_\theta$ is differentiable at $u$ with $\varphi'_\theta(u) = \varphi_\theta^+(u) = \varphi_\theta^-(u)$\footnote{Since $\frac{x^2}{2} - \varphi_\theta$ is convex, it is differentiable almost everywhere, with a nondecreasing differential. Furthermore its set of nondifferentiability is at most countable, so it has zero $\mu_\theta$-measure as $\mu_\theta$ is without atoms.}. Furthermore, we have $\varphi'_\theta(u) = (u - T_\theta(u))$, where $T_\theta$ is the optimal transport map from $\mu_\theta$ to $\rho_\theta$, and $\bar{\gamma}_\theta = T_\theta$ (as $\gamma_\theta = (\Id,T_\theta)_\#\mu_\theta$), and therefore
\begin{align}
    \int_{\R^d} \varphi'_\theta(\sca{x}{\theta})\sca{\xi(x)}{\theta} d\mu(x)) &= \int_{\R^d} (\sca{x}{\theta} - T_\theta(\sca{x}{\theta})) \sca{\xi(x)}{\theta} d\mu(x) \\
    &= \int_{\R^d} (\sca{x}{\theta} - \bar{\gamma}_\theta(\sca{x}{\theta})) \sca{\xi(x)}{\theta} d\mu(x)
\end{align}
so
\begin{equation} \liminf_{t \mapsto 0^+} \frac{1}{t} (\Wass^2_2(\mu^t_\theta,\rho_\theta) - \Wass^2_2(\mu_\theta,\rho_\theta)) \geq 2\int_{\bS^{d-1}} \int_{\R^d} (\sca{x}{\theta} - \bar{\gamma}_\theta(\sca{x}{\theta})) \sca{\xi(x)}{\theta} d\mu(x) d\theta \end{equation}
Integrating this latter inequality, we obtain 
\begin{equation} \liminf_{t \mapsto 0^+} \frac{1}{t} (\varphi(t) - \varphi(0)) \geq 2\sca{\xi}{v_\mu}_{L^2(\mu)} \end{equation}
Using a similar argument, we show that
\begin{equation} \limsup_{t \mapsto 0^-} \frac{1}{t} (\varphi(t) - \varphi(0)) \leq 2\sca{\xi}{v_\mu}_{L^2(\mu)} \end{equation}
This proves that $\varphi$ is differentiable at $t = 0$, with
\begin{equation} \varphi'(0) = 2\sca{\mu}{\xi}_{L^2(\mu)}\end{equation}
This finishes the proof. \qed

\subsection{Proof of \texorpdfstring{\Cref{cor:sw2_crit_point_char}}{}} \label{sec:proof_sw2_crit_point_char}

First, if $\mu$ is a Lagrangian critical point for $\SW^2_2(\cdot,\rho)$, then for every $\xi \in L^2(\mu,\R^d)$, it satisfies \eqref{eq:crit_point_requirement} :
\begin{equation}
    \left.\frac{d}{dt}  \SW_2^2((\Id + t\xi)_{\#} \mu,\rho)\right|_{t=0^+} = 0
\end{equation}
But applying \Cref{prop:sw2_diff}(b) to the vector field $t\xi$, we have for every $t > 0$
\begin{equation}
    \SW_2^2((\Id + t\xi)_{\#} \mu,\rho) \leq \SW_2^2(\mu,\rho) + 2t\sca{v_\mu}{\xi}_{L^2(\mu)} + \frac{1}{d}t^2\|\xi\|^2_{L^2(\mu)}
\end{equation}
Combined with the previous equation, this yields
\begin{equation}
    0 = \left.\frac{d}{dt}  \SW_2^2((\Id + t\xi)_{\#} \mu,\rho)\right|_{t=0^+} \leq 2\sca{v_\mu}{\xi}_{L^2(\mu)} 
\end{equation}
Therefore, we have $\sca{v_\mu}{\xi}_{L^2(\mu)} \geq 0$ for every $\xi \in L^2(\mu,\R^d)$, and this implies $v_\mu = 0$ in $L^2(\mu,\R^d)$. Thus, $\mu$ is a barycentric Lagrangian critical point. \newline

Now, assume that $\mu,\rho$ are compactly supported and without atoms. Then, by proposition \ref{prop:sw2_diff}(c), for every $\xi \in L^2(\mu,\R^d)$, we have
\begin{equation} \left.\frac{d}{dt}  \SW_2^2((\Id + t\xi)_{\#} \mu,\rho)\right|_{t=0^+} = 2\sca{v_\mu}{\xi}_{L^2(\mu)} \end{equation}
Therefore $\mu$ satisfies \Cref{def:lag-crit} if and only if $\sca{v_\mu}{\xi}_{L^2(\mu)} = 0$ for every $\xi \in L^2(\mu,\R^d)$, which is equivalent to $v_\mu = 0$ $\mu$-a.e. Thus $\mu$ is Lagrangian critical if and only if it is barycentric Lagrangian critical. \qed

\subsection{Proof of \texorpdfstring{\Cref{th:weak_convergence_crit_points}}{}} \label{sec:th_weak_convergence_crit_points}

First, up to extending $\Omega$, we may assume that the $\mu_n, \mu$ are supported in $\Omega$. Indeed, if $R > 0$ is such that $\Omega \subseteq B(0,R)$, then if $x \in \spt(\mu_n)$ is such that $v_{\mu_n}(x) = 0$, we have
\begin{equation}
    0 = v_{\mu_n}(x) = \frac 1d x - \int_{\bS^{d-1}} \bar{\gamma}_{n,\theta}(\sca{x}{\theta}) \theta d\theta
\end{equation}
where for every $\theta \in \bS^{d-1}$, $\gamma_{n,\theta}$ is the optimal transport plan between $\mu_{n,\theta}$ and $\rho_\theta$, so that
\begin{equation}
    |x| \leq d \left|\int_{\bS^{d-1}} \bar{\gamma}_\theta(\sca{x}{\theta}) \theta d\theta\right| \leq d \int_{\bS^{d-1}} |\bar{\gamma}_\theta(\sca{x}{\theta})| d\theta \leq dR
\end{equation}
Since $v_{\mu_n} = 0$ $\mu_n$-almost everywhere, this implies that $\mu_n$ is supported in $\Omega' = B(0,dR)$, and so is $\mu$. \newline

Now consider $\xi : \Omega \mapsto \R^d$ a continuous vector field. For every $n$ and $t \in \R$, applying \Cref{prop:sw2_diff}(b) to $t\xi$, we have
\begin{align}
    \SW^2_2((\Id+t\xi)_\#\mu_n,\rho) &\leq \SW^2_2(\mu_n, \rho) + 2t\sca{v_{\mu_n}}{\xi}_{L^2(\mu_n)} + \frac 1d t^2 \|\xi\|^2_{L^2(\mu_n)} \\
    &\leq \SW^2_2(\mu_n,\rho) + \frac 1d t^2 \|\xi\|^2_{L^2(\mu_n)}
\end{align}
since $v_{\mu_n} = 0$. Letting $n \rightarrow \infty$, we thus find
\begin{equation}
    \SW_2^2((\Id+t\xi)_\#\mu, \rho) \leq \SW^2_2(\mu,\rho) + \frac 1d t^2 \|\xi\|^2_{L^2(\mu)}
\end{equation}
(Recall that $\SW_2 \leq \W_2$ and that on compact spaces, weak convergence coincide with convergence in the $\W_2$ topology). But since $\mu$ is by assumption without atoms, by \Cref{prop:sw2_diff}(c), $t \mapsto \SW^2_2((\Id+t\xi)_\#\mu,\rho)$ is differentiable at 0 with derivative $2\sca{v_\mu}{\xi}_{L^2(\mu)}$, so this inequality implies $\sca{v_\mu}{\xi}_{L^2(\mu)} = 0$. Since this holds for every continuous vector field $\xi : \Omega \mapsto \R^d$, by a density argument we conclude that $v_\mu = 0$ in $L^2(\mu,\R^d)$, and $\mu$ is indeed a barycentric Lagrangian critical point for $\SW^2_2(\cdot,\rho)$. This finishes the proof. \qed

\subsection{Proof of \texorpdfstring{\Cref{prop:ex_symmetric_crit_points}}{}}  \label{sec:prop_ex_symmetric_crit_points}

First, let $\mu = \frac{\pi}{8} \cH_{|[-\frac{4}{\pi},\frac{4}{\pi}]}$ and let $\rho$ be the sliced-uniform measure, of which we recall the definition below.

\begin{definition} \label{def:sliced_uniform}   
    The probability measure $\rho \in \cP(\R^2)$ supported on the unit open ball $B(0,1)$ of the plane with the density $f(x) = \frac{1}{2\pi} \frac{1}{\sqrt{1 - |x|^2}}$ is such that in every direction $\theta \in \Sph^{d-1}$, its projection $P_{\theta\#}\rho$ is the normalized restriction of the Lebesgue measure to $[-1;1]$. We'll call $\rho$ the (two-dimensional) \textit{sliced-uniform measure} on $[-1;1]$.
\end{definition}

As explained in \Cref{def:sliced_uniform}, each projection $P_{\theta\#}\rho$ is the normalized restriction of the Lebesgue measure to $[-1,1]$. Indeed, the density of $P_{e1\#}\rho$ at $x \in [-1;1]$ is given by
\begin{equation}
     P_{e1\#}\rho(x) = \frac{1}{2\pi} \int_{-\sqrt{1-x^2}}^{\sqrt{1-x^2}} \frac{1}{\sqrt{1 - x^2 - y^2}} dy
    = \frac{1}{2\pi} \int_{-1}^1 \frac{1}{\sqrt{1-t^2}} dt = \frac{1}{2\pi} \int_{-\frac{\pi}{2}}^{\frac{\pi}{2}} d\theta = \frac{1}{2}
\end{equation}
with the changes of variables $y = \sqrt{1-x^2}t$, $t = \sin \theta$. By symmetry, the same result holds for all $\theta$. 

Then, identifying $\bS^1 \simeq (-\pi,\pi] \simeq [0,2\pi)$, we have for every direction $\theta$, $\rho_\theta = \frac{1}{2} \cL^1_{[-1,1]}$, and when $\theta \neq \pm \frac{\pi}{2}$, we have $\mu_\theta = \frac{\pi}{8|c_\theta|} \cL^1_{[-\frac{4|c_\theta|}{\pi}, \frac{4|c_\theta|}{\pi}]}$, with the notation $c_\theta = \cos(\theta)$ and $s_\theta = \sin(\theta)$ (in the vertical direction, $\mu_{\pm\frac{\pi}{2}} = \delta_0$). The optimal transport map from $\mu_\theta$ to $\rho_\theta$ is then $T_\theta(x) = \frac{\pi}{4|c_\theta|}x$. If $x = (x_1,0) = x_1 e1 \in \spt(\mu) = [-1,1] \times \{0\}$, where $(e_1,e_2)$ is the canonical basis of $\R^2$, we have (noting $\vec{\theta} = (c_\theta, s_\theta)^T$, with the vector notation to differentiate with the scalar angle $\theta$),

\begin{align}
     d\int_{-\pi}^{\pi} T_\theta(\sca{x}{\vec{\theta}}) \vec{\theta} \frac{d\theta}{2\pi}
     &= 2\int_{-\pi}^{\pi} T_\theta(x_1 c_\theta) \left( \begin{array}{c} c_\theta \\ s_\theta\end{array} \right) \frac{d\theta}{2\pi} \\
     &= \frac{\pi}{2} x_1 \int_{-\pi}^{\pi} \frac{c_\theta}{|c_\theta|} \left( \begin{array}{c} c_\theta \\ s_\theta\end{array} \right) \frac{d\theta}{2\pi} \\
     &= \frac{1}{4} x_1 \int_{-\pi}^{\pi} |c_\theta| d\theta e_1
\end{align}
(We see that the integral on the second coordinate cancels by antisymmetry). Since 
\begin{equation}
    \frac{1}{4} \int_{-\pi}^{\pi} |c_\theta| d\theta = \frac{1}{2} \int_0^\pi |c_\theta| d\theta = \int_0^{\pi/2} c_\theta d\theta = 1
\end{equation}
we thus have
\begin{equation} x_1 e_1 = d\int_{-\pi}^{\pi} T_\theta(\sca{x}{\vec{\theta}}) \vec{\theta} \frac{d\theta}{2\pi} \end{equation}
that is $v_\mu(x) = 0$. This proves that $\mu$ satisfies \Cref{def:strong-lag-crit} and is therefore a barycentric Lagrangian critical point for $\SW^2_2(\cdot,\rho)$. \\

Now, we consider the case where $d > 1$, $\rho = \cN(0,I_d)$ and $\mu = (Id,0_{d-1})_\# \cN(0,\alpha_d^2)$ with $\alpha_d = d\int_{\bS^{d-1}} |\sca{\theta}{e_1}| d\theta$. For every $\theta \in \bS^{d-1}$, we have $\rho_\theta = \cN(0,1)$. Noting $(e_1,...,e_d)$ the canonical basis of $\R^d$, when $\sca{\theta}{e_1} \neq 0$, we have $\mu_\theta = P_{\theta\#}\mu = \cN(0,(\alpha_d|\sca{\theta}{e_1}|)^2)$, and when $\sca{\theta}{e_1} = 0$, $\mu_\theta = \delta_0$. Therefore, the optimal transport map from $\mu_\theta$ to $\rho_\theta$ is given by $T_\theta : x \mapsto (\alpha_d|\sca{\theta}{e_1}|)^{-1}x$. Let $x = x_1 e_1 \in \spt \mu = \R \times \{0\}^{d-1}$, then we have
\begin{align}
    d \int_{\bS^{d-1}} T_\theta(\sca{x}{\theta}) \theta d\theta 
        &= d \int_{\bS^{d-1}} T_\theta(x_1\sca{\theta}{e_1}) \theta d\theta \\
        &= d x_1 \int_{\bS^{d-1}} \frac{\sca{\theta}{e_1}}{\alpha_d|\sca{\theta}{e_1}|} \theta d\theta \\
\end{align}
By symmetry we see that the components of this integral along $e_2,...,e_d$ are zero, and thus
\begin{align}
    d \int_{\bS^{d-1}} T_\theta(\sca{x}{\theta}) \theta d\theta 
        &= d x_1 \int_{\bS^{d-1}} \frac{\sca{\theta}{e_1}^2}{\alpha_d|\sca{\theta}{e_1}|} d\theta e_1 \\
        &= x_1 \frac{1}{\alpha_d} d\int_{\bS^{d-1}}|\sca{\theta}{e_1}| d\theta e_1 \\
        &= x_1 e_1 \hbox{ by definition of } \alpha_d
\end{align}
This proves that $\mu$ satisfies \Cref{def:strong-lag-crit} and is therefore a barycentric Lagrangian critical point for $\SW^2_2(\cdot,\rho)$. \qed

\subsection{Proof of \texorpdfstring{\Cref{prop:examples_unstable}}{}} \label{sec:proof_examples_unstable}

\begin{proof}[Sketch of proof]
    Up to translating, rotating, and rescaling, we may decompose $\mu$ as $\mu = (1-\lambda)\mu_0 + \lambda \mu_1$ where $\mu_1 = \frac{1}{2}\cH^1_{|[-1,1]\times\{0\}}$. For every $\theta \in \bS^1$, let $\hat{\gamma}_\theta \in \Pi(\mu,\rho)$ be such that $(P_\theta,P_\theta)_\#\hat{\gamma}_\theta$ is optimal between $\mu_\theta$ and $\rho_\theta$. Then we can decompose $\hat{\gamma}_\theta$ and $\rho$ into 
    \begin{equation}\hat{\gamma}_\theta = (1-\lambda)\hat{\gamma}_{\theta,0} + \lambda \hat{\gamma}_{\theta,1}\end{equation} 
    and 
    \begin{equation}\rho = (1-\lambda)\rho_{\theta,0} + \lambda \rho_{\theta,1},\end{equation}
    where $\hat{\gamma}_{\theta,i}$ couples $\mu_i$ and $\rho_{\theta,i} \in \cP_2(\R^d)$. Denoting $\rho_{\theta,i,\theta}$ the projection of $\rho_{\theta,i}$ on  $\theta$ for $i=0,1$, these decompositions verify 
    \begin{equation}\SW^2_2(\mu^t, \rho) \leq (1-\lambda) \int_{\bS^1} \W^2_2(\mu_{0,\theta}^t, \rho_{\theta,0,\theta}) d\theta + \lambda \int_{\bS^1} \W^2_2(\mu_{1,\theta}^t, \rho_{\theta,1,\theta}) d\theta,\end{equation}
    with equality at $t = 0$. We bound separately the two terms of the right hand side. The first term can be easily bounded by 
    \begin{equation}(1-\lambda) \int_{\bS^1} \W^2_2(\mu_{0,\theta}, \rho_{\theta,0,\theta}) d\theta + O(t^2).\end{equation}
    All that is left is then to show that the second term can be bounded for any $C > 0$, on a neighborhood of $t = 0$, by 
    \begin{equation}\int_{\bS^{d-1}} \W^2_2(\mu_{1,\theta}, \rho_{\theta,1,\theta}) d\theta - Ct^2.\end{equation} 
    We obtain such a bound by writing $\W^2_2(\mu^t_{1,\theta}, \rho_{\theta,1,\theta})=\|F^{-1}_{\mu^t_{1,\theta}} - F^{-1}_{\rho_{\theta,1,\theta}}\|^2_{L^2([0,1])}$, and by making use of an explicit expression of $F^{-1}_{\mu^t_{1,\theta}}$ and of its symmetry to compute a Taylor expansion of 
    \begin{equation}\int_{\bS^{d-1}} \W^2_2(\mu^t_{1,\theta}, \rho_{\theta,1,\theta}) d\theta\end{equation} 
    and bound it from above in the desired way. 
\end{proof}

    Up to translating, rotating and rescaling, we may assume that $S = [-1,1] \times \{0,0\}$ and $\vec{n} = e_2$. Since $a \cH^1_{|S} \leq \mu$, we write
    \begin{equation} \mu = (1-\lambda) \mu_0 + \lambda \mu_1 \end{equation}
    where $\lambda \in [0,1]$ and $\mu_0,\mu_1$ are probability measures such that $\mu_1 = \frac{1}{2} \cH^1_{|[-1,1] \times \{0\}}$ and $\lambda = 2a$. For every $\theta \in \bS^1$, let $\hat{\gamma}_\theta \in \Pi(\mu,\rho)$ be such that $(P_\theta,P_\theta)_\#\hat{\gamma}_\theta$ is an optimal transport plan between $\mu_\theta$ and $\rho_\theta$. Using \Cref{th:disintegration} we can disintegrate $\hat{\gamma}_\theta$ with respect to $\mu$, thus writing $d\hat{\gamma}_\theta(x,y) = d\hat{\gamma}_\theta(y|x) d\mu(x)$, and we define two probability measures $\rho_{\theta,0}, \rho_{\theta,1} \in \cP_2(\R^2)$ by
    
    \begin{equation} \int \varphi(y) \rho_{\theta,i}(y) := \int \int \varphi(y) d\hat{\gamma}_\theta(y|x) d\mu_i(x), \;\; i \in \{0,1\}, \varphi \in C_b(\R^2) \end{equation}
    
    and two transport plans $\hat{\gamma}_{\theta,i} \in \Pi(\mu_i, \rho_{\theta,i})$ by $d\hat{\gamma}_{\theta,i}(x,y) = d\hat{\gamma}_\theta(y|x) d\mu_i(x)$. By \citep[Theorem 4.6]{villani2008OldNew}, the $(P_\theta,P_\theta)_\#\hat{\gamma}_{\theta,i}$ are actually optimal between their margins. In fact, we have
    \begin{equation} \W^2_2(\mu^t_\theta, \rho_\theta) \leq (1-\lambda) \W^2_2(\mu^t_{0,\theta},\rho_{\theta,0,\theta}) + \lambda \W^2_2(\mu^t_{1,\theta}, \rho_{\theta,1,\theta}) \end{equation}

    where $\nu^t := \frac{1}{2} (\tau_{te_2\#}\nu + \tau_{-te_2\#}\nu)$ for any measure $\nu$, with equality at $t = 0$. We will establish bounds separately on $\W^2_2(\mu^t_{0,\theta},\rho_{\theta,0,\theta})$ and $\W^2_2(\mu^t_{1,\theta},\rho_{\theta,1,\theta})$. First, we notice that 

    \begin{equation}
        \label{eq:proof_eq_l138}
        \int_{\bS^1} \W^2_2(\mu^t_{0,\theta},\rho_{\theta,0,\theta})d\theta \leq \int_{\bS^1} \W^2_2(\mu_{0,\theta},\rho_{\theta,0,\theta})d\theta + \frac{1}{d} t^2
    \end{equation} 

    Indeed, if we consider the transport plan $\hat{\gamma}_{\theta,0}^t \in \Pi(\mu_0^t, \rho_{\theta,0})$ defined by
    \begin{equation} 
        \hat{\gamma}_{\theta,0}^t := \frac{1}{2} ((\tau_{te_2},Id)_\#\hat{\gamma}_{\theta,0} + (\tau_{-te_2},Id)_\#\hat{\gamma}_{\theta,0})
    \end{equation}
    we have 

    \begin{align}
        \W^2_2(\mu^t_{0,\theta},\rho_{\theta,0,\theta})
        &\leq \int \sca{x-y}{\theta}^2 d\hat{\gamma}_{\theta,0}^t(x,y) \\
        &\leq \int \frac{1}{2}(\sca{x+te_2-y}{\theta}^2 + \sca{x-te_2-y}{\theta}^2) d\hat{\gamma}_{\theta,0}(x,y) \\
        &\leq \int \sca{x-y}{\theta}^2 + t^2\sca{e_2}{\theta}^2) d\hat{\gamma}_{\theta,0}(x,y) \\
        &\leq \W^2_2(\mu_{0,\theta},\rho_{\theta,0,\theta}) + t^2\sca{e_2}{\theta}^2
    \end{align}
    and by integrating on the sphere we get \eqref{eq:proof_eq_l138}.

    Now, all we need to prove is that for every $C > 0$, there exists a neighborhood of $t = 0$ in which

    \begin{equation} 
        \int_{\bS^1} \W^2_2(\mu^t_{1,\theta},\rho_{\theta,1,\theta})d\theta \leq \int_{\bS^1} \W^2_2(\mu_{1,\theta},\rho_{\theta,1,\theta})d\theta - C t^2 
    \end{equation}
    By summing it with \eqref{eq:proof_eq_l138}, we obtain the proposition's statement. To derive this bound, we look at the quantile functions : for every $\theta \in \bS^1$, we have

    \begin{align}
        \W^2_2(\mu^t_{1,\theta},\rho_{\theta,1,\theta})d\theta 
        &= \|F^{-1}_{\mu^t_{1,\theta}} - F^{-1}_{\rho_{\theta,1,\theta}}\|^2_{L^2([0,1])} \\
        &= \|F^{-1}_{\mu^t_{1,\theta}} - F^{-1}_{\mu_{1,\theta}} + F^{-1}_{\mu_{1,\theta}} - F^{-1}_{\rho_{\theta,1,\theta}}\|^2_{L^2([0,1])} \\
        &= \|F^{-1}_{\mu^t_{1,\theta}} - F^{-1}_{\mu_{1,\theta}}\|^2_{L^2([0,1])} + 2\sca{F^{-1}_{\mu^t_{1,\theta}} - F^{-1}_{\mu_{1,\theta}}}{F^{-1}_{\mu_{1,\theta}} - F^{-1}_{\rho_{\theta,1,\theta}}}_{L^2([0,1])} \\
        &\quad + \| F^{-1}_{\mu_{1,\theta}} - F^{-1}_{\rho_{\theta,1,\theta}}\|^2_{L^2([0,1])} \\
        &= W^2_2(\mu^t_{1,\theta},\mu_{1,\theta}) + W^2_2(\mu_{1,\theta},\rho_{\theta,1,\theta}) + 2\sca{F^{-1}_{\mu^t_{1,\theta}} - F^{-1}_{\mu_{1,\theta}}}{F^{-1}_{\mu_{1,\theta}} - F^{-1}_{\rho_{\theta,1,\theta}}}_{L^2([0,1])}
    \end{align}
    We easily see that $W^2_2(\mu^t_{1,\theta},\mu_{1,\theta}) \leq W_2^2(\mu^t_1,\mu_1) \leq t^2$. Therefore, we simply need to show that for every $C > 0$, there exists a neighborhood of $t = 0$ on which
    \begin{equation}
        \label{eq:proof_l169}
        \int_{\bS^1}\sca{F^{-1}_{\mu^t_{1,\theta}} - F^{-1}_{\mu_{1,\theta}}}{F^{-1}_{\mu_{1,\theta}} - F^{-1}_{\rho_{\theta,1,\theta}}}_{L^2([0,1])}d\theta \leq -Ct^2
    \end{equation}

    Since $\mu_1 = \frac{1}{2} \cH^1_{|[-1,1]\times\{0\}}$, we have, for every $t$, 
    \begin{equation} 
        \mu_1^t = \frac{1}{4} (\cH^1_{|[-1,1]\times\{-t\}} + \cH^1_{|[-1,1]\times\{t\}})
    \end{equation}
    
    Now let $\theta \in \bS^1 \setminus \{\pm \frac{\pi}{2}\}$ (we make again the identification $\bS^1 \simeq \R/2\pi\mathbb{Z}$). The projections of $\mu^t_1$ and $\mu_1$ on $\R\theta$ are 
    
    \begin{equation} 
        \mu^t_{1,\theta} = \frac{1}{4|c_\theta|} (\lambda_{A^-_{\theta,t}} + \lambda_{A^+_{\theta,t}}), \quad A^\pm_{\theta,t} = [\pm |ts_\theta| - |c_\theta|, \pm |ts_\theta| + |c_\theta|]
    \end{equation}
    and 
    \begin{equation} \mu_{1,\theta} = \frac{1}{2c_\theta} \lambda_{A_\theta}, \quad A_\theta = [-|c_\theta|,|c_\theta|]\end{equation}
    Therefore the quantile function of $\mu_{1,\theta}$ is simply
    \begin{equation}
        \label{eq:quantile_segment_theta}
        F^{-1}_{\mu_{1,\theta}}(x) = -|c_\theta| + 2|c_\theta|x, \quad x \in [0,1]
    \end{equation}
    In the following, since for any $\theta$ and any measures $\nu_1,\nu_2 \in \cP(\R^2)$, $W^2_2(\nu_{1,\theta+\pi},\nu_{2,\theta+\pi}) = W_2^2(\nu_{1,\theta},\nu_{2,\theta})$, we can restrict ourselves to $\theta \in (-\frac{\pi}{2},\frac{\pi}{2})$. To compute the quantile function of $\mu^t_{1,\theta}$, we then need to consider two cases.
    \begin{itemize}
        \item First, when $|\theta| \in [0,\arctan(1/|t|)]$,  the two segments $A^\pm_{\theta,t}$ overlap. Their union can then be decomposed into three segments where the density of $\mu_{1,\theta}^t$ is constant :
        \begin{align} 
        B_- \cup B_0 \cup B_+ &= [-|c_\theta| - |t s_\theta|, -|c_\theta| + |t s_\theta|] \\
        &\cup [-|c_\theta| + |t s_\theta|, |c_\theta| - |t s_\theta|] \\
        &\cup [|c_\theta| - |t s_\theta|, |c_\theta| + |t s_\theta|]
        \end{align}
        On $B_\pm$, the density is $\frac{1}{4|c_\theta|}$ while on $B_0$, it is $\frac{1}{2|c_\theta|}$. One can check that the quantile function of $\mu^t_{1,\theta}$ and $\mu_\theta$ is then (using the shorthand notation $t_\theta = \tan(\theta)$)
        \begin{equation} F^{-1}_{\mu^t_{1,\theta}}(x) = 
        \begin{cases} 
        -|c_\theta| - |t s_\theta| + 4 |c_\theta| x 
        & \hbox{ for } x\in \left[0, \frac{|t|}{2}|t_\theta|\right]\\
        -|c_\theta| + |t s_\theta| + 2 |c_\theta|\left(x -  \frac{|t|}{2}|t_\theta|\right)
        & \hbox{ for } x\in \left[\frac{|t|}{2}|t_\theta|, 1-\frac{|t|}{2}|t_\theta|\right]\\
        |c_\theta| - |t s_\theta| + 4|c_\theta|\left(x - 1 + \frac{|t|}{2}|t_\theta|\right)
        & \hbox{ for } x\in \left[1-\frac{|t|}{2}|t_\theta|, 1\right]\\
        \end{cases} \end{equation}
        \item Second, when $|\theta| \in (\arctan(1/|t|), \pi/2)$, the two segments $A^\pm_{\theta,t}$ do not overlap, in which case the quantile function of $\mu^t_{1,\theta}$ is 
        \begin{equation} F^{-1}_{\mu^t_{1,\theta}}(x) = 
        \begin{cases} 
        -|c_\theta| - |t s_\theta| + 4 |c_\theta| x 
        & \hbox{ for } x\in \left[0, \frac{1}{2}\right]\\
        -|c_\theta| + |t s_\theta| + 4|c_\theta|\left(x - \frac{1}{2}\right)
        & \hbox{ for } x\in \left(\frac{1}{2},1\right]\\
        \end{cases} \end{equation}
    \end{itemize}
    Denoting $m_{t,\theta} = \frac{1}{2} \min(1, |tt_\theta|)$, we can actually condense the two previous expressions of $F^{-1}_{\mu^t_{1,\theta}}$ into a single one valid for every $\theta \in (-\pi/2,\pi/2)$ : 
    \begin{equation}
        \label{eq:quantile_segment_t_theta}
        F^{-1}_{\mu^t_{1,\theta}}(x) = 
        \begin{cases} 
        -|c_\theta| - |t s_\theta| + 4 |c_\theta| x 
        & \hbox{ for } x\in [0, m_{t,\theta}]\\
        -|c_\theta| + 2|c_\theta|x
        & \hbox{ for } x\in (m_{t,\theta}, 1-m_{t,\theta}] \\
        -|c_\theta| + |t s_\theta| + 4|c_\theta|\left(x - \frac{1}{2}\right)
        & \hbox{ for } x\in (1-m_{t,\theta}, 1]\\
        \end{cases}
    \end{equation}
    We see in particular that
    \begin{itemize}
        \item $F^{-1}_{\mu^t_{1,\theta}}(x) = F^{-1}_{\mu_{1,\theta}}(x)$ for every $x \in (m_{t,\theta}, 1-m_{t,\theta}]$
        \item For every $t \in \R$ and $x \in [0,1]$, $F^{-1}_{\mu^t_{1,\theta}}(1-x) = 1 - F^{-1}_{\mu^t_{1,\theta}}(x)$ (in fact, we only needed to use the symmetry of $\mu^t_{1,\theta}$ to see this)
    \end{itemize}
    Therefore, we have
    \begin{multline}
        \sca{F^{-1}_{\mu^t_{1,\theta}} - F^{-1}_{\mu_{1,\theta}}}{F^{-1}_{\mu_{1,\theta}} - F^{-1}_{\rho_{\theta,1,\theta}}}_{L^2([0,1])} 
        = \int_0^1 (F^{-1}_{\mu^t_{1,\theta}}(x) - F^{-1}_{\mu_{1,\theta}}(x))(F^{-1}_{\mu_{1,\theta}}(x) - F^{-1}_{\rho_{\theta,1,\theta}}(x))dx \\
        = \int_0^{m_{t,\theta}} (F^{-1}_{\mu^t_{1,\theta}}(x) - F^{-1}_{\mu_{1,\theta}}(x))(F^{-1}_{\mu_{1,\theta}}(x) - F^{-1}_{\rho_{\theta,1,\theta}}(x))dx \notag \\ 
        \quad + \int_{1-m_{t,\theta}}^1 (F^{-1}_{\mu^t_{1,\theta}}(x) - F^{-1}_{\mu_{1,\theta}}(x))(F^{-1}_{\mu_{1,\theta}}(x) - F^{-1}_{\rho_{\theta,1,\theta}}(x))dx \\
        = \int_0^{m_{t,\theta}} (F^{-1}_{\mu^t_{1,\theta}}(x) - F^{-1}_{\mu_{1,\theta}}(x))(F^{-1}_{\mu_{1,\theta}}(x) - F^{-1}_{\rho_{\theta,1,\theta}}(x))dx \notag \\ 
        \quad + \int_0^{m_{t,\theta}} (F^{-1}_{\mu^t_{1,\theta}}(1-x) - F^{-1}_{\mu_{1,\theta}}(1-x))(F^{-1}_{\mu_{1,\theta}}(1-x) - F^{-1}_{\rho_{\theta,1,\theta}}(1-x))dx \\
        = \int_0^{m_{t,\theta}} (F^{-1}_{\mu^t_{1,\theta}}(x) - F^{-1}_{\mu_{1,\theta}}(x))(F^{-1}_{\mu_{1,\theta}}(x) - F^{-1}_{\rho_{\theta,1,\theta}}(x) - (F^{-1}_{\mu_{1,\theta}}(1-x) - F^{-1}_{\rho_{\theta,1,\theta}}(1-x)))dx
    \end{multline}
    We have 
    \begin{equation} F^{-1}_{\mu^t_{1,\theta}}(x) - F^{-1}_{\mu_{1,\theta}}(x) = 2|c_\theta|x - |ts_\theta| = 2|c_\theta|(x - \frac{1}{2}|t_\theta|)\end{equation}
    \begin{equation} F^{-1}_{\mu_{1,\theta}}(x) - F^{-1}_{\mu_{1,\theta}}(1-x) = -|c_\theta| + 2|c_\theta|x - (-|c_\theta| + 2|c_\theta|(1-x)) = 4|c_\theta|(x - \frac{1}{2}) \end{equation}
    for $x \in [0, m_{t,\theta}]$. If for $x \in [0,1] \setminus \{\frac{1}{2}\}$ we note 
    \begin{equation} G_\theta(x) := \frac{F^{-1}_{\rho_{\theta,1,\theta}}(x) - F^{-1}_{\rho_{\theta,1,\theta}}(1-x)}{x - \frac{1}{2}}\end{equation} then we have
    \begin{equation} \sca{F^{-1}_{\mu^t_{1,\theta}} - F^{-1}_{\mu_{1,\theta}}}{F^{-1}_{\mu_{1,\theta}} - F^{-1}_{\rho_{\theta,1,\theta}}}_{L^2([0,1])} = \int_0^{m_{t,\theta}} (x - \frac{1}{2})(4|c_\theta| - G_\theta(x))2|c_\theta|(x - \frac{1}{2}|tt_\theta|) dx \end{equation}
        
    However, our hypothesis that for every $\theta$ the density of $\rho_\theta$ is bounded from above by $b > 0$ allows us to derive a lower bound for $G_\theta$. Indeed, since $\rho = (1-\lambda)\rho_{\theta,0} + \lambda \rho_{\theta,1}$, we have $\rho_{\theta,1} \leq \frac{1}{\lambda} \rho$ and thus $\rho_{\theta,1,\theta} \leq \tilde{b}$ with $\tilde{b} = \frac{b}{\lambda}$. Then, using the shorthand notations $F_\theta = F_{\rho_{\theta,1,\theta}}$ and $F^{-1}_\theta = F^{-1}_{\rho_{\theta,1,\theta}}$, for almost every $x \in [0,1]$,
    \begin{equation} F^{-1}_\theta(F_\theta(x)) = x\end{equation}
    
    Let $x = \alpha + h$ with $h > 0$. Since 
    \begin{equation}F_\theta(x) = F_\theta(\alpha) + \rho_{\theta,1,\theta}((\alpha,\alpha+h]) \leq F_\theta(\alpha) + \tilde{b}h \end{equation}
    we have
    \begin{equation} \alpha + h = F^{-1}_\theta(F_\theta(\alpha + h)) \leq F^{-1}_\theta(F_\theta(\alpha) + \tilde{b}h)\end{equation}
    Similarly, if $x = \alpha - h$ with $h > 0$, we have
    \begin{equation}F_\theta(x) = F_\theta(\alpha) - \rho_{\theta,1,\theta}((\alpha-h,\alpha]) \geq F_\theta(\alpha) - \tilde{b}h \end{equation}
    thus
    \begin{equation} \alpha - h = F^{-1}_\theta(F_\theta(\alpha - h)) \geq F^{-1}_\theta(F_\theta(\alpha) - \tilde{b}h)\end{equation}
    and thus we have
    \begin{equation} -2h \geq F^{-1}_\theta(F_\theta(\alpha) - \tilde{b}h) - F^{-1}_\theta(F_\theta(\alpha) + \tilde{b}h)\end{equation}
    
    Now, pick $\alpha$ such that $F_\theta(\alpha) = \frac{1}{2}$. Let $x \in [0,1/2]$, and let $h > 0$ be such that $x = \frac{1}{2} - \tilde{b}h$. Then, substituting the value of $x$ in the previous equation, we get
    \begin{equation} F^{-1}_\theta(x) - F^{-1}_\theta(1-x) \leq -2h = -\frac{2}{\tilde{b}}(\frac{1}{2} - x)\end{equation}
    \begin{equation} G_\theta(x) \geq \frac{2}{\tilde{b}} > 0\end{equation}
    for almost every $x \in [0,1/2]$. Thus, since by definition of $m_{t,\theta}$, $(x - \frac{1}{2})(x - \frac{1}{2}|tt_\theta|) \geq 0$ for $x \in [0, m_{t,\theta}]$, this means that for every $\theta \in (-\pi/2,\pi/2)$,
    \begin{equation}
         \label{eq:proof_l269} \sca{F^{-1}_{\mu^t_{1,\theta}} - F^{-1}_{\mu_{1,\theta}}}{F^{-1}_{\mu_{1,\theta}} - F^{-1}_{\rho_{\theta,1,\theta}}}_{L^2([0,1])} 
        \leq 2|c_\theta|(4|c_\theta|-\frac{2}{\tilde{b}}) \int_0^{m_{t,\theta}} (x - \frac{1}{2})(x - \frac{1}{2}|tt_\theta|) dx
    \end{equation}
    Let's compute the integral on the right-hand side  :
    \begin{align}
        \int_0^{m_{t,\theta}} (x - \frac{1}{2})(x - \frac{1}{2}|tt_\theta|) dx
        &= \int_0^{m_{t,\theta}} x^2 - \frac{1}{2} (1+|tt_\theta|)x + \frac{1}{4} |tt_\theta| dx \\
        &= \frac{m_{t,\theta}^3}{3} - \frac{1}{4} (1 + |tt_\theta|)m_{t,\theta}^2 + \frac{1}{4} |tt_\theta| m_{t,\theta}
    \end{align}
    If $|\theta| \leq \arctan(1/|t|)$, then $m_{t,\theta} = \frac{1}{2} |tt_\theta|$ and 
    \begin{align}
        \int_0^{m_{t,\theta}} (x - \frac{1}{2})(x - \frac{1}{2}|tt_\theta|) dx
        &= \frac{|tt_\theta|^3}{24} - \frac{1}{16} (1 + |tt_\theta|)|tt_\theta|^2 + \frac{1}{8} |tt_\theta|^2 \\
        &= \frac{|tt_\theta|^2}{16} - \frac{|tt_\theta|^3}{48} \\
        &= \frac{1}{16}|tt_\theta|^2(1 - \frac{1}{3}|tt_\theta|)
    \end{align}
    and in fact, since $|tt_\theta| \leq 1$ when $|\theta| \leq \arctan(1/|t|)$, we have
    \begin{equation} \label{eq:proof_285}
        \int_0^{m_{t,\theta}} (x - \frac{1}{2})(x - \frac{1}{2}|tt_\theta|) dx = \frac{1}{16}|tt_\theta|^2(1 - \frac{1}{3}|tt_\theta|) \geq \frac{1}{24} |tt_\theta|^2 > 0
    \end{equation}
    Let $\theta_1 \in (0,\pi/2)$ be such that $4c_{\theta_1} - \frac{2}{\tilde{b}} \leq -\frac{1}{\tilde{b}}$ and let $t$ be small enough so that $\alpha_t := \arctan(1/|t|) > \theta_1$. Then :
    \begin{itemize}
        \item If $|\theta| \in (\alpha_t,\pi/2)$, then we can simply bound \eqref{eq:proof_l269} from above by 0
        \begin{equation} \sca{F^{-1}_{\mu^t_{1,\theta}} - F^{-1}_{\mu_{1,\theta}}}{F^{-1}_{\mu_{1,\theta}} - F^{-1}_{\rho_{\theta,1,\theta}}}_{L^2([0,1])} 
        \leq 0 \end{equation}
         as $4|c_{\theta}| - \frac{2}{\tilde{b}} < 0$ and the integral is positive. Thus
        \begin{equation}
            \label{eq:proof_l294} \int_{[-\pi/2,-\alpha_t] \cup [\alpha_t,\pi/2]}\sca{F^{-1}_{\mu^t_{1,\theta}} - F^{-1}_{\mu_{1,\theta}}}{F^{-1}_{\mu_{1,\theta}} - F^{-1}_{\rho_{\theta,1,\theta}}}_{L^2([0,1])} d\theta
        \leq 0 
        \end{equation}
         \item If $|\theta| \in [0,\theta_1)$ then, combining \eqref{eq:proof_l269} and \eqref{eq:proof_285} we have
         \begin{align}
            \sca{F^{-1}_{\mu^t_{1,\theta}} - F^{-1}_{\mu_{1,\theta}}}{F^{-1}_{\mu_{1,\theta}} - F^{-1}_{\rho_{\theta,1,\theta}}}_{L^2([0,1])} 
            &\leq 2|c_\theta|(4|c_\theta|-\frac{2}{\tilde{b}}) \int_0^{m_{t,\theta}} (x - \frac{1}{2})(x - \frac{1}{2}|tt_\theta|) dx \\
            &\leq 2|c_\theta|(4|c_\theta|-\frac{2}{\tilde{b}}) \frac{1}{16}|tt_\theta|^2(1 - \frac{1}{3}|tt_\theta|) \\
            &\leq \frac{1}{4} (2 + \frac{1}{\tilde{b}}) t^2 t_{\theta_1}^2 (1 + \frac{1}{3} |tt_{\theta_1}|)
        \end{align}
        Therefore, we conclude that there exists some constant $C_0 > 0$ such that 
        \begin{equation}
            \label{eq:proof_l304} \int_{[-\alpha_t,-\theta_1] \cup [\theta_1,\alpha_t]}\sca{F^{-1}_{\mu^t_{1,\theta}} - F^{-1}_{\mu_{1,\theta}}}{F^{-1}_{\mu_{1,\theta}} - F^{-1}_{\rho_{\theta,1,\theta}}}_{L^2([0,1])} d\theta \leq C_0 t^2 + o(t^2) 
        \end{equation}
        \item Finally, if $|\theta| \in [\theta_1, \alpha_t]$ then, again combining \eqref{eq:proof_l269} and \eqref{eq:proof_285}, we have
        \begin{align} 
            \sca{F^{-1}_{\mu^t_{1,\theta}} - F^{-1}_{\mu_{1,\theta}}}{F^{-1}_{\mu_{1,\theta}} - F^{-1}_{\rho_{\theta,1,\theta}}}_{L^2([0,1])} 
            &\leq 2|c_\theta|(4|c_\theta|-\frac{2}{\tilde{b}}) \int_0^{m_{t,\theta}} (x - \frac{1}{2})(x - \frac{1}{2}|tt_\theta|) dx \\
            &\leq 2|c_\theta|(4|c_\theta|-\frac{2}{\tilde{b}}) \frac{1}{16}|tt_\theta|^2(1 - \frac{1}{3}|tt_\theta|) \\
            &\leq -\frac{1}{12\tilde{b}} |c_\theta| |tt_\theta|^2 \\
            &\leq -\frac{1}{12\tilde{b}} t^2 \frac{|s_\theta^2|}{|c_\theta|} \leq -\frac{|s_{\theta_1}|^2}{12\tilde{b}} t^2 \frac{1}{|c_\theta|}
        \end{align}
        However, the integral $\int_{\theta_1}^{\alpha_t} \frac{d\theta}{|c_\theta|}$ diverges to infinity when $t \mapsto 0$. Indeed, using the development 
        \begin{equation} \alpha_t := \arctan(1/|t|) = \frac{\pi}{2} - \arctan(|t|) = \frac{\pi}{2} - |t| + o(t^2) \end{equation}
        we have
        \begin{align} 
           \int_{\theta_1}^{\theta_t} \frac{d\theta}{|c_\theta|} &= \int_{\sin(\theta_1)}^{\sin(\alpha_t)} \frac{du}{1 - u^2} \\
           &= \frac{1}{2} [ \ln(1+u) - \ln(1-u) ]^{\sin(\alpha_t)}_{\sin(\theta_1)} \\
           &= \frac{1}{2} (\ln(1+\sin(\alpha_t)) - \ln(1-\sin(\alpha_t))) + C \\
           &= \frac{1}{2} \left(\ln\left(1+\sin\left(\frac{\pi}{2} - |t| + o(t^2)\right)\right) - \ln\left(1-\sin\left(\frac{\pi}{2} - |t| + o(t^2)\right)\right)\right) + C \\
           &= \frac{1}{2} (\ln(1+\cos(|t| + o(t^2))) - \ln(1-\cos(|t| + o(t^2)))) + C \\
           &= \frac{1}{2} (\ln(1+\cos(|t| + o(t^2))) - \ln(1-\cos(|t| + o(t^2)))) + C \\
           &= \frac{1}{2}\left(\ln\left(2 - \frac{1}{2}t^2 + o(t^2)\right) - \ln\left(\frac{1}{2}t^2 + o(t^2)\right)\right) + C\\
           &= \frac{1}{2}(\ln(2) + o(1) - 2\ln(t) + \ln(2) + o(1)) + C \\
           &= -\ln(t) + C + o(1) \xrightarrow[t \to 0]{} +\infty
        \end{align}
        Therefore, for any $C > 0$, there exists a neighborhood of $t = 0$ in which,
        \begin{equation}
             \label{eq:proof_l328} \int_{[-\alpha_t,-\theta_1]\cup[\theta_1,\alpha_t]} \sca{F^{-1}_{\mu^t_{1,\theta}} - F^{-1}_{\mu_{1,\theta}}}{F^{-1}_{\mu_{1,\theta}} - F^{-1}_{\rho_{\theta,1,\theta}}}_{L^2([0,1])} d\theta 
            \leq -Ct^2 
        \end{equation}
    \end{itemize}
    Thus, we can prove \eqref{eq:proof_l169} by bounding the integral of $\sca{F^{-1}_{\mu^t_{1,\theta}} - F^{-1}_{\mu_{1,\theta}}}{F^{-1}_{\mu_{1,\theta}} - F^{-1}_{\rho_{\theta,1,\theta}}}$ on $(-\pi/2,\pi/2)$ separately on the three regions $(-\pi/2,-\alpha_t]\cup[\alpha_t,\pi/2)$, $[-\theta_1,\theta_1]$ and $[-\alpha_t,-\theta_1]\cup[\theta_1,\alpha_t]$ using \eqref{eq:proof_l304}, \eqref{eq:proof_l294} and \eqref{eq:proof_l328}, taking in \eqref{eq:proof_l328} a constant $C > 0$ big enough to compensate the constant $C_0$ in \eqref{eq:proof_l304}. This concludes the proof. \qed

\section{Stability and numerical approximation} \label{section:numerical_approx}

In this section, we will discuss briefly the regularity properties of (practical) Monte Carlo approximations of the SW objective and what they entail for applying our theoretical understanding of $F$ to practical applications involving $F_L$. 
The discussion will be similar to the one found in \citep{tanguy2024discrete_sw_losses}, although they focus on the discrete setting, where $\rho$ is also a point cloud, whereas we focus on the semi-discrete one.

In practice, the Sliced-Wasserstein distance objective \eqref{eq:FN} discussed in \Cref{section:sw_discrete} is usually computed through a Monte Carlo estimator to approximate the integral. In the semi-discrete setting, this amounts to approximating the function $F(X)$ discussed in \Cref{section:sw_discrete} with the function $ F_L = \frac{1}{2L} \sum_{l=1}^L \W^2_2(\mu_{P_{\theta_l}(X)}, \rho_{\theta_l})$, where $\theta_1,...,\theta_L \in \bS^{d-1}$ are chosen directions. The latter may vary: for example, they may be uniformly sampled on $\bS^{d-1}$ at every step of a stochastic gradient descent (or some other optimization algorithm), or fixed once and for all.
 
In fact, the local behavior of $F_L$ is quite different from that of $F$, and exhibits a cell structure. Indeed, for every $\bm{\sigma} \in \mathfrak{S}_n^L$, let $\cC_{\bm{\sigma}} = \{X \in (\R^d)^N \;|\; \forall l \in \{1,...,L\}, \sigma_{\theta_l,X} \hbox{ is uniquely defined and is } \bm{\sigma}_l \}$. Then, for every $X \in \cC_{\bm{\sigma}}$, we have 
\begin{equation}F_L(X)  = \frac{1}{2L} \sum_{l=1}^L \sum_{i=1}^N \int_{V_{\theta_l,i}} |\sca{X_{\bm{\sigma}_l(i)}}{\theta_l} - x|^2 d\rho_{\theta_l}(x)\end{equation} 
which simplifies to 
\begin{equation}F_L(X) = q_{\bm{\sigma}}(X) + C_0\end{equation} 
with the quadratic function 
\begin{equation}q_{\bm{\sigma}}(X) = \frac{1}{2NL} \sum_{l=1}^L \sum_{i=1}^N |\sca{X_{\bm{\sigma}_l(i)}}{\theta_l} - b_{\theta_l,i}|^2\end{equation} 
and the constant 
\begin{equation}C_0 = \frac{1}{2L} \sum_{l=1}^L \sum_{i=1}^N \int_{V_{\theta_l,i}} |x - b_{\theta_l,i}|^2 d\rho_{\theta_l}(x)\end{equation}
In fact, $\cC_{\bm{\sigma}}$ can also be written as $\cC_{\bm{\sigma}} = \{X \in (\R^d)^N \;|\; \forall \bm{\sigma'} \in \mathfrak{S}_N^L, q_{\bm{\sigma'}}(X) > q_{\bm{\sigma}}(X) \}$, from which we can deduce that $\cC_{\bm{\sigma}}$ is an open polyhedral cone, obtained as the intersection of $L(N!-1)$ half-open planes. Furthermore, $F_L$ is actually the infimum of the $C_0 + q_{\bm{\sigma}}$: 
\begin{equation}F_L(X) = \inf_{\bm{\sigma} \in \mathfrak{S}_N^L} q_{\bm{\sigma}}(X) + C_0\end{equation}

As a consequence of these considerations, inside every cell $\cC_{\bm{\sigma}}$, $F_L$ will be $C^\infty$ as it is equal to a quadratic function, and its gradient and Hessian at $X \in \cC_{\bm{\sigma}}$ are respectively 
\begin{equation} \nabla_{X_i} F_L(X) = \frac{1}{NL} \sum_{l=1}^L (\sca{X_i}{\theta_l} - b_{\theta_l, \bm{\sigma}^{-1}_l(i)}) \theta_l\end{equation} 
and 
\begin{equation}\nabla_{X_i} \nabla_{X_j} F_L (X) = \frac{1}{NL} \delta_{ij} \sum_{l=1}^L \theta_l \theta_l^T \geq 0\end{equation}
Thus, $F_L$ is convex inside every cell $\cC_{\bm{\sigma}}$. In fact, we know by \citep[Theorem 2]{tanguy2024reconstructing} that when $L > d$, for almost every family $\theta_1,...,\theta_L \in \bS^{d-1}$, $\bigcap_{l=1}^L (\R\theta_i)^\perp = \{0\}$ and $\frac{1}{L} \sum_{l=1}^L \theta_l \theta_l^T$ is definite positive, which makes $F_L$ strictly convex inside every cell. In these conditions, any critical point contained in a cell will be a local minimum. \\

This is of significance when optimizing $F_L$. Indeed, even if it were possible to derive theoretical guarantees that high energy critical points of $F$ are unstable, a numerical scheme optimizing $F_L$ could end up converging to a high energy critical point of $F_L$ because of its local convexity. Consequently, on must be chose a number of directions $L$ and of points $N$ large enough to make sure the size of the cells $C_{\bm{\sigma}}$ is small enough to prevent this behavior. 

\begin{figure*}[t]
    \vskip 0.2in
    \begin{center}
        \centerline{\includegraphics[width=\textwidth]{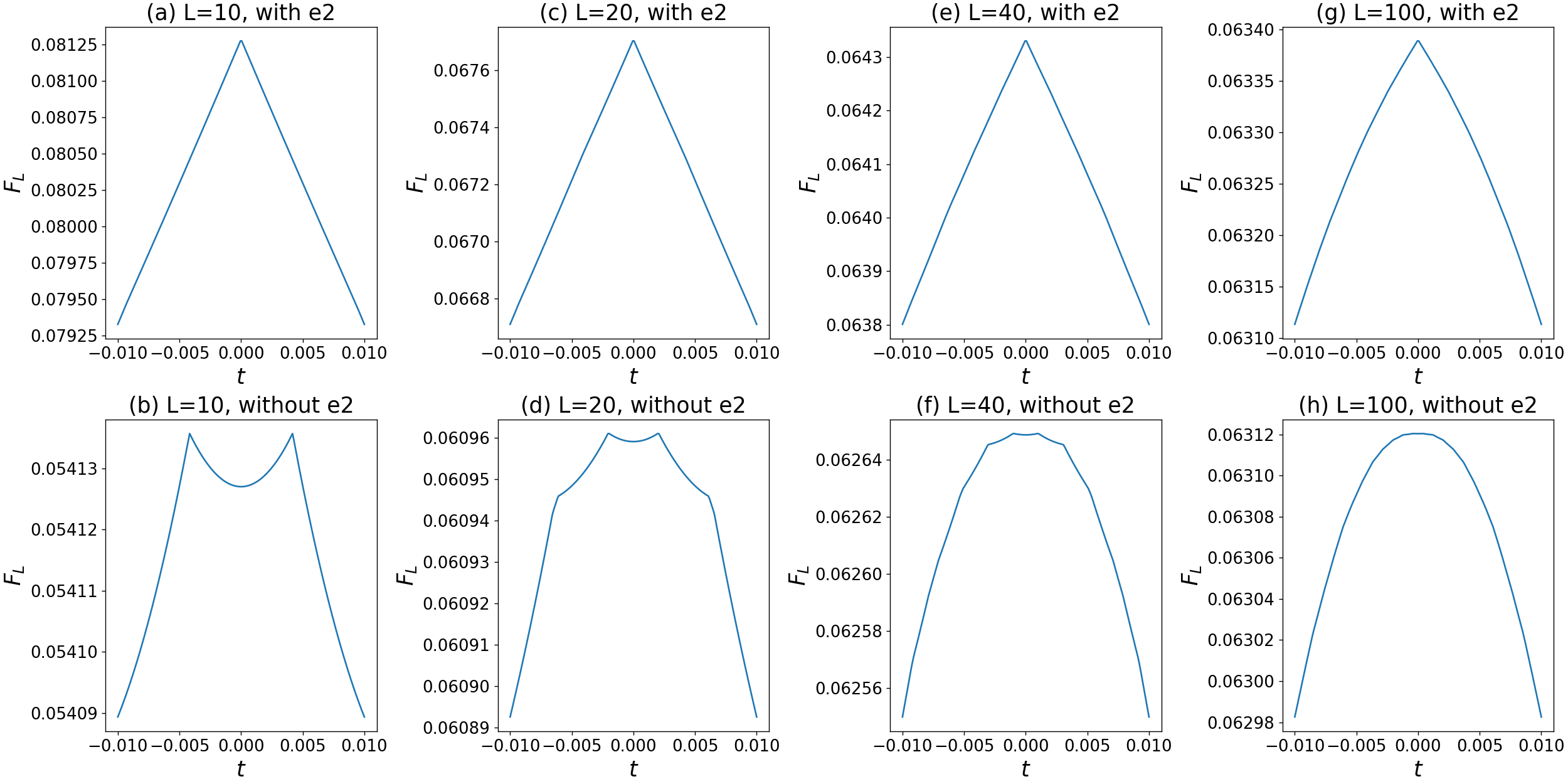}}
        \caption{Behavior of $F_L$ for different sets of test directions. Depicts the value of $F_L(X+t\xi)$, where $X$ is a point cloud of $N = 100$ points uniformly distributed on the segment $[-4/\pi,4/\pi] \times \{0\}$, $\xi$ alternates between $e_2$ and $-e_2$, and $\rho$ is the sliced-uniform distribution. Each column corresponds to a different number $L \in \{10,20,40,100\}$ of fixed test directions ; on the top line $e_2$ is included in the test directions while on the bottom line it is excluded}
    \end{center}
    \label{fig:2}
    \vskip -0.2in
\end{figure*}

\paragraph{Experiments} In another experiment, based on the discussion of Section \ref{section:numerical_approx}, we considered again the point cloud $X = (X_1,...,X_N)$ with $X_i = -\frac{4}{\pi} + \frac{8}{\pi}\frac{i-1}{N-1}$, with $N = 100$, the perturbation $\xi$ that alternates between $e_2$ and $-e_2$, and we plotted the estimator $t \mapsto F_L(X+t\xi)$ in Figure \ref{fig:2}, where $\rho$ is the sliced-uniform measure, for different sets of test directions $\{\theta_1,...,\theta_L\}$. We tested different values of $L$, and, for each of these values, we considered two cases :

\begin{itemize}
    \item one set of test directions $\{\theta_1,...,\theta_L\}$ including $e_2$, with $\theta_i = \frac{\pi}{2} + \frac{2\pi(i-1)}{L}$ for $i \in \{1,...,L\}$
    \item one set of test directions $\{\theta_1,...,\theta_L\}$ excluding $e_2$, with $\theta_i = \frac{\pi}{2} + \frac{\pi}{L} + \frac{2\pi(i-1)}{L}$ for $i \in \{1,...,L\}$ 
\end{itemize}
We observe that, as expected from the discussion in Section \ref{section:numerical_approx}, when the test directions exclude $e_2$ (so that the points of $X$ have distinct projections for every test direction), the estimator $t \mapsto F_L(X+t\xi)$ is locally smooth, and we distinctively see its cell structure for the smaller values of $L$. On the other hand, when the test directions include $e_2$, we see that the estimator is not smooth at $t = 0$. This again conforms to what we theoretically expect, as 

\begin{equation}
    \W^2_2(\mu_{X+t\xi,e_2}, \rho_{e_2}) = \W^2_2(\frac{1}{2} (\delta_{-|t|} + \delta_{|t|}, \frac{1}{2} \cL^1_{|[-1,1]}) = \int_0^1 (|t| - x)^2 dx = \frac{1}{3} - |t| + t^2
\end{equation}

so $F_L(X+t\xi) = f(t) - \frac 1L |t|$ where $f(t)$ is some smooth term.

\end{document}